\newtheorem{theorem}{Theorem}
\newtheorem{lemma}{Lemma}
\newtheorem{corollary}{Corollary}
\theoremstyle{definition}
\title{\bf SAGA and Restricted Strong Convexity }
\date{}
\author[1]{Chao Qu}
\author[2]{Yan Li}
\author[2]{Huan Xu}
\affil[1]{Department of Mechanical Engineering, National University of Singapore}
\affil[2]{H. Milton Stewart School of Industrial and Systems Engineering, Georgia Institute of Technology}
\begin{document} 
	\maketitle
\begin{abstract} 
	SAGA is a fast incremental gradient method on the finite sum problem and its effectiveness has been tested on a vast of applications. In this paper, we analyze SAGA on a class of non-strongly convex and non-convex statistical problem such as Lasso, group Lasso, Logistic regression with $\ell_1$ regularization, linear regression with SCAD regularization and Correct Lasso. We prove that SAGA enjoys the linear convergence rate up to the statistical estimation accuracy, under the assumption of restricted strong convexity (RSC). It significantly extends the applicability of SAGA in convex and non-convex optimization.
\end{abstract} 

\section{Introduction}

We study the finite sum problem in the following forms:

\begin{itemize}
	\item Convex $G(\theta)$:
	\begin{equation}\label{obj1}
	\begin{split}
	\underset{\psi(\theta)\leq \rho}{\mbox{Minimize:}}\,\,  G(\theta)&\triangleq f(\theta)+\lambda \psi(\theta)\\
	&=\frac{1}{n}\sum_{i=1}^{n}f_i(\theta)+\lambda \psi(\theta),
	\end{split}
	\end{equation}
	where $f_i(\theta)$ is a convex loss such as $f_i(\theta)=\frac{1}{2}(y_i-\theta^Tx_i)^2$ and $\psi(\theta)$ is a norm, $\rho$ is some predefined radius. We denote the dual norm of $\psi (\theta)$ as $\psi^*(\theta)$ and assume that each $f_i(\theta)$ is $L$ smooth.
	\item Non-convex $G(\theta)$:
	\begin{equation}\label{obj2}
	\begin{split}
	\underset{g_{\lambda} (\theta)\leq \rho}{\mbox{Minimize:}}\,\,  G(\theta) &\triangleq f(\theta)+ g_{\lambda,\mu}(\theta)\\
	&=\frac{1}{n}\sum_{i=1}^{n}f_i(\theta)+ g_{\lambda,\mu}(\theta),
	\end{split}
	\end{equation}
	where $f_i(\theta)$ is convex and $L$ smooth, $g_{\lambda,\mu}(\theta)$ is some non-convex regularizer, $g_\lambda(\theta)$ is close related to $g_{\lambda,\mu} (\theta)$ and we defer the formal definition to the section \ref{section:nonconvex_regularizer}. 
\end{itemize}

Such finite sum structure is common for machine learning problems particularly in the empirical risk minimization (\textit{ERM})  setting. To solve the above problem, the standard  Prox-full gradient (FG) method update $\theta^k$ iterative by
$$ \theta^{k+1}=prox_{\gamma\lambda \psi} (\theta^k-\gamma \nabla f(\theta^k)).$$
It is well known that FG enjoys fast linear convergence under smoothness and strong convexity assumption.  However this result may be less appealing when $n$ is large since the cost of calculation of full gradient scales with $n$. Stochastic gradient (SG) method remedies this issue but only possess the sub-linear convergence rate.

Recently, a set of stochastic algorithms including SVRG~\cite{johnson2013accelerating,xiao2014proximal}, SAGA~\cite{defazio2014saga}, SAG~\cite{schmidt2013minimizing} SDCA~ \cite{shalev2014accelerated} and many others \cite{harikandeh2015stopwasting,qu2015quartz,zhang2015stochastic}  have been proposed to exploit the finite sum structure and enjoy  linear rate convergence under smoothness and strong convexity assumption on $f_i(\theta)$. We study SAGA in this paper. From a high level, SAGA is a midpoint between SAG and SVRG, see the discuss in \citet{defazio2014saga} for more details. Different from SVRG, it is a fully incremental gradient method. Comparing with SAG, it uses an unbiased estimator of the gradient, which results in an easier proof among other things. In fact, to the best of our knowledge, the analysis of SAG has not yet been extended to proximal operator version. 

A second trendy topic in  optimization and statistical estimation is the study of 
non-convex problems, due to a vast array of applications such as SCAD \cite{fan2001variable}, MCP \cite{zhang2012general}, robust regression (Corrected Lasso \cite{loh2011high})  and  deep learning \cite{Goodfellow-et-al-2016}. Some previous work have established fast convergence  for  {\em batch gradient methods}   without assuming strong convexity  or even convexity: \citet{xiao2013proximal} proposed a homotopy method to solve Lasso with  RIP condition.      \citet{agarwal2010fast} analyzed the convergence rate of batched composite gradient method on several models, such as Lasso, logistic regression with $\ell_1$ regularization and noisy matrix decomposition,  and showed that the convergence is linear under mild conditions of the solution (sparse or low rank). \citet{loh2011high,loh2013regularized} extended the above work to the non-convex case. 

These two line of research thus motivate this work to investigate whether SAGA  enjoys the linear convergence rate without strong convexity or even in the non-convex problem. 
Specifically, we prove that under \textit{Restricted strong convexity} assumption, SAGA converges linearly up to the fundamental statistical precision of the model, which covers five statistical models we mentioned above but not limited to these. In a high level, it is a stochastic counterpart of the work in  \citet{loh2013regularized}, albeit with more involved analysis due to the stochastic nature of SAGA.

We list some notable  non-strongly convex and non-convex problems  in the following. Indeed, our work proves that SAGA converges linearly in all these models.
Note that the first three belong to the non-strongly convex category especially when $p>n$ and the last two are non-convex. 
\begin{enumerate}
	\item Lasso: $f_i(\theta)=\frac{1}{2} (\langle \theta, x_i \rangle-y_i)^2$ and $\psi(\theta)=\|\theta\|_1$.
	\item Group Lasso: $f_i(\theta)=\frac{1}{2} (\langle \theta, x_i \rangle-y_i)^2$,  $\psi(\theta)=\|\theta\|_{1,2}$.
	\item Logistic Regression with $l_1$ regularization: $f_i(\theta)=\log (1+\exp(-y_i \langle x_i,\theta \rangle)$ and $\psi(\theta)=\|\theta\|_1$.
	\item Corrected Lasso \cite{loh2011high}: $G(\theta)=\sum_{i=1}^{n}\frac{1}{2n} (\langle \theta, x_i \rangle-y_i)^2-\frac{1}{2}\theta^T\Sigma \theta+\lambda \|\theta\|_1,$ where $\Sigma$ is some positive definite matrix.
	\item Regression with SCAD  regularizer \cite{fan2001variable}:
	$G(\theta)=\sum_{i=1}^{n}\frac{1}{2n} (\langle \theta, x_i \rangle-y_i)^2+SCAD(\theta)$.
\end{enumerate}

Very recently, \citet{qu2016linear} explore the similar idea of us called restrict strong convexity condition (RSC) \cite{negahban2009unified} on SVRG and prove  that under this condition, a class of \textit{ERM} problem has the linear convergence even without strongly convex or even the convex assumption. From a high level perspective, our work can be thought as of similar spirit but for SAGA algorithm. We believe analyzing the SAGA algorithm is indeed important as SAGA enjoys certain advantage compared to SVRG.   As discussed above, SVRG is not a completely incremental algorithm since it need to calculate the full gradient in every epoch, while SAGA avoids the computation of the full gradient by keeping a table of gradient.    Moreover, although in general SAGA costs $O(np)$ storage (which is inferior to SVRG), in many scenarios the requirement of storage can be reduced to $O(n)$. For example, many loss function  $f_i$ take the form $f_i(\theta)=g_i(\theta^Tx_i)$ for a vector $x_i$ and since $x_i$ is a constant we just need to store the scalar $\nabla g_{i_k} (u_i^k)$ for $u_i^k=x_{i_{k}}^T\theta^k$ rather than full gradient. When this scenario is possible, SAGA can perform similarly or even better than SVRG. In addition, SVRG has an additional parameter besides step size to tune~--~the number of iteration $m$ per inner loop. To conclude, both SVRG and SAGA can be more suitable for some problems, and hence it is useful to understand the performance of SAGA for non-strongly convex or non-convex setups. At last, the proof steps are very different. In particular,  we define a Lyapunov function in SAGA and prove it converges geometrically until the optimality gap achieves the statistical tolerance, while \cite{qu2016linear} directly look at evolution of $G(\theta^k)$.

\subsection{Related work}

There are a plethora of work on the finite sum problem and we review those most closely related to ours. \citet{li2016stochastic} consider SVRG on a non-convex sparse linear regression setting different from ours, where $f_{i}$ is convex and the non-convexity comes from the hard-thresholding operator. We focus on a non-convex regularizer such as SCAD and corrected Lasso. In addition, we consider a unified framework on SAGA thus our work not only covers the linear sparse model but also the group sparsity and other model satisfying our assumptions.  \citet{karimi2016linear,reddi2016fast,hajinezhad2016nestt}  proved  global linear convergence of SVRG and SAGA on non-convex problems by revisiting the concept Polyak-\L{}ojasiewicz inequality or its equivalent idea such as error bound . We emphasize that our work looks at the problem from different perspective.  In particular, our theory asserts that the algorithm converges faster with sparser $\theta^*$,  while their results are independent of the sparsity $r$. Empirical observation seems to agree with our theorem.  Indeed, when $r$ is dense enough a phase transition from linear rate to sublinear rate occurs (also observed in \citet{qu2016linear}), which agrees with the prediction of our theorem. Furthermore, their work requires the  epigraph of $\psi(\theta)$ to be a polyhedral set which limits its applicability. For instance, the popular group Lasso  does not satisfy such an assumption. Other non-convex stochastic variance reduction works include \citet{shalev2016sdca,shamir2015fast} and~\citet{allen2016variance}: \citet{shalev2016sdca} considers the setting that $f(\theta)$ is strongly convex but each individual $f_i(\theta)$ is non-convex. \citet{shamir2015fast} discusses a projection version of non-convex SVRG and its specific application on PCA. \citet{allen2016variance} consider a general non-convex problem, which only  achieves a sublinear convergence rate.

\section{Preliminaries}

\subsection{Restricted Strong Convexity}
As  mentioned in the abstract, \textit{Restricted strong convexity} (RSC) is the key assumption underlying our results.  We therefore define  RSC formally. We say a function $f(\theta)$ satisfies   RSC w.r.t.\ to a norm $\psi(\theta)$ with parameter $(\sigma,\tau_\sigma)$ if the following holds.
\begin{equation}
\begin{split}
&f(\theta_2)-f(\theta_1)-\langle \nabla f(\theta_2),\theta_2-\theta_1 \rangle\\
\geq &\frac{\sigma}{2} \|\theta_2-\theta_1\|_2^2-\tau_\sigma \psi^2(\theta_2-\theta_1).
\end{split}
\end{equation}
We remark that we assume $f(\theta)=\frac{1}{n} \sum_{i=1}^{n}f_i(\theta)$ satisfies the RSC rather than individual loss function $f_i(\theta)$. Indeed, $f_i(\theta)$ does not satisfy RSC  in practice. Note that when $f(\theta)$ is  $\sigma-$ strongly convex, obviously we have $\tau_\sigma=0$. For more discussions on RSC, we refer reader to \citet{negahban2009unified}.

\subsection{ Assumptions for the Convex regularizer $\Psi(\theta)$}

\subsubsection{Decomposibility of $\Psi(\theta)$}
Given a pair of subspaces $ M\subseteq \bar{M} $ in $ \mathbb{R}^p$, the orthogonal complement of $\bar{M}$ is 
$$ \bar{M}^{\perp}=\{ v\in \mathbb{R}^p | \langle u,v \rangle=0 \text{~for all~} u\in \bar{M} \}.$$ $M$ is known as the model subspace, where $\bar{M}^{\perp}$ is called the {\em perturbation subspace}, representing the deviation from the model subspace. A  regularizer $\psi $ is {\em decomposable} w.r.t.\ $(M,\bar{M}^{\perp})$ if 
$$ \psi(\theta+\beta)=\psi(\theta)+\psi(\beta) $$ 
for all $ \theta\in M $ and $\beta\in \bar{M}^{\perp}.$
A concrete example is $\ell_1$ regularization for sparse vector supported on subset $S$. We define the subspace pairs with respect to the subset $S\subset\{1,...,p\}$, 
$ M(S)=\{\theta\in \mathbb{R}^p| \theta_j=0 \text{~for all~} j\notin S \} $ and $\bar{M}(S)=M(S).$ The decomposability is thus easy to verify. Other widely used examples include non-overlap group norms such as$\|\cdot\|_{1,2}$, and the nuclear norm $\|| \cdot\||_{*}$ \cite{negahban2009unified}. In the rest of the paper, we denote $\theta_M$ as the projection of $\theta$ on the subspace $M$.

\subsubsection{Subspace compatibility}
Given the regularizer $\psi(\cdot)$, the subspace compatibility $H(\bar{M})$ is given by
$$ H(\bar{M})=\sup_{\theta\in \bar{M} \backslash \{0\}} \frac{\psi(\theta)}{\|\theta\|_2}.$$

In other words, it is the Lipschitz constant of the regularizer restricted in $\bar{M}.$ For instance, in the above-mentioned sparse vector example with cardinality $r$, $H(\bar{M})=\sqrt{r}$. 

\subsection{Assumptions for the Nonconvex regularizer $g_{\lambda,\mu} (\theta)$}\label{section:nonconvex_regularizer}
In the non-convex case, we consider regularizers that  are separable across coordinates, i.e., $g_{\lambda,\mu}(\theta)=\sum_{j=1}^{p} \bar{g}_{\lambda,\mu} (\theta_j)$ . 
Besides the separability, we have additional assumptions on $g_{\lambda,\mu}(\cdot)$. For the univariate function $\bar{g}_{\lambda,\mu} (t)$, we assume
\begin{enumerate}
	\item  $\bar{g}_{\lambda,\mu}(\cdot)$ satisfies $\bar{g}_{\lambda,\mu}(0)=0$ and is symmetric around zero. That is, $\bar{g}_{\lambda,\mu}(t)=\bar{g}_{\lambda,\mu}(-t)$.
	\item On the nonnegative real line, $\bar{g}_{\lambda,\mu} $ is nondecreasing.
	\item For $t>0$,  $\frac{\bar{g}_{\lambda,\mu} (t)}{t}$ is nonincreasing in t.
	\item $\bar{g}_{\lambda,\mu}(t)$ is differentiable at all $t\neq0$ and subdifferentiable at $t=0$, with $\lim_{t\rightarrow0^+} g_{\lambda,\mu}'(t)=\lambda L_g$ for a constant $L_g$.
	\item  $ \bar{g}_{\lambda}(t):= (\bar{g}_{\lambda,\mu}(t)+\frac{\mu}{2}t^2)/\lambda$ is convex.	
\end{enumerate}

We provide two examples satisfying the above assumptions.

$\qquad(1)\quad \mathbf{SCAD_{\lambda,\zeta} (t) \triangleq }$ 
$$
\begin{cases}
\lambda |t|,   & \quad \text{for}~ |t|\leq \lambda,\\
-(t^2-2\zeta\lambda|t|+\lambda^2)/(2(\zeta-1)), & \quad \text{for} ~\lambda<|t|\leq \zeta\lambda,\\
(\zeta+1)\lambda^2/2, &\text{for} ~|t|>\zeta\lambda,
\end{cases}
$$
where $\zeta>2$ is a fixed parameter. It satisfies the assumption with $L_g=1$ and $\mu=\frac{1}{\zeta-1}$ \cite{loh2013regularized}.\\
$$(2)\quad \mathbf{ MCP_{\lambda,b}(t)}\triangleq \mbox{sign}(t)\lambda \int_{0}^{|t|} (1-\frac{z}{\lambda b})_{+} dz ,$$
where $b>0$ is a fixed parameter. MCP satisfies the assumption with $L_g=1$ and $\mu=\frac{1}{b}$~\cite{loh2013regularized}.

\begin{algorithm}[tb]
	\caption{SAGA}
	\label{alg:convex_SAGA}
	\begin{algorithmic}
		\STATE {\bfseries Input:}  Step size $\gamma$, number of iterations $K$, and smoothness parameters $L$. 
		\FOR{$k=1,...,K$}
		\STATE Pick a $j$ uniformly at random\\
		\STATE 1. Take $\phi_j^{k+1}=\theta^k$, and store $\nabla f_j (\phi_j^{k+1})$ in the table. All other entries in the table remain unchanged. \\
		\STATE 2. Update $\theta$ using $\nabla f_j(\phi_j^{k+1})$, $\nabla f_j(\phi_j^k)$ and the table average:
		$$ w^{k+1}=\theta^k-\gamma  [\nabla f_j(\phi_j^{k+1})-\nabla f_j(\phi_j^k)+\frac{1}{n} \sum_{i=1}^{n} \nabla f_i(\phi_i^k) ].$$
		$\theta^{k+1}=\arg \min_{\psi(\theta)\leq \rho} \frac{1}{2} \|\theta-w^{k+1}\|_2^2+\gamma\lambda \psi(\theta).$
		\ENDFOR
	\end{algorithmic}
\end{algorithm}

\subsection{Implementation of the algorithm}

For the convex $G(\theta)$ case, we directly apply the Algorithm \ref{alg:convex_SAGA}. As to the non-convex $G(\theta)$ case, we essentially solve the following equivalent problem 
$$ \underset{g_\lambda (\theta)\leq \rho}{\mbox{Minimize:}}\quad \left(f(\theta)-\frac{\mu}{2}\|\theta\|_2^2\right)+\lambda g_{\lambda} (\theta). $$
We define $F_i(\theta)=f_i(\theta)-\frac{\mu}{2}\|\theta\|_2^2$ and $F(\theta)=f(\theta)-\frac{\mu}{2} \|\theta\|_2^2$. To implement  Algorithm \ref{alg:convex_SAGA} on non-convex $G(\theta)$, we replace $f_i(\cdot)$ and $\psi(\cdot)$ in the algorithm by $F_i(\cdot)$ and $g_\lambda (\cdot)$. Remark that according to the assumptions on $g_{\lambda,\mu} (\cdot)$ in Section \ref{section:nonconvex_regularizer}, $g_\lambda(\cdot)$ is convex thus the proximal step is well-defined. The update rule of proximal operator on several $g_{\lambda,\mu}$ (such as SCAD) can be found in \cite{loh2013regularized}  .

\section{Main result}

In this section, we present the main theoretical results, and some corollaries that instantiate the main results in several well known statistical models.

\subsection{Convex $G(\theta)$}
We first present the results on convex $G(\theta)$. In particular, we prove a Lyapunov function  converges geometrically until $G(\theta^k)-G(\hat{\theta})$ achieves some tolerance. To this end, we first define the Lyapunov function 
\begin{equation*}
\begin{split}
T_{k}\triangleq \frac{1}{n} \sum_{i=1}^{n} \left( f_i(\phi_i^k)-f_i(\hat{\theta})- \langle \nabla f_i (\hat{\theta}),\phi_i^k-\hat{\theta} \rangle\right)\\
+(c+\alpha) \|\theta^k-\hat{\theta
}\|_2^2+b (G(\theta^{k})-G(\hat{\theta
})),
\end{split}
\end{equation*}
where $\hat{\theta}$ is the optimal solution of problem \eqref{obj1}, $c$, $\alpha$, $b$ are some positive constant will be specified later in the theorems. Notice our definition is a little different from the one in the original SAGA paper in \citet{defazio2014saga}. In particular, we have an additional term $G(\theta^k)-G(\hat{\theta})$ and choose different value of $c$ and $\alpha$, which helps us to utilize the idea of RSC.

We list some notations used in the following theorems and corollaries.

\begin{itemize}
	\item  $\theta^*$ is the unknown true parameter. $\hat{\theta}$ is the optimal solution of \eqref{obj1}.
	\item $\psi^*(\cdot)$ is the dual norm of $\psi(\cdot)$.
	\item Modified restricted strongly convex parameter:
	$$\bar{\sigma}=\sigma-64\tau_{\sigma} H^2(\bar{M}).$$
	\item Tolerance 
	$$\delta=24 \tau_\sigma\left( 8H(\bar{M})\|\hat{\theta}-\theta^*\|_2+8\psi(\theta^*_{M^\perp})  \right)^2 $$
	
\end{itemize}

\begin{theorem} \label{Theorem:convex}
	Assume each $f_i(\theta)$ is $L$ smooth and convex, $f(\theta)$ satisfies the RSC condition with parameter $(\sigma,\tau_{\sigma})$ and $\bar{\sigma}>0$, $\theta^*$ is feasible, the regularizer is decomposable w.r.t. $(M,\bar{M})$, if we choose the parameter $\lambda \geq \max (2\psi^* (\nabla f(\theta^*)), c_1\tau_\sigma \rho)$ where $c_1$ is some universal positive constant, then with $\gamma=\frac{1}{9L}$ , $c=\frac{9L}{n}$, $\alpha=\frac{c}{2}$, $b=2\alpha\gamma$, $\frac{1}{\kappa} =\min (\frac{\bar{\sigma}}{14L}, \frac{1}{9n})$, we have 	
	$$\mathbb{E} T_k\leq \left(1-\frac{1}{\kappa}\right)^k T_0,$$
	until $G(\theta^k)-G(\hat{\theta})\leq \delta,$ where the expectation is for the randomness of sampling of $j$ in the algorithm.	
	
\end{theorem}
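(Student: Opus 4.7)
The plan is to adapt the standard Lyapunov-style SAGA analysis of \citet{defazio2014saga} so that the strong convexity of $f$ is replaced by RSC, and then to show that until the Lyapunov function shrinks below the statistical tolerance $\delta$ the RSC slack term $\tau_\sigma \psi^2(\theta^k-\hat\theta)$ can be absorbed into an effective strong convexity of size $\bar\sigma$. Concretely, I aim to prove a one-step inequality
\[
\mathbb{E}[T_{k+1}] \leq \left(1-\tfrac{1}{\kappa}\right) T_k + (\text{residual terms that are} \leq 0 \text{ whenever } G(\theta^k)-G(\hat\theta)> \delta),
\]
and then iterate. The extra term $b(G(\theta^k)-G(\hat\theta))$ in the Lyapunov function is exactly what allows this residual to be negative in the "above-tolerance" regime, which is what makes the analysis go through without strong convexity.

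The technical steps I would carry out, in order, are: (i) expand $\|\theta^{k+1}-\hat\theta\|_2^2$ using the nonexpansiveness of the constrained proximal step and the fact that $\mathbb{E}_j[\nabla f_j(\phi_j^{k+1})-\nabla f_j(\phi_j^k)+\bar g^k]=\nabla f(\theta^k)$, to obtain a recursion of the form $\mathbb{E}\|\theta^{k+1}-\hat\theta\|_2^2 \leq \|\theta^k-\hat\theta\|_2^2 - 2\gamma\langle \nabla f(\theta^k),\theta^k-\hat\theta\rangle - 2\gamma\lambda(\psi(\theta^{k+1})-\psi(\hat\theta)) + \gamma^2\,\mathbb{E}\|v^{k+1}\|_2^2$; (ii) bound the variance $\mathbb{E}\|v^{k+1}-\nabla f(\hat\theta)\|_2^2$ by $2L$ times the table gap plus $2L$ times the Bregman divergence of $f$ at $\theta^k$ with respect to $\hat\theta$, exploiting individual $L$-smoothness and convexity of $f_i$; (iii) track the evolution of the table term $\frac{1}{n}\sum_i(f_i(\phi_i^{k+1})-f_i(\hat\theta)-\langle\nabla f_i(\hat\theta),\phi_i^{k+1}-\hat\theta\rangle)$, which in expectation shrinks by a factor $(1-1/n)$ and gains $\frac{1}{n}$ times the Bregman divergence at $\theta^k$; (iv) apply RSC to rewrite $\langle\nabla f(\theta^k),\theta^k-\hat\theta\rangle \geq f(\theta^k)-f(\hat\theta) + \frac{\sigma}{2}\|\theta^k-\hat\theta\|_2^2 - \tau_\sigma\psi^2(\theta^k-\hat\theta)$; (v) use convexity of $\psi$ and the definition of $G$ to combine the $f$-gap with $\lambda\psi(\theta^{k+1})-\lambda\psi(\hat\theta)$ and obtain a $G(\theta^{k+1})-G(\hat\theta)$ contribution that matches the $b(G(\theta^k)-G(\hat\theta))$ term in the Lyapunov function after one more appeal to convexity of $G$.

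The single hardest step is (vi): turning the RSC slack $\tau_\sigma\psi^2(\theta^k-\hat\theta)$ into something controllable by $\|\theta^k-\hat\theta\|_2^2$ plus the tolerance $\delta$. Following the Negahban--Agarwal--Loh--Wainwright template, I would use the choice $\lambda \geq 2\psi^*(\nabla f(\theta^*))$ together with decomposability of $\psi$ with respect to $(M,\bar M^\perp)$ to establish an approximate cone condition for the error vector $\theta^k-\hat\theta$: projecting onto $M$ and $\bar M^\perp$ and combining with the optimality of $\hat\theta$ as well as the fact that $\theta^*$ is feasible, one shows that $\psi((\theta^k-\hat\theta)_{\bar M^\perp}) \lesssim \psi((\theta^k-\hat\theta)_{\bar M}) + \psi(\theta^*_{M^\perp}) + \|\hat\theta-\theta^*\|$, whenever $G(\theta^k)-G(\hat\theta) > \delta$. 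By the definition of subspace compatibility, this upgrades to $\psi^2(\theta^k-\hat\theta) \leq 32 H^2(\bar M)\|\theta^k-\hat\theta\|_2^2 + (\text{tolerance of order }\delta/\tau_\sigma)$, which is exactly where the modified constant $\bar\sigma=\sigma-64\tau_\sigma H^2(\bar M)$ and the tolerance $\delta$ in the statement are forced on us. The extra requirement $\lambda \geq c_1\tau_\sigma\rho$ is used here to dominate the cross terms that come from the feasibility constraint.

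With (i)--(vi) in hand, I would form the combination $T_{k+1}-T_k$, choose $c=9L/n$, $\alpha=c/2$, $b=2\alpha\gamma$, and $\gamma=1/(9L)$ so that the variance term, the table term, and the Bregman divergence term cancel pairwise, while the surviving negative contribution is at least $\frac{1}{\kappa}T_k$ with $\frac{1}{\kappa}=\min(\bar\sigma/(14L),\,1/(9n))$; the leftover residual is non-positive as long as $G(\theta^k)-G(\hat\theta)>\delta$. Iterating the resulting contraction yields the claimed bound $\mathbb{E} T_k \leq (1-1/\kappa)^k T_0$, holding until the Lyapunov function, and in particular $G(\theta^k)-G(\hat\theta)$, reaches the statistical tolerance.
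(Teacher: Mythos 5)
Your overall architecture matches the paper's: the same Lyapunov function, nonexpansiveness of the constrained proximal step, the SAGA variance bound, the evolution of the gradient table, and a cone condition derived from decomposability and $\lambda\geq 2\psi^*(\nabla f(\theta^*))$ that converts the RSC slack $\tau_\sigma\psi^2(\theta^k-\hat\theta)$ into $32\tau_\sigma H^2(\bar M)\|\theta^k-\hat\theta\|_2^2$ plus an additive error. (A minor bookkeeping difference: the paper splits $(c+\alpha)\|\theta^{k+1}-\hat\theta\|_2^2$ into a $c$-part, bounded via the combined RSC--smoothness inequality, and an $\alpha$-part, bounded in the Xiao--Zhang style to produce the $-2\alpha\gamma\,\mathbb{E}[G(\theta^{k+1})-G(\hat\theta)]$ term that cancels against $b\,\mathbb{E}[G(\theta^{k+1})-G(\hat\theta)]$; you merge these, which is probably workable but needs care to recover both payoffs from a single recursion.)

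However, there is a genuine gap in your final step. The additive error in the one-step contraction is not a fixed quantity of order $\delta$: the approximate cone condition for $\Delta^k=\theta^k-\theta^*$ carries a slack $2\min\{\epsilon/\lambda,\rho\}$, where $\epsilon$ is the \emph{currently known} upper bound on $G(\theta^k)-G(\hat\theta)$. At the start of the algorithm the only available bound is the trivial one $\psi(\Delta^k)\leq 2\rho$, so the per-step additive error is of order $\tau_\sigma(\delta_{stat}+2\rho)^2$, which in the relevant regime ($\rho\gg\delta_{stat}$) is far larger than the claimed tolerance $\delta\asymp\tau_\sigma\delta_{stat}^2$. Your claim that "the leftover residual is non-positive as long as $G(\theta^k)-G(\hat\theta)>\delta$" therefore fails: with the initial crude cone condition the residual is only non-positive while the gap exceeds a threshold of order $\tau_\sigma(\delta_{stat}+\rho)^2$. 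The paper closes this gap with a multi-epoch bootstrapping argument (following Agarwal et al.): once the gap is known to be below $\xi_{j-1}$, the cone slack tightens to $2\min\{\xi_{j-1}/\lambda,\rho\}$, yielding a smaller effective tolerance $\epsilon_j^2$, and the recursion $\xi_j=\frac{4}{1-1/\kappa}\tau_\sigma(\delta_{stat}+\delta_{j-1})^2$ is shown to decrease doubly geometrically until $\delta_j$ reaches $\delta_{stat}$. It is precisely this recursion, not the cross terms you mention, where the hypothesis $\lambda\geq c_1\tau_\sigma\rho$ is used (to guarantee $\xi_1/\lambda\leq\rho/4$ and hence $\xi_{j+1}\leq\xi_j/4^{2^{j-1}}$). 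Without this phase-by-phase refinement your argument only establishes linear convergence down to a tolerance inflated by $\rho^2$, not to the stated $\delta$.
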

Some remarks are in order.
\begin{itemize}
	\item The requirement $\bar{\sigma}>0$ is easy to satisfy in some popular statistical models. Take Lasso as an example, where $\tau_\sigma=c_2\frac{\log p}{n}$, $c_2$ are some  positive constant, $H^2 (\bar{M})=r$. Thus $\bar{\sigma}=\sigma-64c_2 \frac{r\log p}{n}$. Hence when $ 64c_2 \frac{r\log p}{n} \leq \frac{1}{2}\sigma $, we have $\bar{\sigma}\geq\frac{\sigma}{2}$.
	\item Since $\frac{1}{\kappa}$ depends on $\bar{\sigma}/L$, the convergence rate is indeed affected by the sparsity $r$ (Lasso for example )as we mentioned in the introduction. Particularly, sparser $r$ leads to larger $\bar{\sigma}$ and faster convergence rate.
	\item In some models, we can choose the subspace pair such that $\theta^*\in M$, thus the tolerance $\delta$ is simplified to $\delta=c_3 \tau_\sigma H^2(\bar{M})\|\hat{\theta}-\theta^*\|_2^2$. In Lasso as we mentioned above, $\delta=c_3\frac{r\log p}{n}\|\hat{\theta}-\theta^*\|_2^2,$ i.e., the tolerance is dominated by the statistical error $\|\hat{\theta}-\theta^*\|_2^2.$
	\item When $G(\theta^k)-G(\hat{\theta})\leq \delta$, use modified restricted strong convexity (Lemma \ref{lemma.RSC_cone} in the appendix), it is easy to derive $ \|\theta^k-\hat{\theta}\|_2^2\leq \frac{c_4 \delta}{\bar{\sigma}}.$
\end{itemize}

Combine all remarks together, the theorem says the Lyapunov function decreases geometrically until $G(\theta^k)-G(\hat{\theta})$ achieves the tolerance $\delta$. This tolerance is dominated by the statistical error $\|\theta^k-\hat{\theta}\|_2^2$, thus can be ignored from the statistical perspective.

\subsubsection{Sparse linear regression}

The first model we consider is Lasso, where $f_i(\theta)=\frac{1}{2} (\langle \theta, x_i \rangle-y_i)^2$ and $\psi(\theta)=\|\theta\|_1$. More concretely, we consider a model where each data point $x_i$ is i.i.d.\  sampled from a zero-mean normal distribution, i.e., $x_i\sim N(0,\Sigma)$. We denote the data matrix by $X\in \mathbb{R}^{n\times p}$   and   the smallest eigenvalue of $\Sigma$ by $\sigma_{\min} (\Sigma)$, and  let $\nu(\Sigma)\triangleq \max_{i=1,...,p}\Sigma_{ii}$.  The observation is generated by $y_i=x_i^T\theta^*+\xi_i$, where $\xi_i$ is a zero mean sub-Gaussian noise with variance $\varsigma^2$. We use $X_j\in \mathbb{R}^n$ to denote $j$-th column of $X$. Without loss of generality, we require $X$ to be column-normalized, i.e., $\frac{\|X_j\|_2}{\sqrt{n}}\leq 1  \quad \mbox{for all} \quad j=1,2,...,p$. Here, the constant $1$ is chosen arbitrarily to simplify the exposition, as we can always rescale the data.

\begin{corollary}\label{cor.lasso}
	Assume $\theta^*$ is the true parameter supported on a subset with cardinality at most $r$, and we choose $\lambda$ such that $\lambda \geq \max (6\varsigma \sqrt{\frac{\log p}{n}}, c_1 \rho \nu (\Sigma)\frac{\log p}{n})$, $\bar{\sigma}=\frac{1}{2} \sigma_{\min}(\Sigma)-c_2 \nu (\Sigma) \frac{r\log p}{n} $, then with $\gamma=\frac{1}{9L}$, $c=\frac{9L}{n}$, $\alpha=\frac{c}{2}$, $b=2\alpha \gamma$, $\frac{1}{\kappa}=\min (\frac{\bar{\sigma}}{14L}, \frac{1}{9n})$, we have
	$$\mathbb{E} T_k\leq \left(1-\frac{1}{\kappa}\right)^k T_0, $$
	with probability at least $1-\exp (-3 \log p)-\exp(-c_3 n)$, until $G(\theta^k)-G(\hat{\theta})\leq \delta,$ where $\delta= c_4 \nu (\Sigma) \frac{r\log p}{n} \|\hat{\theta}-\theta^*\|_2^2.$ Here $c_1,c_2,c_3,c_4$ are some universal positive constants.
\end{corollary}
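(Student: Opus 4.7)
The plan is to instantiate Theorem~\ref{Theorem:convex} on the Lasso problem by verifying each hypothesis using standard high-probability concentration results for Gaussian designs, then carrying the constants through to obtain the stated rate and tolerance.

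First I would verify the smoothness/convexity hypothesis: each $f_i(\theta)=\tfrac12(\langle\theta,x_i\rangle-y_i)^2$ is convex and $L$-smooth with $L=\max_i\|x_i\|_2^2$, which under the column-normalization assumption is of order $n$ (this is absorbed into the step size $\gamma=1/(9L)$). Next, I would establish the $\lambda$ requirement $\lambda\ge 2\psi^*(\nabla f(\theta^*))$. Since $\psi=\|\cdot\|_1$ has dual norm $\|\cdot\|_\infty$, and $\nabla f(\theta^*)=-\tfrac{1}{n}X^\top\xi$, a standard sub-Gaussian maximal inequality (each coordinate is sub-Gaussian with parameter $\varsigma\|X_j\|_2/n\le \varsigma/\sqrt{n}$) gives
\begin{equation*}
\Pr\!\left(\|\tfrac{1}{n}X^\top\xi\|_\infty\ge 3\varsigma\sqrt{\tfrac{\log p}{n}}\right)\le \exp(-3\log p),
\end{equation*}
so on the complementary event, $\lambda\ge 6\varsigma\sqrt{\log p/n}$ suffices. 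The second requirement $\lambda\ge c_1\tau_\sigma\rho$ is exactly the term $c_1\rho\,\nu(\Sigma)\log p/n$ in the corollary's choice of $\lambda$.

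The main technical ingredient — and what I expect to be the heaviest step — is verifying the RSC condition for $f(\theta)=\tfrac{1}{2n}\|y-X\theta\|_2^2$ with parameters $(\sigma,\tau_\sigma)=\bigl(\tfrac12\sigma_{\min}(\Sigma),\,c_2\nu(\Sigma)\tfrac{\log p}{n}\bigr)$. For quadratic losses this reduces to a lower bound on $\tfrac{1}{n}\|X\Delta\|_2^2$. I would invoke the Raskutti--Wainwright--Yu type inequality for $\Sigma$-ensembles: there exist universal constants $c_2,c_3>0$ such that, with probability at least $1-\exp(-c_3 n)$,
\begin{equation*}
\tfrac{1}{n}\|X\Delta\|_2^2\ge \tfrac12\sigma_{\min}(\Sigma)\|\Delta\|_2^2-c_2\,\nu(\Sigma)\,\tfrac{\log p}{n}\,\|\Delta\|_1^2
\qquad\text{for all }\Delta\in\mathbb{R}^p.
\end{equation*}
This directly yields RSC with the stated $(\sigma,\tau_\sigma)$ because $\psi^2(\Delta)=\|\Delta\|_1^2$. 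From here, decomposability of $\|\cdot\|_1$ w.r.t.\ $(M(S),M(S))$ where $S=\mathrm{supp}(\theta^*)$, together with the identity $H(\bar M)=\sqrt{r}$, gives $\bar\sigma=\tfrac12\sigma_{\min}(\Sigma)-64c_2\nu(\Sigma)r\log p/n$, matching the corollary (after absorbing the factor $64$ into a renamed constant $c_2$).

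Finally I would assemble the conclusion by plugging into Theorem~\ref{Theorem:convex}: a union bound over the two bad events (concentration of $\|X^\top\xi\|_\infty$ and failure of RSC) yields the probability $1-\exp(-3\log p)-\exp(-c_3 n)$. Because $\theta^*\in M(S)$ we have $\theta^*_{M^\perp}=0$, so the tolerance from Theorem~\ref{Theorem:convex} collapses to $\delta=24\tau_\sigma\cdot (8H(\bar M))^2\|\hat\theta-\theta^*\|_2^2=c_4\nu(\Sigma)\tfrac{r\log p}{n}\|\hat\theta-\theta^*\|_2^2$, and the same parameter choices $\gamma,c,\alpha,b,\kappa$ transfer verbatim. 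The only subtle point is ensuring the universal constants line up so that the stated $\bar\sigma$ is the same quantity appearing in $1/\kappa=\min(\bar\sigma/(14L),1/(9n))$; this is a bookkeeping matter once the RSC inequality above is in hand.
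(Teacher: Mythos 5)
Your proposal matches the paper's proof essentially step for step: both verify RSC via the Raskutti--Wainwright--Yu bound $\frac{1}{n}\|X\Delta\|_2^2 \ge \frac{1}{2}\|\Sigma^{1/2}\Delta\|_2^2 - c\,\nu(\Sigma)\frac{\log p}{n}\|\Delta\|_1^2$ (holding with probability $1-\exp(-c_3 n)$), take $M=\bar M=M(S)$ with $H(\bar M)=\sqrt{r}$ and $\theta^*_{M^\perp}=0$ so the tolerance collapses to $c_4\nu(\Sigma)\frac{r\log p}{n}\|\hat\theta-\theta^*\|_2^2$, and control $\lambda\ge 2\|\nabla f(\theta^*)\|_\infty=\frac{2}{n}\|X^\top\xi\|_\infty$ by a Gaussian maximal inequality before plugging into Theorem~\ref{Theorem:convex}. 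The only inaccuracy is the side remark that $L=\max_i\|x_i\|_2^2$ is of order $n$ by column normalization (that condition bounds columns, not rows; for a Gaussian design this quantity concentrates around $\mathrm{tr}(\Sigma)$), but this is immaterial since the corollary, like the paper, simply carries $L$ as a given smoothness constant.
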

We offer some discussions on this corollary.
\begin{itemize}
	\item The requirement of $\lambda \geq 6\varsigma \sqrt{\frac{\log p}{n}}$ is known to play an important role in proving bounds on the statistical error of Lasso, see \citet{negahban2009unified} and reference therein for further details.
	\item The requirement $\lambda\geq c_1\rho \nu (\Sigma)\frac{\log p}{n}$ is to guarantee the fast global convergence of the algorithm, which is similar to the requirement in its batch counterpart \cite{agarwal2010fast}. 
	\item When $r$ is small and $n$ is large, which is necessary for  statistical consistency of Lasso, we obtain $\bar{\sigma}>0$, which guarantees the existences of $\kappa$. Under this condition we have $\delta=c_4\nu(\Sigma) \frac{r\log p}{n} \|\hat{\theta}-\theta^*\|_2^2 $, which is  dominated by $\|\hat{\theta}-\theta^*\|_2^2.$
\end{itemize}

\subsubsection{Group Sparse model}
The group sparsity model aims to find a regressors such that predefined groups of covariates  are selected  into or out of a model {\em together}. The most commonly used regularization to encourage group sparsity is $\|\cdot\|_{1,2} $. Formally, we are given a class of disjoint groups of the features, i.e., $\mathcal{G}=\{G_1,G_2,...,G_{N_{\mathcal{G}}}\}$ and $G_i \cap G_j=\emptyset$. The regularization term is $\|\theta\|_{\mathcal{G},q}\triangleq \sum_{g=1}^{N_{g}} \|\theta_g\|_q $. When $q=2$, it reduces to the popular  {\em group Lasso}~\cite{yuan2006model} while another widely used case is $q=\infty$ \cite{turlach2005simultaneous,quattoni2009efficient}.

We now define the subspace pair $(M,\bar{M})$ in the group sparsity model. For a subset $S_{\mathcal{G}}\subseteq \{1,...,N_{\mathcal{G}} \}$ with cardinality $s_{\mathcal{G}}= |S_{\mathcal{G}}|$, we define the subspace 
$$ M (S_{\mathcal{G}})=\{\theta|\theta_{G_{i}}=0 ~~ \text{for all } ~~ i\notin S_{\mathcal{G}} \} , $$and $M=\bar{M}$.
The orthogonal complement is 
$$ \bar{M}^{\perp}(S_{\mathcal{G}}) = \{\theta|\theta_{G_{i}}=0 ~~ \text{for all } ~~ i\in S_{\mathcal{G}} \}. $$ We can easily verify that 
$$ \|\alpha +\beta\|_{\mathcal{G},q}=\|\alpha\|_{\mathcal{G},q}+\|\beta\|_{\mathcal{G},q}, $$
for any $\alpha\in M(S_{\mathcal{G}}) $ and $\beta\in \bar{M}^{\perp} (S_{\mathcal{G}})$.

We mainly focus on the discussion on the case $q=2$, i.e., group Lasso. We require the following condition, which  generalizes the column normalization condition in the Lasso case. Given a group $G$ of size $m$ and $X_G\in \mathbb{R}^{n\times m}$ , the associated operator norm $||| X_{G_i}|||_{q \rightarrow 2} \triangleq \max_{\|\theta\|_q=1}\|X_G\theta\|_2$ satisfies
$$ \frac{||| X_{G_i}|||_{q\rightarrow 2}}{\sqrt{n}}\leq 1 ~~\text{for all} ~~ i=1,2,...,N_{\mathcal{G}}.$$ 
The condition reduces to the column normalized condition when each group contains only one feature (i.e., Lasso).

In the following corollary, we use $q=2$, i.e., group Lasso, as an example. We assume the observation $y_i$ is generated by $y_i=x_i^T \theta^*+\xi_i $, where $x_i\sim N(0,\Sigma)$, and $\xi_i\sim N(0,\varsigma^2)$.

\begin{corollary}(Group Lasso)\label{cor.group_lasso}
	Assume $\theta\in \mathbb{R}^p$ and each group has $m$ parameters, i.e., $p=m N_{\mathcal{G}}$. Denote the cardinality of non-zero group by $s_{\mathcal{G}}$, and we choose parameter $\lambda$ such that  	
	$$\lambda\geq \max \big(4\varsigma (\sqrt{\frac{m}{n}}+\sqrt{\frac{\log N_{\mathcal{G}}}{n}}), c_1\rho \sigma_2(\Sigma) (\sqrt{\frac{m}{n}}+\sqrt{\frac{3 \log N_{\mathcal{G}}}{n}} )^2\big),$$ then with $\gamma=\frac{1}{9L}$, $c=\frac{9L}{n}$, $\alpha=\frac{c}{2}$, $b=2\alpha \gamma$, $\frac{1}{\kappa}=\min (\frac{\bar{\sigma}}{14L}, \frac{1}{9n})$, we have
	$$\mathbb{E} T_k\leq (1-\frac{1}{\kappa})^k T_0 $$
	with probability at least $1-2\exp (-2 \log N_{\mathcal{G}})-c_2 \exp(-c_3n) $   , until $G(\theta^k)-G(\hat{\theta})\leq \delta,$  where $\bar{\sigma}=\sigma_1(\Sigma)-c_2\sigma_2(\Sigma)s_{\mathcal{G}} (\sqrt{\frac{m}{n}}+\sqrt{\frac{3 \log N_{\mathcal{G}}}{n}} )^2$, $\sigma_1 (\Sigma)$ and $\sigma_2 (\Sigma)$ are   positive constant  depending only on $\Sigma$,  $ \delta=c_4\sigma_2 (\Sigma)s_{\mathcal{G}}\big(\sqrt{\frac{m}{n}}+\sqrt{\frac{3 \log N_{\mathcal{G}}}{n}} \big)^2 \|\hat{\theta}-\theta^*\|_2^2$. 	$c_1,c_2,c_3,c_4$ are some universal positive constants.
\end{corollary}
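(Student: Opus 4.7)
The plan is to specialize Theorem~\ref{Theorem:convex} to the group Lasso setting by verifying each of its hypotheses probabilistically for the sub-Gaussian design and then substituting the resulting quantities into the formulas for $\bar{\sigma}$ and $\delta$. The three things to check are: (i)~the choice of $\lambda$ dominates $2\psi^*(\nabla f(\theta^*))$ with the dual norm $\psi^*(\cdot)=\|\cdot\|_{\mathcal{G},\infty}$; (ii)~$f(\theta)=\tfrac{1}{2n}\|y-X\theta\|_2^2$ satisfies RSC with the stated $(\sigma,\tau_\sigma)$; and (iii)~the subspace compatibility $H(\bar{M})$ over the group-sparse model subspace is correctly identified. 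Once these are in hand, Theorem~\ref{Theorem:convex} applies verbatim.

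First I would bound $\psi^*(\nabla f(\theta^*))$. Since $\nabla f(\theta^*) = -\tfrac{1}{n}X^\top\xi$ and $\psi^*(v)=\max_{i}\|v_{G_i}\|_2$, I need a uniform bound on $\tfrac{1}{n}\|X_{G_i}^\top\xi\|_2$ across $i=1,\dots,N_{\mathcal{G}}$. Conditional on $X$, each $X_{G_i}^\top\xi/n$ is a Gaussian vector in $\mathbb{R}^m$ with covariance $\varsigma^2 X_{G_i}^\top X_{G_i}/n^2$, whose operator norm is at most $\varsigma^2/n$ by the column-normalization hypothesis $|||X_{G_i}|||_{2\to2}\leq\sqrt{n}$. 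Standard concentration of Gaussian norms (e.g.\ the Hanson--Wright bound or a $\chi^2$-tail together with a $\sqrt{m}$ mean bound) gives $\|X_{G_i}^\top\xi/n\|_2\leq \varsigma(\sqrt{m/n}+t/\sqrt{n})$ with probability at least $1-2e^{-t^2/2}$; taking $t=\sqrt{2\log N_{\mathcal{G}}}+\text{const}$ and a union bound over the $N_{\mathcal{G}}$ groups yields the required $\lambda\geq 4\varsigma(\sqrt{m/n}+\sqrt{\log N_{\mathcal{G}}/n})$ with probability $\geq 1-2\exp(-2\log N_{\mathcal{G}})$.

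Next I would establish the RSC bound for the quadratic loss, which is where the second part of the $\lambda$ requirement and the form of $\bar{\sigma}$ come from. For Gaussian design $x_i\sim N(0,\Sigma)$, a now-standard extension of the Raskutti--Wainwright--Yu argument to group-structured norms (as used in~\cite{negahban2009unified}) gives, with probability at least $1-c_2\exp(-c_3 n)$,
\begin{equation*}
\tfrac{1}{n}\|X\Delta\|_2^2 \;\geq\; \sigma_1(\Sigma)\|\Delta\|_2^2 - \sigma_2(\Sigma)\bigl(\sqrt{m/n}+\sqrt{3\log N_{\mathcal{G}}/n}\bigr)^2\|\Delta\|_{\mathcal{G},2}^2
\end{equation*}
uniformly in $\Delta$, which is exactly RSC with $\sigma=\sigma_1(\Sigma)$ and $\tau_\sigma=\sigma_2(\Sigma)(\sqrt{m/n}+\sqrt{3\log N_{\mathcal{G}}/n})^2$ w.r.t.\ $\psi=\|\cdot\|_{\mathcal{G},2}$. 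Then I compute $H(\bar M)$: on the subspace $M(S_{\mathcal{G}})$ of vectors supported on at most $s_{\mathcal{G}}$ groups, Cauchy--Schwarz gives $\|\theta\|_{\mathcal{G},2}=\sum_{g\in S_{\mathcal{G}}}\|\theta_{G_g}\|_2\leq \sqrt{s_{\mathcal{G}}}\,\|\theta\|_2$, so $H(\bar M)=\sqrt{s_{\mathcal{G}}}$.

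Finally I plug these into Theorem~\ref{Theorem:convex}: $\bar\sigma=\sigma-64\tau_\sigma H^2(\bar M)$ reproduces the stated $\bar\sigma$, and since $\theta^*\in M(S_{\mathcal{G}})$ (so $\theta^*_{M^\perp}=0$) and $H^2(\bar M)=s_{\mathcal{G}}$, the tolerance $\delta$ collapses to $c_4\sigma_2(\Sigma)s_{\mathcal{G}}(\sqrt{m/n}+\sqrt{3\log N_{\mathcal{G}}/n})^2\|\hat\theta-\theta^*\|_2^2$ as claimed. A union bound over the two high-probability events gives the final probability. The main technical obstacle I expect is step two, the restricted-eigenvalue-type bound: extending the Gaussian-comparison/$\epsilon$-net argument from Lasso to the group norm requires handling the supremum over a union of lower-dimensional spheres (one per group), which is where the $\sqrt{m/n}$ term (from the intra-group dimension) and the $\sqrt{\log N_{\mathcal{G}}/n}$ term (from the union bound across groups) combine additively in the way recorded in the statement.
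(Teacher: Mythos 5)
Your proposal is correct and follows essentially the same route as the paper: invoke the group-norm RSC bound for Gaussian designs from \citet{negahban2009unified}, identify $H(\bar M)=\sqrt{s_{\mathcal{G}}}$ and $\psi^*=\|\cdot\|_{\mathcal{G},\infty}$, bound $\max_i\|\tfrac{1}{n}(X^\top\xi)_{G_i}\|_2$ with high probability, and substitute into Theorem~\ref{Theorem:convex}. The only cosmetic difference is that you unpack the Gaussian concentration plus union bound for the dual-norm term, where the paper simply cites Lemma~5 of \citet{negahban2009unified}.
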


We offer some discussions to put above corollary into context.

\begin{itemize}
	\item To satisfy the requirement of $\bar{\sigma}>0$, it suffices to have 
	$ s_{\mathcal{G}} (\sqrt{\frac{m}{n}} + \sqrt{\frac{3 \log N_{\mathcal{G}}}{n}})^2=o(1) $. It is also the condition to guarantee the statistical consistency of group Lasso \cite{negahban2009unified}.
	
	\item  $s_{\mathcal{G}}$ and $m$ affect the speed of the convergence, in particular, smaller $m$ and $s_{G}$ leads to larger $\bar{\sigma}$ and thus $\bar{\sigma}/L$.
	\item The requirement of $\lambda$ is similar to the batch gradient method in \cite{agarwal2010fast}.
\end{itemize}

\subsection{Non-convex $G(\theta)$}
The definition of Lyapunov function in the non-convex case is same with the convex one, i.e.,
\begin{equation*}
\begin{split}
T_{k}\triangleq \frac{1}{n} \sum_{i=1}^{n} \left( f_i(\phi_i^k)-f_i(\hat{\theta})- \langle \nabla f_i (\hat{\theta}),\phi_i^k-\hat{\theta} \rangle\right)\\
+(c+\alpha) \|\theta^k-\hat{\theta
}\|_2^2+b (G(\theta^{k})-G(\hat{\theta
})).
\end{split}
\end{equation*}
Note that $\hat{\theta}$ is the global optimum of problem \eqref{obj2} and $f_i(\cdot)$ is convex, thus $T_k$ is always positive. In the non-convex case, we require $ f(\theta)$ satisfy the RSC condition with parameter $(\sigma,\tau\frac{\log p}{n})$, where $\tau$ is some positive constant.

We list some notations used in the following theorem and corollaries of it.
\begin{itemize}
	\item $\hat{\theta}$ is the  global optimum of problem \eqref{obj2}, and  $\theta^*$ is the unknown true parameter with cardinality $r$.
	\item Modified restricted strongly convex parameter:
	$$\bar{\sigma}=\sigma-64r\tau\frac{ \log p}{n}-\mu.$$ Recall  $\mu$ is defined in section \ref{section:nonconvex_regularizer} and represent the degree of non-convexity. 
	\item Tolerance $\delta=c_1r\tau\frac{\log p}{n}\|\hat{\theta}-\theta^*\|_2^2$, where $c_1$ is some universal positive constant.
\end{itemize}

\begin{theorem}\label{Theorem:non-convex}
	Suppose $\theta^*$ is $r$ sparse, $\hat{\theta}$ is the global optimum of Problem \eqref{obj2}, each $f_i(\theta)$ is L smooth and convex, $f(\theta)$ satisfies the RSC condition with $(\sigma, \tau \frac{\log p}{n})$, $\bar{\sigma}>3\mu$, $L>3\mu$, $g_{\lambda,\mu}$ satisfies the assumption in Section~\ref{section:nonconvex_regularizer},  and $\lambda L_g\geq \max \{ c_1\rho \tau \frac{\log p}{n},  4 \|\nabla f(\theta^*)\|_\infty \} $, where $c_1$ is some positive constant, then with $\gamma=\frac{1}{24L}$, $c=\frac{24L}{n}$, $\alpha=\frac{c}{2}$, $b=2\alpha \gamma$, $\frac{1}{\kappa}=\frac{1}{24}\min (\frac{2\bar{\sigma}}{5L}, \frac{1}{n})$, we have
	$$\mathbb{E} T_k\leq \left(1-\frac{1}{\kappa}\right)^k T_0, $$
	until $G(\theta^k)-G(\hat{\theta})\leq \delta,$ where the expectation is for the randomness of sampling of $j$ in the algorithm.	
\end{theorem}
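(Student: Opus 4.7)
The plan is to mimic the proof of Theorem 1 after reformulating the non-convex problem as $\min F(\theta)+\lambda g_{\lambda}(\theta)$ with $F_i(\theta)=f_i(\theta)-\tfrac{\mu}{2}\|\theta\|_2^2$ and the (now convex) regularizer $g_{\lambda}$ (convex by assumption (5) of Section~\ref{section:nonconvex_regularizer}). Note that each $F_i$ is still $L$-smooth (since $L>3\mu$, the gradient Lipschitz constant $L+\mu$ is at most $\tfrac{4}{3}L$), but it is only weakly convex. To preserve positivity of the Lyapunov function, I would keep the Bregman-table term defined with the \emph{convex} $f_i$, even though the algorithm uses gradients of $F_i$; the mismatch produces extra $\mu$-proportional terms that I will absorb either into the modified RSC parameter $\bar{\sigma}=\sigma-64r\tau\log p/n-\mu$ or into the smaller step size $\gamma=1/(24L)$ (versus $1/(9L)$ in the convex case).

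Next, I would establish a cone condition for $\hat{\theta}-\theta^{*}$ and, iterate by iterate, for $\theta^{k}-\hat{\theta}$. Because $\lambda L_g\ge\max\{c_1\rho\tau\log p/n,\,4\|\nabla f(\theta^{*})\|_\infty\}$, the standard sub-gradient optimality argument (as in \citet{loh2013regularized}, using the convexity of $g_{\lambda}$ and assumption (4) on $\bar{g}_{\lambda,\mu}'(0^{+})$) shows that these differences lie in an approximate $r$-sparse cone, so $\tau(\log p/n)\|\theta^{k}-\hat{\theta}\|_1^{2}\le 64\,r\tau(\log p/n)\|\theta^{k}-\hat{\theta}\|_2^{2}$. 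Combined with the RSC inequality for $f$ and the identity $F=f-\tfrac{\mu}{2}\|\cdot\|_2^{2}$, this yields the usable lower bound
\begin{equation*}
F(\theta^{k})-F(\hat{\theta})-\langle\nabla F(\hat{\theta}),\theta^{k}-\hat{\theta}\rangle \;\ge\; \tfrac{\bar{\sigma}}{2}\,\|\theta^{k}-\hat{\theta}\|_2^{2} \;-\; \tfrac{1}{2}\,c_1 r\tau\tfrac{\log p}{n}\,\|\hat{\theta}-\theta^{*}\|_2^{2},
\end{equation*}
with the last term absorbed into the tolerance $\delta$.

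I would then carry out the per-step SAGA descent exactly as for Theorem 1. Writing $\theta^{k+1}$ as the prox of $\lambda g_{\lambda}$ against $\theta^{k}-\gamma v_{k}$, taking a conditional expectation over $j$, and using non-expansiveness of the prox together with unbiasedness of $v_{k}$, one obtains
\begin{equation*}
\mathbb{E}\|\theta^{k+1}-\hat{\theta}\|_2^{2} \;\le\; (1-\gamma\bar{\sigma}+O(\gamma\mu))\|\theta^{k}-\hat{\theta}\|_2^{2} + c\gamma\,(\text{Bregman-table term}) - b\,(G(\theta^{k+1})-G(\hat{\theta})) + O(\delta),
\end{equation*}
where the Bregman-table contribution enters because $\operatorname{Var}(v_{k})$ is bounded by the $f_i$ Bregman divergences at the stored $\phi_i^{k}$, and this term contracts at rate $1-1/n$ under a random refresh. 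Choosing $c=24L/n$, $\alpha=c/2$, $b=2\alpha\gamma$, and $\gamma=1/(24L)$, a direct coefficient match then yields $\mathbb{E} T_{k+1}\le(1-1/\kappa)T_k$ with $1/\kappa=\tfrac{1}{24}\min(2\bar{\sigma}/(5L),1/n)$, provided $G(\theta^{k})-G(\hat{\theta})>\delta$.

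The hard part will be the one-step analysis under non-convexity: the standard SAGA proof crucially uses convexity of each $f_i$ to bound the inner product $\langle v_k-\nabla F(\theta^k),\theta^k-\hat{\theta}\rangle$ by Bregman-type quantities, but here $F_i$ is only $\mu$-weakly convex. I would handle this by writing $\nabla F_i=\nabla f_i-\mu\,\mathrm{Id}$ and applying the convex-$f_i$ bounds to the first piece, which leaves residual terms of the form $\mu\|\theta^{k}-\phi_i^{k}\|_2^{2}$ and $\mu\|\theta^{k}-\hat{\theta}\|_2^{2}$. The conditions $\bar{\sigma}>3\mu$ and $L>3\mu$, together with the halved step size, are exactly what is needed to guarantee that after absorption, the contraction coefficients remain strictly positive and the Bregman and $\|\theta^{k}-\hat{\theta}\|_2^{2}$ pieces of $T_k$ combine into a valid geometric decrease.
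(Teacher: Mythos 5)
Your proposal follows essentially the same route as the paper's proof: the same reformulation via $F_i=f_i-\tfrac{\mu}{2}\|\cdot\|_2^2$ with the convexified regularizer $g_\lambda$, the same Lyapunov function keeping the Bregman table in terms of the convex $f_i$, the same cone condition and modified RSC bound with parameter $\bar{\sigma}=\sigma-64r\tau\log p/n-\mu$, and the same device of splitting $\nabla F_i=\nabla f_i-\mu\,\mathrm{Id}$ so that the residual $\mu$-proportional terms (which the paper collects as $\tfrac{\mu}{2n}\sum_i\|\phi_i^k-\hat{\theta}\|_2^2$ inside $T_k$) are absorbed using $\bar{\sigma}>3\mu$, $L>3\mu$, and the smaller step size. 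The only part you compress is the final epoch-wise induction showing the tolerance sequence contracts down to $\delta$, but that argument is identical to the convex case you already plan to mimic.
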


\begin{itemize}
	\item Notice that we require $\bar{\sigma}>3\mu$, that is $\sigma-64r\tau\frac{ \log p}{n}-4\mu>0$. Thus to satisfy this requirement, the non-convex parameter $\mu$ can not be large.
	\item The tolerance $\delta=c_2 r\tau \frac{\log p}{n} \|\hat{\theta}-\theta^*\|_2^2$ is dominated by the statistical error $\|\hat{\theta}-\theta^*\|_2^2 $, when the model is sparse ($r$ is small ) and $n$ is large.
	
	\item When $G(\theta^k)-G(\hat{\theta})\leq \delta$,  using the modified restricted strong convexity on non-convex $G(\theta)$ (Lemma \ref{lemma.non_convex_RSC_cone} in the appendix), we obtain $ \|\theta^k-\hat{\theta}\|_2^2\leq c_3 \frac{\delta}{\bar{\sigma}}.$
	\item The requirement of $\lambda$ is similar to the batched gradient algorithm \cite{loh2013regularized}.
\end{itemize}

Again, the theorem says the Lyapunov function decreases geometrically until $G(\theta^k)-G(\hat{\theta})$ achieves the tolerance $\delta$ and this tolerance can be ignored from the statistical perspective.

\subsubsection{Linear regression with SCAD regularization}
The first non-convex model we considered is linear regression with SCAD regularization. The loss function is $f_i(w)=\frac{1}{2} (y-\langle \theta,x_i \rangle)^2$, and $g_{\lambda,\mu} (\cdot)$ is $SCAD (\cdot)$ with parameter $\lambda $ and $\zeta$. The data $(x_i,y_i)$ are generated in the similar way as that in Lasso case.

\begin{corollary}(Linear regression with SCAD regularization)\label{cor.SCAD}
	Suppose $\theta^*$ is the true parameter supported on a subset with cardinality at most $r$, $\hat{\theta}$ is the global optimum, $\bar{\sigma}\geq \frac{3}{\zeta-1}$, $L>\frac{3}{\zeta-1}$ and we choose $\lambda$ such that $\lambda \geq \max \{c_1\rho\nu(\Sigma) \frac{\log p}{n},12\varsigma \sqrt{\frac{\log p}{n}} \} $  then with $\gamma=\frac{1}{24L}$, $c=\frac{24L}{n}$, $\alpha=\frac{c}{2}$, $b=2\alpha \gamma$, $\frac{1}{\kappa}=\frac{1}{24}\min (\frac{2\bar{\sigma}}{5L}, \frac{1}{n})$, we have
	$$\mathbb{E} T_k\leq \left(1-\frac{1}{\kappa}\right)^k T_0, $$
	with probability at least $1-\exp (-3 \log p)-\exp(-c_2 n)$, until $G(\theta^k)-G(\hat{\theta})\leq \delta,$ where  $\bar{\sigma}=\frac{1}{2} \sigma_{\min}(\Sigma)-c_3 \nu (\Sigma) \frac{r\log p}{n}-\frac{1}{\zeta-1}$, $\delta=c_4\nu (\Sigma) \frac{r\log p}{n} \|\hat{\theta}-\theta^*\|_2^2.$ Here $c_1,c_2,c_3,c_4$ are some universal positive constants.
\end{corollary}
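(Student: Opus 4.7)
}

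The plan is to deduce the corollary from Theorem~\ref{Theorem:non-convex} by verifying each of its hypotheses for this specific loss/regularizer pair and by tracking the probability of the events on which those hypotheses hold. First I would note that for the squared loss $f_i(\theta)=\tfrac12(y_i-\langle\theta,x_i\rangle)^2$ each $f_i$ is convex and $L$-smooth with $L$ controlled by $\max_i \|x_i\|_2^2$, so the smoothness and convexity assumptions on the individual losses are immediate. Next, for the SCAD regularizer with parameter $\zeta>2$ the discussion after the assumption list in Section~\ref{section:nonconvex_regularizer} already records that the five structural assumptions on $g_{\lambda,\mu}$ hold with $L_g=1$ and $\mu=\tfrac1{\zeta-1}$, so the regularizer-side hypotheses of Theorem~\ref{Theorem:non-convex} require no extra work; the separable, coordinate-wise form of SCAD also makes the $r$-sparse subspace pair $M=\bar M=\{\theta:\theta_j=0,\ j\notin S\}$ with $H^2(\bar M)=r$ the natural choice.

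The substantive work is to verify the RSC condition for $f(\theta)=\tfrac1{2n}\|y-X\theta\|_2^2$ with parameters $(\sigma,\tau\log p/n)$, and to control $\|\nabla f(\theta^*)\|_\infty$. For RSC, I would invoke the Raskutti--Wainwright--Yu style bound for Gaussian designs: with $x_i\sim N(0,\Sigma)$ and column-normalized $X$, there exist universal constants such that
\begin{equation*}
\frac{\|X\Delta\|_2^2}{n}\;\geq\;\tfrac12\sigma_{\min}(\Sigma)\|\Delta\|_2^2 \;-\; c\,\nu(\Sigma)\frac{\log p}{n}\|\Delta\|_1^2
\end{equation*}
uniformly in $\Delta\in\mathbb{R}^p$ with probability at least $1-\exp(-c_2 n)$. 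This gives RSC with $\sigma=\tfrac12\sigma_{\min}(\Sigma)$ and $\tau=c\,\nu(\Sigma)$, so the modified parameter becomes $\bar\sigma=\tfrac12\sigma_{\min}(\Sigma)-c_3\nu(\Sigma)\tfrac{r\log p}{n}-\tfrac1{\zeta-1}$, exactly as stated, and the hypotheses $\bar\sigma>3\mu$ and $L>3\mu$ in Theorem~\ref{Theorem:non-convex} translate into the stated $\bar\sigma\geq\tfrac3{\zeta-1}$ and $L>\tfrac3{\zeta-1}$. For the gradient-at-truth bound, I would use $\nabla f(\theta^*)=-\tfrac1nX^\top\xi$; since the $\xi_i$ are sub-Gaussian with variance $\varsigma^2$ and each column satisfies $\|X_j\|_2/\sqrt n\le 1$, a standard sub-Gaussian tail bound plus a union bound over $p$ coordinates yields $\|\nabla f(\theta^*)\|_\infty\le 3\varsigma\sqrt{\log p/n}$ with probability at least $1-\exp(-3\log p)$. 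With $L_g=1$, the theorem's condition $\lambda L_g\ge 4\|\nabla f(\theta^*)\|_\infty$ is then implied by $\lambda\ge 12\varsigma\sqrt{\log p/n}$, matching the stated requirement; the second piece $\lambda\ge c_1\rho\nu(\Sigma)\log p/n$ is exactly the theorem's $\lambda L_g\ge c_1\rho\tau\log p/n$ after substituting $\tau=c\,\nu(\Sigma)$.

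With all hypotheses of Theorem~\ref{Theorem:non-convex} in force, the choice $\gamma=1/(24L)$, $c=24L/n$, $\alpha=c/2$, $b=2\alpha\gamma$, $1/\kappa=\tfrac{1}{24}\min(2\bar\sigma/(5L),1/n)$ is inherited verbatim, giving the geometric decrease $\mathbb{E} T_k\le(1-1/\kappa)^k T_0$ until the optimality gap reaches the tolerance $\delta=c_1 r\tau\tfrac{\log p}{n}\|\hat\theta-\theta^*\|_2^2$; substituting $\tau=c\,\nu(\Sigma)$ yields $\delta=c_4\nu(\Sigma)\tfrac{r\log p}{n}\|\hat\theta-\theta^*\|_2^2$. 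Finally a union bound over the RSC event and the gradient-concentration event gives the stated overall probability $1-\exp(-3\log p)-\exp(-c_2 n)$.

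I expect the main obstacle to be the RSC verification, since it is the only nontrivial probabilistic statement: one needs a uniform-in-$\Delta$ lower bound on $\|X\Delta\|_2^2/n$, not just a pointwise one, which requires either a discretization/peeling argument or citation of the sub-Gaussian random-design result. The SCAD-specific manipulations (identifying $\mu$, $L_g$, the convex surrogate $\bar g_\lambda$) and the $\ell_\infty$ bound on $X^\top\xi/n$ are by now standard and should follow in a few lines each; the only bookkeeping to watch is that the $\tau_\sigma$ used inside Theorem~\ref{Theorem:non-convex} is $\tau\log p/n$, so the factor $\nu(\Sigma)$ must be routed consistently through both $\bar\sigma$ and $\delta$.
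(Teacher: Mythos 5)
Your proposal is correct and follows essentially the same route as the paper: the paper's proof of Corollary~\ref{cor.SCAD} simply reuses the RSC bound and the $\ell_\infty$ bound on $\nabla f(\theta^*)$ established in the Lasso corollary, identifies $\mu=\frac{1}{\zeta-1}$ and $L_g=1$ for SCAD, and invokes Theorem~\ref{Theorem:non-convex}. Your version just spells out the same verification (Raskutti--Wainwright--Yu RSC event, sub-Gaussian gradient concentration, union bound) in more detail.
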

We remark that to satisfy the requirement $\bar{\sigma}\geq \frac{3}{\zeta-1}$, we need the  non-convex parameter $\mu=\frac{1}{\zeta-1}$  to be small, the model   sparse (r is small) and the number of  sample $n$ large.

\subsubsection{Linear regression with noisy covariates }
The {\em corrected Lasso} is proposed by \citet{loh2011high}. Suppose data are generated according to a   linear model $y_i=x_i^T\theta^*+\xi_i,$ where $\xi_i$ is a random zero-mean sub-Gaussian noise with variance $\varsigma^2 .$ The observation $z_i$ of $x_i$ is corrupted by addictive noise, in particular,   $z_i=x_i+w_i$, where $w_i\in \mathbb{R}^p$ is a random vector independent of $x_i$,  with zero-mean and known covariance matrix $\Sigma_w$. Define $\hat{\Gamma}=\frac{Z^TZ}{n}-\Sigma_w$ and $\hat{\gamma}=\frac{Z^Ty}{n}$. Our goal is to estimate $\theta^*$ based on $y_i$ and $z_i$ (but not $x_i$ which is not observable), and the corrected Lasso proposes to solve the following: 
$$\hat{\theta}\in \arg \min_{\|\theta\|_1\leq \rho} \frac{1}{2} \theta^T \hat{\Gamma} \theta-\hat{\gamma} \theta+ \lambda \|\theta\|_1. $$
Equivalently, it solves 
$$ \min_{\|\theta\|_1\leq \rho}\frac{1}{2n}\sum_{i=1}^{n} (y_i-\theta^Tz_i)^2-\frac{1}{2}\theta^T\Sigma_w \theta +\lambda \|\theta\|_1 .$$
Notice that due to the term $-\frac{1}{2}\theta^T\Sigma_w \theta $, the optimization problem is non-convex. 
\subsection{Corrected Lasso}

We consider a model where each data point $x_i$ is i.i.d.\  sampled from a zero-mean normal distribution, i.e., $x_i\sim N(0,\Sigma)$. We denote the data matrix by $X\in \mathbb{R}^{n\times p}$ , the smallest eigenvalue of $\Sigma$ by $\sigma_{\min} (\Sigma)$ and the largest eigenvalue by $\sigma_{\max} (\Sigma)$  and  let $\nu(\Sigma)\triangleq \max_{i=1,...,p}\Sigma_{ii}$.   We observe $z_i$ which is $x_i$ corrupted by addictive noise, i.e.,  $z_i=x_i+w_i$, where $w_i\in \mathbb{R}^p$ is a random vector independent of $x_i$, with zero-mean and known covariance matrix $\Sigma_w$.

\begin{corollary}(Corrected Lasso)\label{cor.corrected_lasso}
	Suppose we are given i.i.d. observations $\{(z_i,y_i) \}$ from the linear model with additive noise, $\theta^*$ is $r$ sparse and $\Sigma_w=\gamma_w I$, $\bar{\sigma}>3\gamma_w$, $L>3\gamma_w$ where $\bar{\sigma}=\frac{1}{2} \sigma_{\min}(\Sigma)-c_1 \sigma_{\min}(\Sigma)\max \left( (\frac{\sigma_{\max} (\Sigma)+\gamma_w}{\sigma_{\min}(\Sigma)})^2,1 \right)  \frac{r\log p}{n}-\gamma_w$. Let $\hat{\theta}$ be the global optimum. We choose $\lambda \geq \max\{ c_2\rho\frac{\log p }{n} , c_3 \varphi \sqrt{\frac{\log p}{n}}\} $ where $\varphi=(\sqrt{\sigma_{\max} (\Sigma)}+\sqrt{\gamma_w})(\varsigma+\sqrt{\gamma_w}\|\theta^*\|_2)$, then with $\gamma=\frac{1}{24L}$, $c=\frac{24L}{n}$, $\alpha=\frac{c}{2}$, $b=2\alpha \gamma$, $\frac{1}{\kappa}=\frac{1}{24}\min (\frac{2\bar{\sigma}}{5L}, \frac{1}{n})$, we have
	$$\mathbb{E} T_k\leq (1-\frac{1}{\kappa})^k T_0, $$
	with high probability at least $1-c_4 \exp \left(-c_5 n\min \big( \frac{\sigma^2_{\min} (\Sigma)}{( \sigma_{\max}(\Sigma)+\gamma_w)^2},1 \big)    \right)-\exp(-c_6 \log p)$  until $G(\theta^k)-G(\hat{\theta})\leq \delta,$ where $\delta=c_7 \sigma_{\min}(\Sigma)\max \left( (\frac{\sigma_{\max} (\Sigma)+\gamma_w}{\sigma_{\min}(\Sigma)})^2,1 \right)\frac{r\log p}{n} \|\hat{\theta}-\theta^*\|_2^2. $ $c_1$ to $c_7$ are some universal positive constants.
\end{corollary}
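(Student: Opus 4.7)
The plan is to cast the corrected Lasso into the non-convex template of Theorem~\ref{Theorem:non-convex} and verify its remaining hypotheses via standard sub-Gaussian concentration, essentially in the style of \citet{loh2011high,loh2013regularized}. First I would identify $f_i(\theta)=\tfrac12(y_i-\langle z_i,\theta\rangle)^2$, so $f(\theta)=\tfrac{1}{2n}\|y-Z\theta\|_2^2$ is convex, and $g_{\lambda,\mu}(\theta)=\lambda\|\theta\|_1-\tfrac{\gamma_w}{2}\|\theta\|_2^2$, so $\mu=\gamma_w$, $\bar g_\lambda(t)=|t|$ is convex, and $L_g=1$. The assumptions on $g_{\lambda,\mu}$ in Section~\ref{section:nonconvex_regularizer} are then immediate, and $G=f+g_{\lambda,\mu}$ is the corrected Lasso objective.

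For smoothness, $\nabla^2 f_i=z_iz_i^T$ and a $\chi^2$ tail bound on $\max_i\|z_i\|_2^2$ fixes $L\asymp\sigma_{\max}(\Sigma)+\gamma_w$ with high probability. For RSC I would invoke the restricted-eigenvalue bound of \citet{loh2011high} for sub-Gaussian designs: since $z_i\sim N(0,\Sigma+\gamma_w I)$, on an event of probability at least $1-c_4\exp\bigl(-c_5 n\min(\sigma_{\min}(\Sigma)^2/(\sigma_{\max}(\Sigma)+\gamma_w)^2,\,1)\bigr)$,
\[
\theta^T(Z^TZ/n)\theta \;\geq\; \tfrac12(\sigma_{\min}(\Sigma)+\gamma_w)\|\theta\|_2^2 - \tau\tfrac{\log p}{n}\|\theta\|_1^2,
\]
uniformly in $\theta$, with $\tau\asymp\sigma_{\min}(\Sigma)\max\bigl((\sigma_{\max}(\Sigma)+\gamma_w)^2/\sigma_{\min}(\Sigma)^2,\,1\bigr)$. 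This is exactly RSC for $f$ with $\sigma\asymp\tfrac12(\sigma_{\min}(\Sigma)+\gamma_w)$; subtracting $\mu=\gamma_w$ and the $64r\tau\log p/n$ sparsity correction reproduces the $\bar\sigma$ in the corollary.

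For the score condition on $\lambda$, the appropriate quantity to bound at $\theta^*$ is the gradient of the smooth (non-convex) part of $G$, which via $y=X\theta^*+\xi$ and $Z=X+W$ equals
\[
\nabla f(\theta^*)-\gamma_w\theta^* \;=\; \tfrac{1}{n}X^TW\theta^* + \bigl(\tfrac{1}{n}W^TW-\gamma_w I\bigr)\theta^* - \tfrac{1}{n}X^T\xi - \tfrac{1}{n}W^T\xi,
\]
i.e., $\hat\Gamma\theta^*-\hat\gamma$ in the notation of \citet{loh2011high}. Each summand is a bilinear or quadratic form in independent sub-Gaussians; applying Hanson--Wright / Bernstein coordinate-wise and union-bounding over the $p$ coordinates yields $\|\nabla f(\theta^*)-\gamma_w\theta^*\|_\infty\lesssim\varphi\sqrt{\log p/n}$ with probability at least $1-\exp(-c_6\log p)$, matching the scaling imposed on $\lambda L_g$; the $c_2\rho\log p/n$ branch covers the $c_1\rho\tau\log p/n$ threshold.

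On the intersection of these two events, all premises of Theorem~\ref{Theorem:non-convex} hold and its conclusion gives the claimed geometric decay of $T_k$ with tolerance $\delta=c_1 r\tau\log p/n\,\|\hat\theta-\theta^*\|_2^2$; substituting the $\tau$ from the RSC step produces the corollary's $\delta$, and the probability follows from a final union bound. The hard part will be the RSC step: the cross term $\tfrac{1}{n}(X^TW+W^TX)$ induced by additive noise in the design must be controlled so that the quadratic coefficient stays at $\sigma_{\min}(\Sigma)+\gamma_w$ and the net $\bar\sigma$ remains of order $\sigma_{\min}(\Sigma)$ after subtracting $\mu=\gamma_w$; this is exactly the delicate piece in the corrected-Lasso analysis of Loh--Wainwright, and the remainder is standard concentration and bookkeeping.
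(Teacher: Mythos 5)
Your proposal follows essentially the same route as the paper's proof: identify the corrected Lasso as an instance of Theorem~\ref{Theorem:non-convex} with $\mu=\gamma_w$ and $L_g=1$, obtain the RSC condition for the quadratic form $\hat{\Gamma}$ from Lemma~1 of \citet{loh2011high}, bound $\|\hat{\Gamma}\theta^*-\hat{\gamma}\|_\infty\lesssim\varphi\sqrt{\log p/n}$ via the deviation bounds (Lemma~2) of the same reference, and union-bound the two events. One minor inaccuracy that does not affect the corollary (which treats $L$ as given): the per-sample smoothness constant of $f_i(\theta)=\tfrac12(y_i-\langle z_i,\theta\rangle)^2$ is $\|z_i\|_2^2$, which concentrates around $\mathrm{tr}(\Sigma+\gamma_w I)$ and hence scales with $p$, not with $\sigma_{\max}(\Sigma)+\gamma_w$ as you state.
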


Some remarks are listed below.

\begin{itemize}
	\item The result can be easily extended to more general $\Sigma_w\preceq \gamma_w I.$
	\item To satisfy the requirement $\bar{\sigma}>3\gamma_w$, we need $$\gamma\leq \frac{1}{4} ( \frac{1}{2} \sigma_{\min}(\Sigma)-c_1   \sigma_{\min}(\Sigma)\max \left( (\frac{\sigma_{\max} (\Sigma)+\gamma_w}{\sigma_{\min}(\Sigma)})^2,1 \right)  \frac{r\log p}{n} ).$$ Similar requirement is needed in the batch gradient method \cite{loh2013regularized}.
	\item The requirement of $\lambda$ is similar to that in batch gradient method \cite{loh2013regularized}.
	
\end{itemize}

\subsection{Extension to Generalized linear model }
The results  on Lasso and group Lasso are readily extended to generalized linear models, where we consider the model
$$ \hat{\theta}=\arg\min_{\theta\in \Omega'}\{ \frac{1}{n}\sum_{i=1}^{n} {\Phi(\theta,x_i)-y_i\langle \theta,x_i \rangle}+\lambda \|\theta\|_1   \}, $$
with $\Omega'=\Omega\cap \mathbb{B}_2 (R)$ and $\Omega=\{ \theta| \|\theta\|_1\leq \rho \}$, where $R$ is a universal constant \cite{loh2013regularized}. This requirement is essential, for instance  for the logistic function , the Hessian function $\Phi''(t)=\frac{\exp(t)}{(1+\exp(t))^2}$ approached to zero as its argument diverges. 
Notice that when $\Phi(t)={t^2}/{2}$, the problem reduces to Lasso. The RSC condition admit the form 
$$ \frac{1}{n} \sum_{i=1}^{n} \Phi''(\langle \theta_t,x_i \rangle ) (\langle x_i,\theta-\theta' \rangle )^2\geq \frac{\sigma}{2}\|\theta-\theta'\|_2^2-\tau_\sigma \|\theta-\theta'\|_1, \mbox{for all} \quad \theta,\theta' \in \Omega'$$
For a board class of log-linear models, the RSC condition holds with $\tau_\sigma=c\frac{\log p}{n}$. Therefore, we obtain same results as those of Lasso, modulus change of  constants. For more details of  RSC conditions in generalized linear model, we refer the readers to \cite{negahban2009unified}.

\section{ Empirical Result}
We report the experimental results in this section to validate our theorem that SAGA can enjoys the linear convergence rate without strong convexity or even without convexity. We did experiment both on synthetic and real datasets and compare SAGA with several candidate algorithms. The experiment setup is similar to \cite{qu2016linear}. Due to space constraints, some addition simulation results are presented in the appendix. The algorithms tested  are Prox-SVRG \cite{xiao2014proximal}, Prox-SAG  which is a proximal version of the algorithm in \citet{schmidt2013minimizing}, proximal stochastic gradient (Prox-SGD), regularized dual averaging method (RDA) \cite{xiao2010dual} and the proximal full gradient method (Prox-GD) \cite{nesterov2013introductory}. For the algorithms with a constant learning rate (i.e., SAGA,Prox-SAG, Prox-SVRG, Prox-GD), we tune the learning rate from an exponential grid  $\{ 2, \frac{2}{2^1},...,\frac{2}{2^{12}} \}$ and chose the one with best performance. Below are some remarks on the candidate algorithms.

\begin{itemize}
	\item The linear convergence of SVRG in our setting has been proved in \cite{qu2016linear}. 
	\item We adapt SAG to its Prox version. To the best of our knowledge, the convergence of  Prox-SAG has not been established. In addition, it is not known whether the Prox-SAG converges or not although it works well in the experiment.
	\item	The step size in Prox-SGD is $\eta_k=\eta_0/\sqrt{k}$. The step size for RDA is $\beta_k=\beta_0 \sqrt{k}$ suggested in \cite{xiao2010dual}. $\beta_0$ and $\eta_0$ are chosen from exponential grid ( with power of 10) with the best performance.
\end{itemize}

\subsection{Synthetic data}

We report the experimental result on Lasso,Group Lasso, Linear regression with SCAD regularization and Corrected Lasso. 
\subsubsection{Lasso}

The feature vector $x_i\in \mathbb{R}^{p}$ are drawn independently from $N(0,\Sigma)$, where we set $\Sigma_{ii}=1, ~~\text{for}~~ i=1,...,p$ and $\Sigma_{ij}=b, ~~\text{for}~~ i\neq j$.  The responds $y_i$ is generated as follows: $ y_i=x_i^T \theta^*+\xi_i$, and  $\theta^*\in \mathbb{R}^p$ is a sparse vector with cardinality $r$,  where the non-zero entries are $\pm 1 $ drawn from the Bernoulli distribution with probability $0.5$. The noise $\xi_i$ follows the standard normal distribution. The parameter of regularizer is set to be $\lambda=0.05$. We set $p=5000$, $n=2500$ and vary the value on $r$ and $b$. The results are shown in  Figure \ref{Fig:lasso}.

\begin{figure}
	\begin{subfigure}[b]{0.23\textwidth}
		\includegraphics[width=\textwidth]{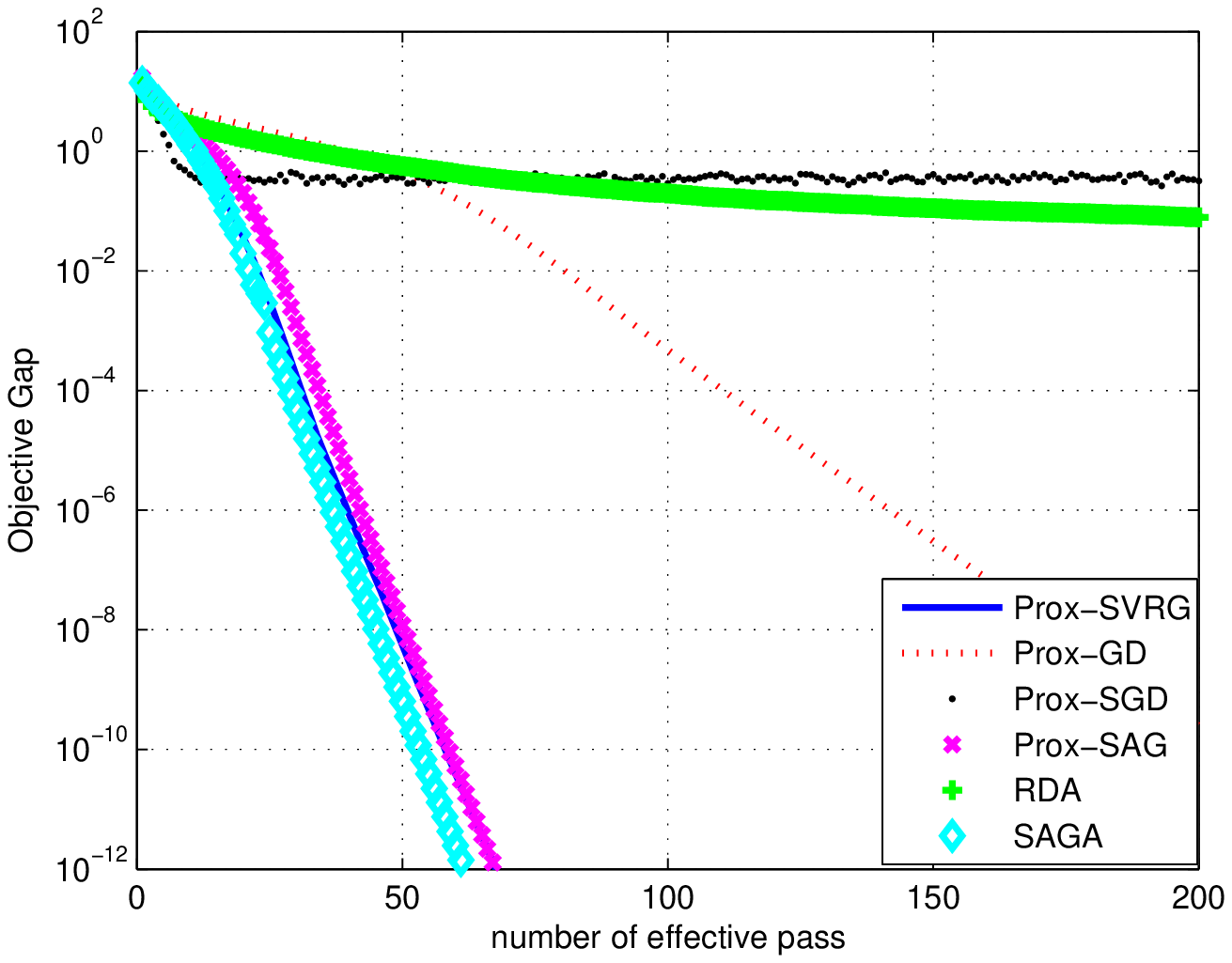}
		\caption{r=50, b=0}
	\end{subfigure}
	\begin{subfigure}[b]{0.23\textwidth}
		\includegraphics[width=\textwidth]{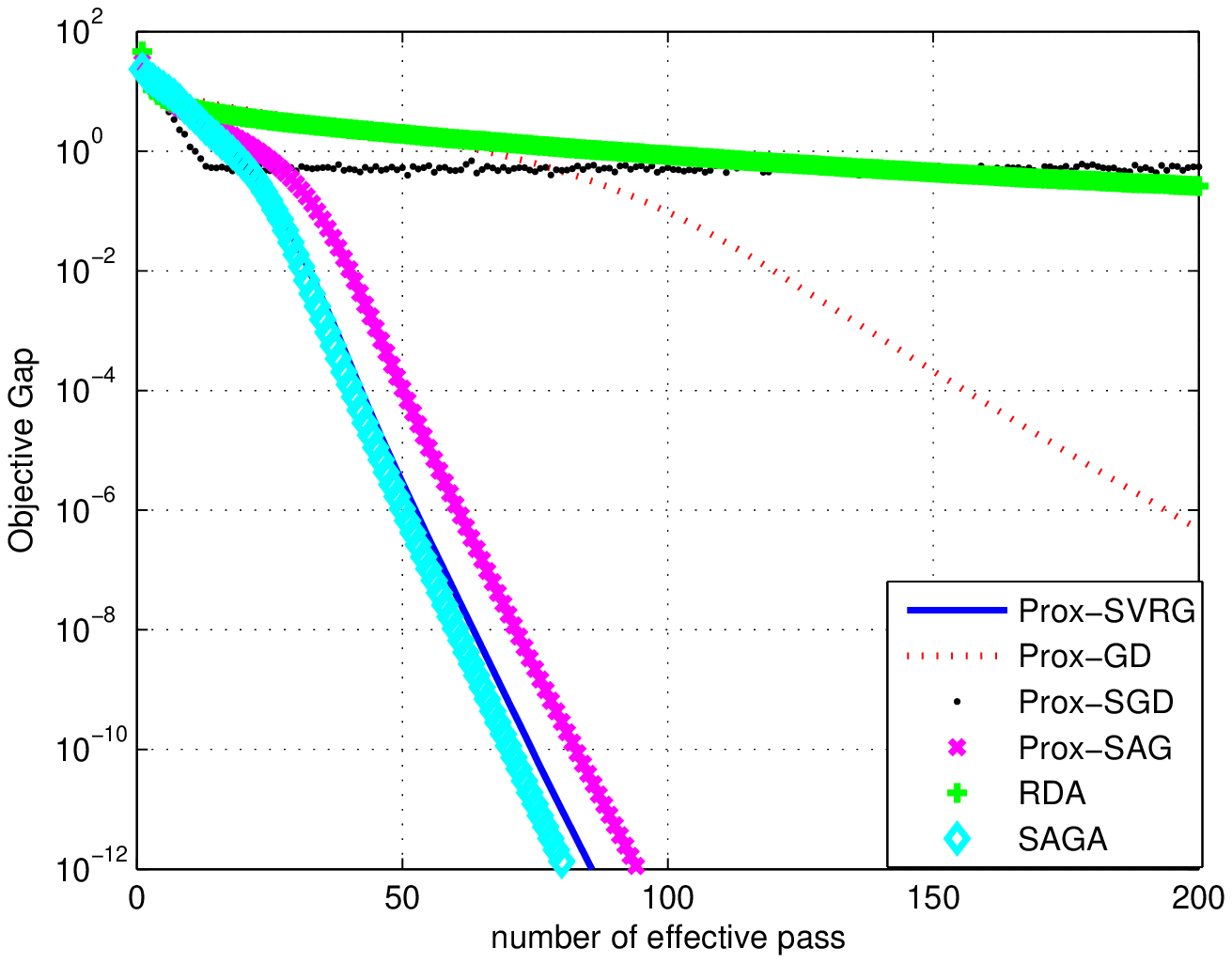}
		\caption{r=100,b=0}
	\end{subfigure}
	\begin{subfigure}[b]{0.23\textwidth}
		\includegraphics[width=\textwidth]{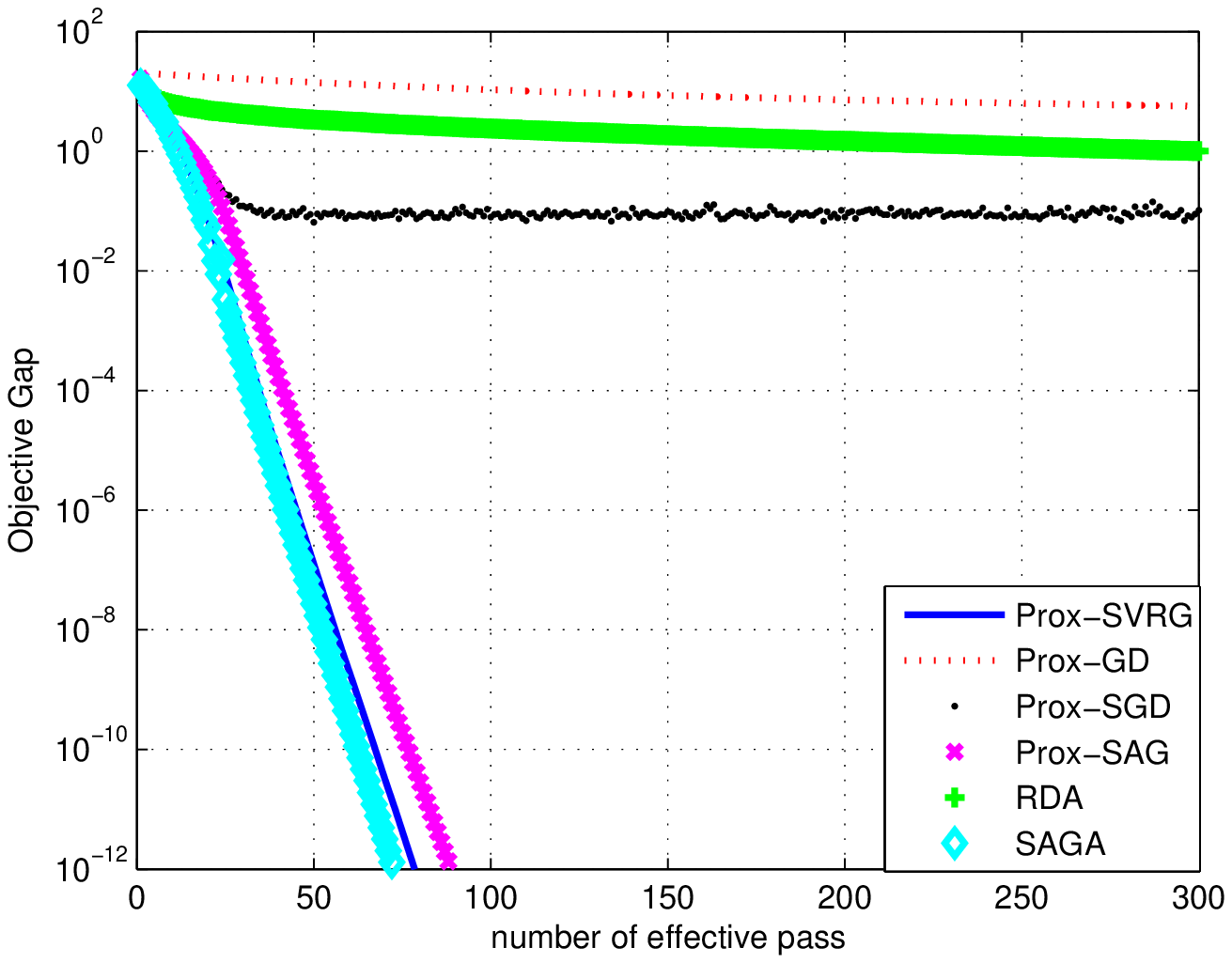}
		\caption{r=50,b=0.1}
	\end{subfigure}\quad
	\begin{subfigure}[b]{0.23\textwidth}
		\includegraphics[width=\textwidth]{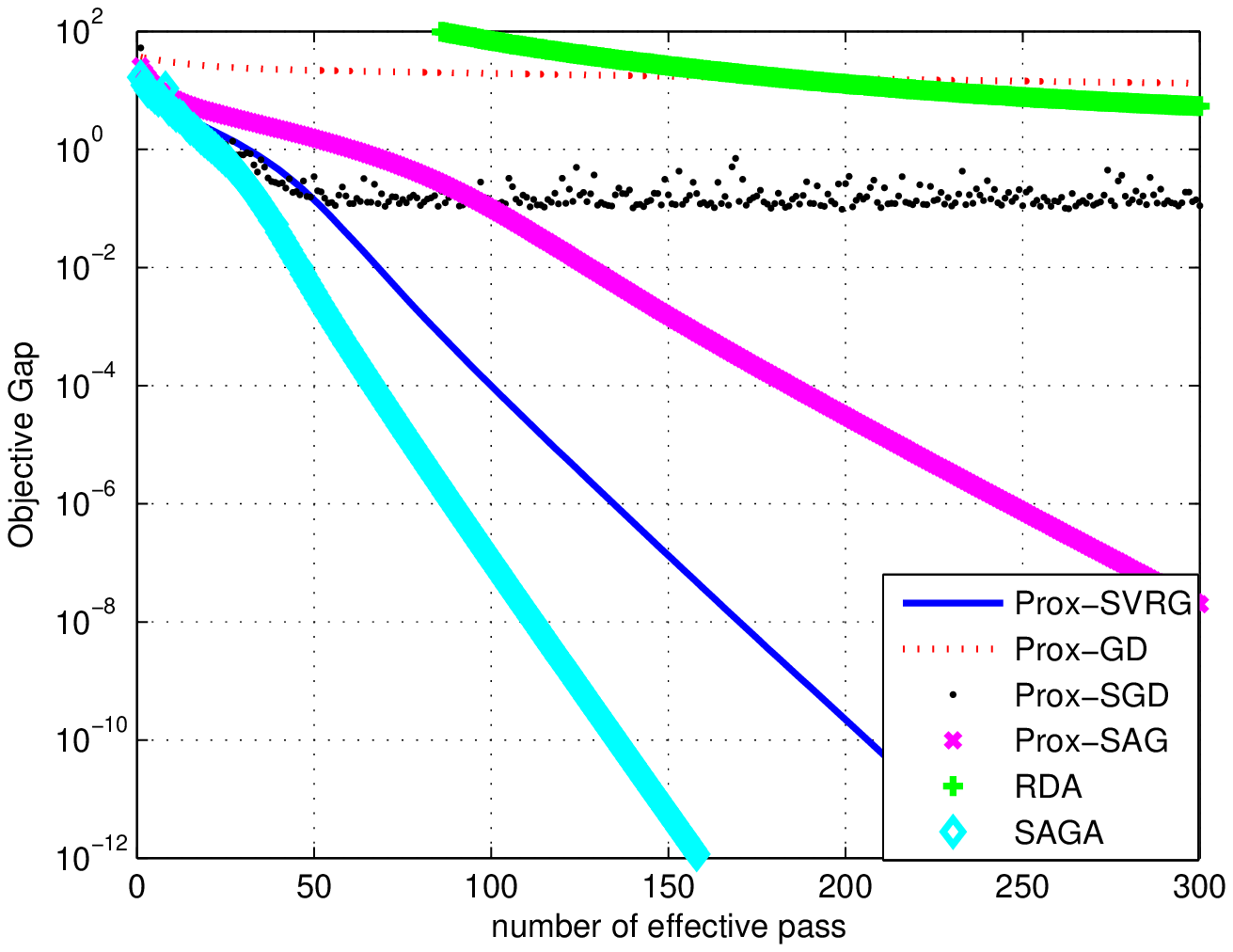}
		\caption{r=100,b=0.4}
	\end{subfigure}
	\caption{Comparison between six algorithms on Lasso. The x-axis is the number of passes over the dataset, and the y-axis is the objective gap $G(\theta^k)-G(\hat{\theta})$ with a log scale. } \label{Fig:lasso}
\end{figure}

Figure \ref{Fig:lasso} demonstrates that SAGA, Prox-SVRG and Prox-SAG enjoy a linear convergence rate in all settings. In the most challenging setup ($r=100,b=0.4$), SAGA outperforms Prox-SVRG and Prox-SAG. The batched method, Prox-GD converges linearly when $b=0$ and does not work well when $b=0.1$ and $b=0.4$. It is possibly because the condition number is large when $b\neq 0.$  We also observe that SAGA with sparser $r$ converges faster, which matches our Theorem \ref{Theorem:convex}. As we discussed in the remarks of Theorem \ref{Theorem:convex}   , $\frac{1}{\kappa}$ depends on $\bar{\sigma}/L$ and smaller $r$ cause larger $\bar{\sigma}$ thus faster convergence rate.

\subsubsection{Group Lasso}

We generate the observation $ y_i=x_i^T \theta^*+\xi_i$ with the feature vectors independently sampled from $N(0,\Sigma)$, where $\Sigma_{ii}=1$ and $\Sigma_{ij}=b, i\neq j$. The cardinality of non-zero group is $s_{\mathcal{G}}$, and the non-zero entries are sampled uniformly from $[-1, 1]$.  We vary the values of $b$, group size $m$ and group sparsity $s_{\mathcal{G}}$ and report the results in Figure \ref{Fig:gp_lasso} .
\begin{figure}
	\begin{subfigure}[b]{0.23\textwidth}
		\includegraphics[width=\textwidth]{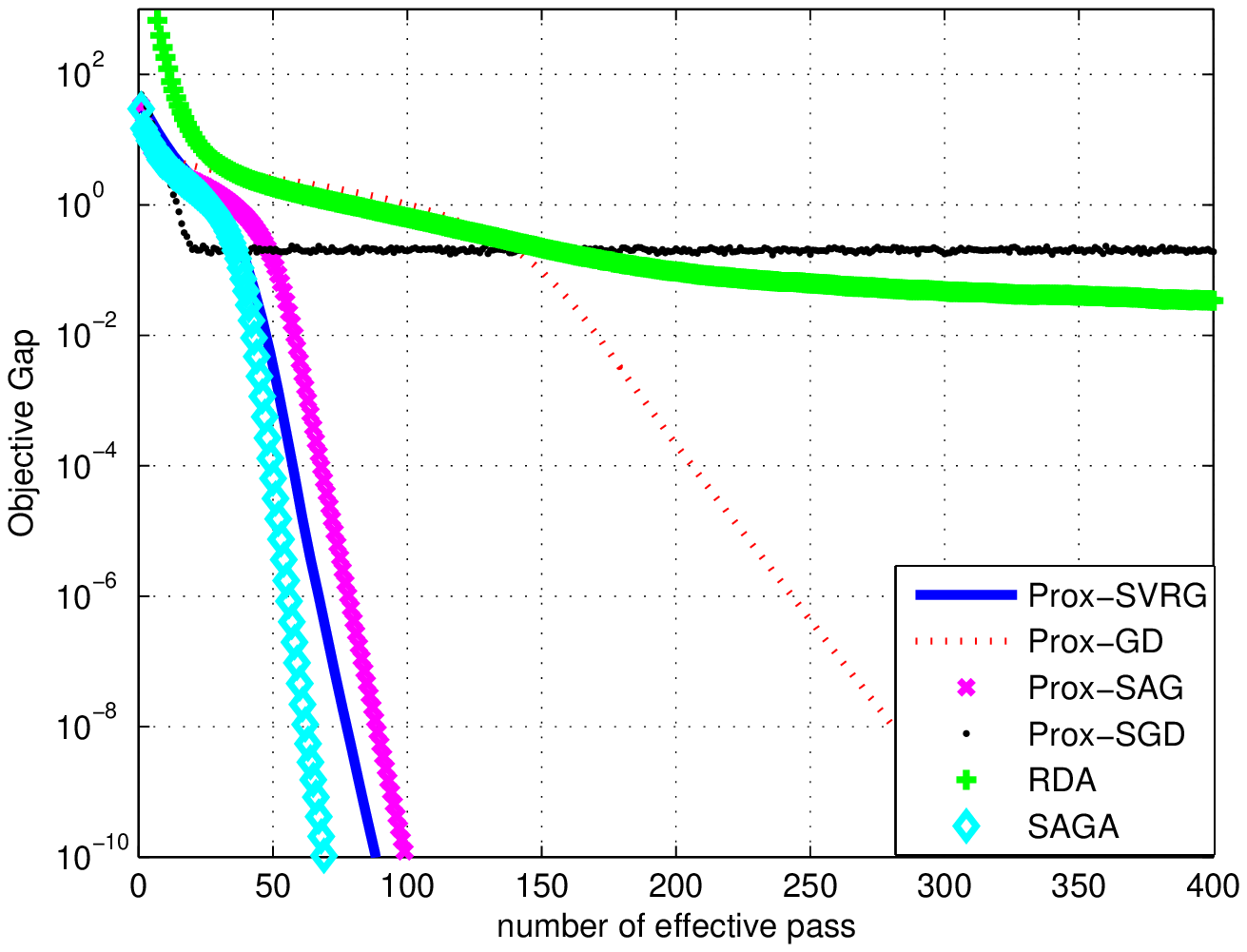}
		\caption{m=10, $s_{\mathcal{G}}=10, b=0$}
	\end{subfigure}
	\begin{subfigure}[b]{0.23\textwidth}
		\includegraphics[width=\textwidth]{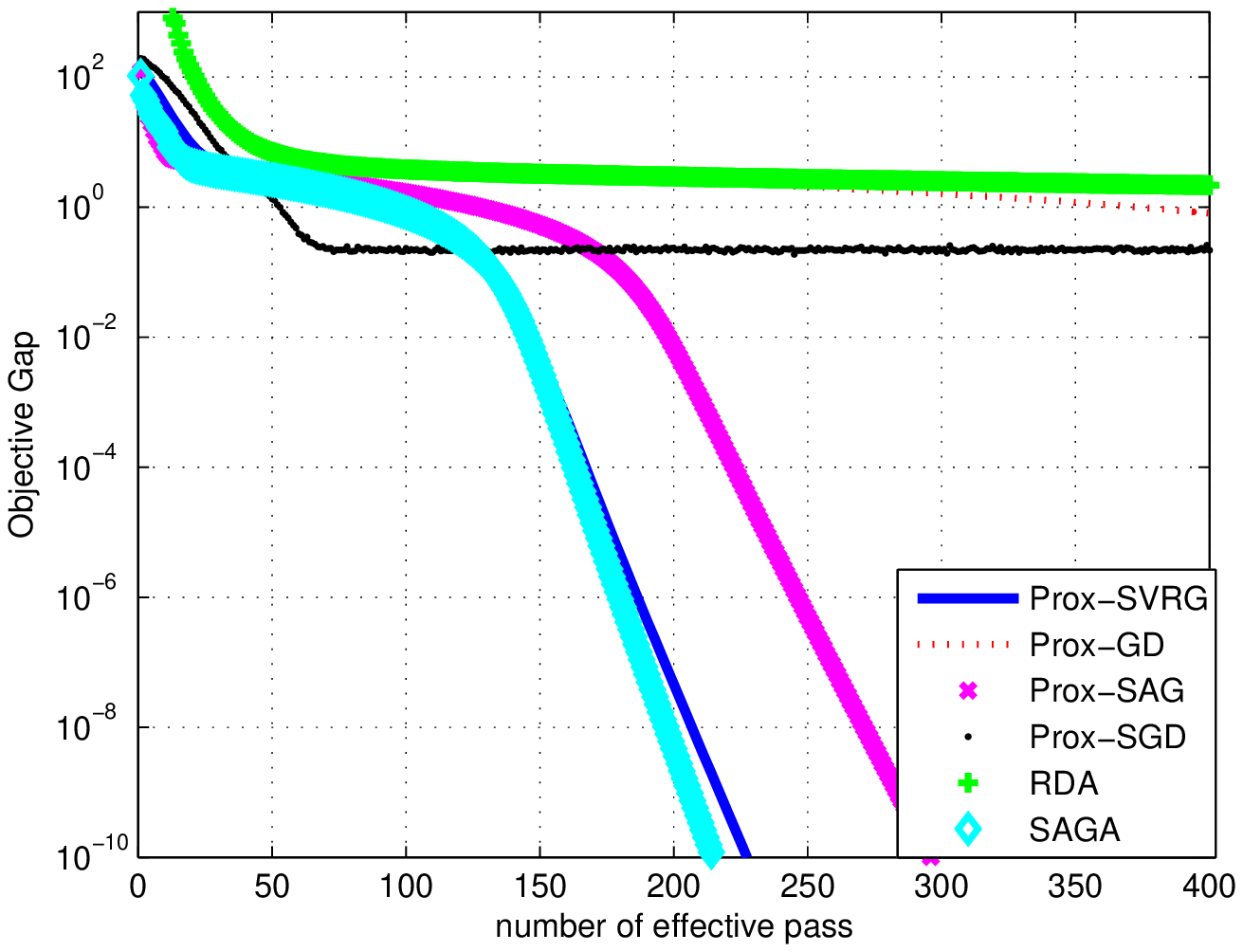}
		\caption{m=20, $s_{\mathcal{G}}=20, b=0$}
	\end{subfigure}
	\begin{subfigure}[b]{0.23\textwidth}
		\includegraphics[width=\textwidth]{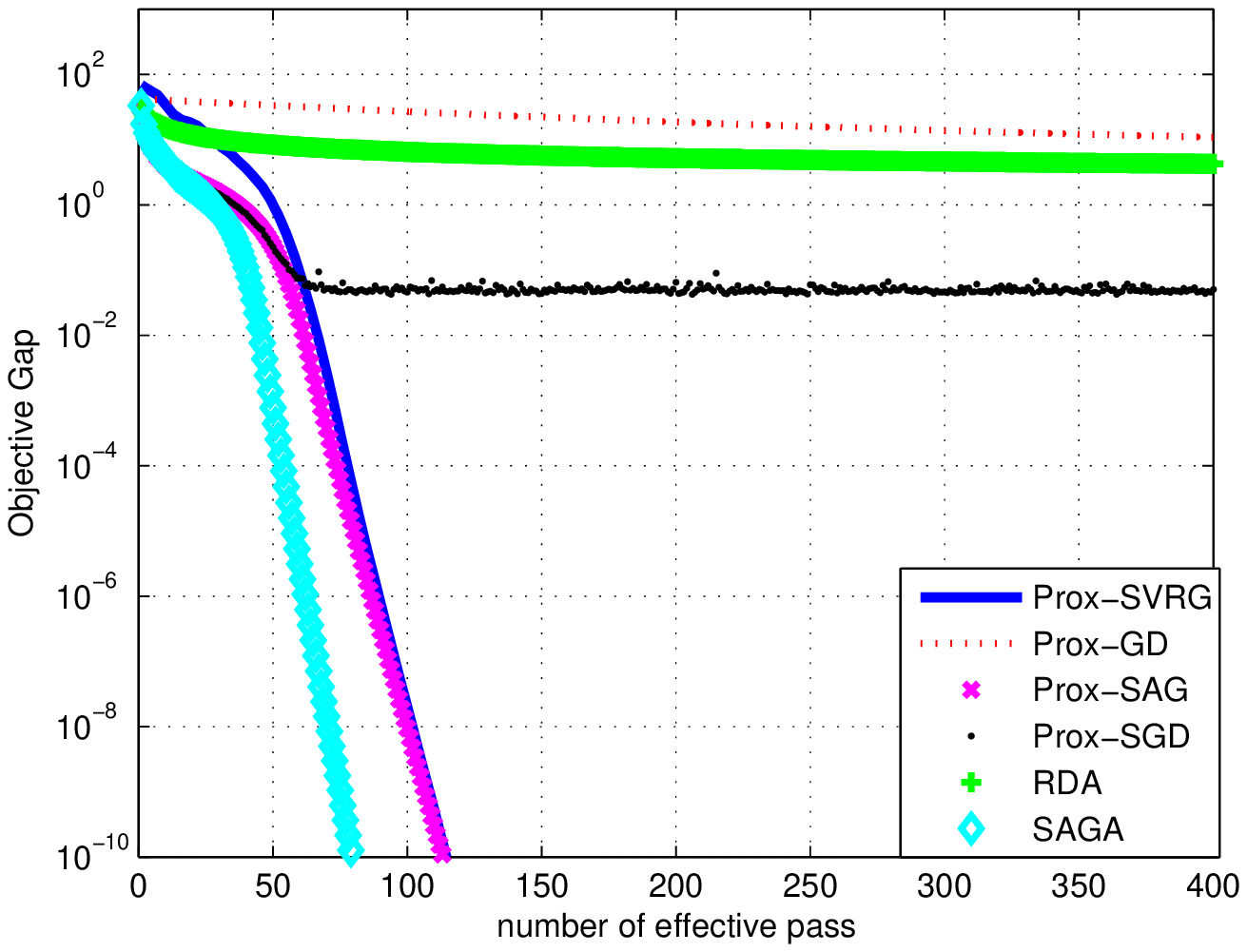}
		\caption{m=10, $s_{\mathcal{G}}=10, b=0.1$}
	\end{subfigure}~~~
	\begin{subfigure}[b]{0.23\textwidth}
		\includegraphics[width=\textwidth]{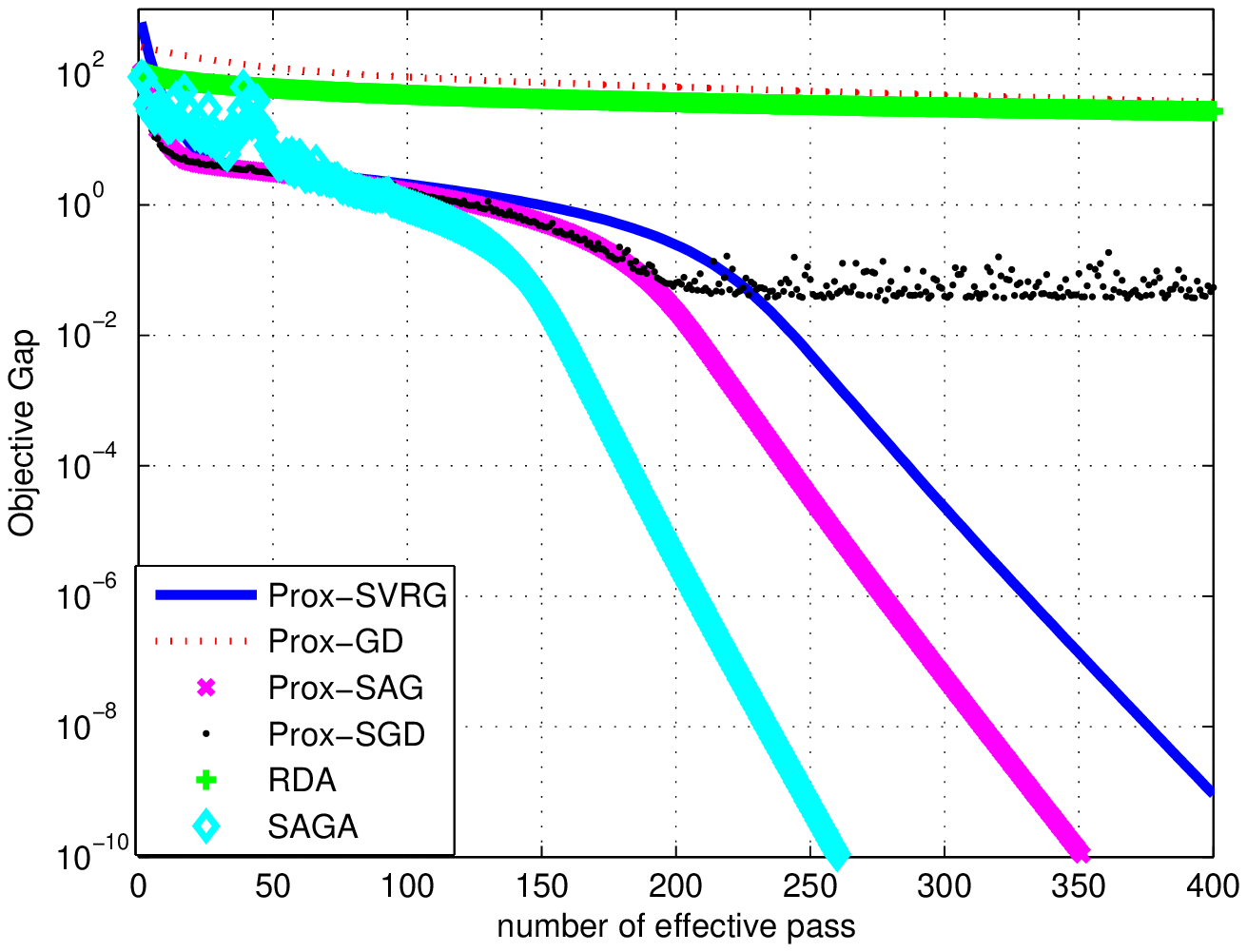}
		\caption{m=20, $s_{\mathcal{G}}=20, b=0.4$}
	\end{subfigure}
	
	\caption{Comparison between six algorithms on group Lasso. The x-axis is the number of passes over the dataset; the  y-axis is the objective gap $G(\theta^k)-G(\hat{\theta})$ with a log scale.} \label{Fig:gp_lasso}
\end{figure}

In all settings, SAGA, Prox-SVRG and Prox-SAG performs well. In the challenging setup ($m=20, s_{\mathcal{G}}=20$), SAGA outperforms the other two.  Prox-GD work with slower rate in the setting ($m=10, s_{\mathcal{G}}=10,b=0$), while its performance deteriorates in other three settings. Prox-GD and RDA have large optimality gap even after long time running. We have similar observation as that in Lasso, i.e., smaller $m$ and $s_\mathcal{G}$ lead to faster convergence. Again, it can be explained by the dependence of $\bar{\sigma}$ on $m$ and $s_\mathcal{G}$.

\subsubsection{Corrected Lasso}

We generate data as follows: $ y_i=x_i^T\theta^*+\xi_i$, where each data point $x_i\in \mathbb{R}^p$ is drawn from normal distribution $N(0,I)$, and the noise $ \xi_i$ is   drawn from $N(0,1)$. The coefficient $\theta^*$ is sparse with cardinality $r$, where the non-zero coefficient equals to $\pm 1$ generated from the Bernoulli distribution with parameter $0.5$. We set covariance matrix $\Sigma_w=\gamma_w I$. We choose $\lambda=0.05$ in the formulation.
\begin{figure}[h]
	\begin{subfigure}[b]{0.45\textwidth}
		\centering
		\includegraphics[width=\textwidth]{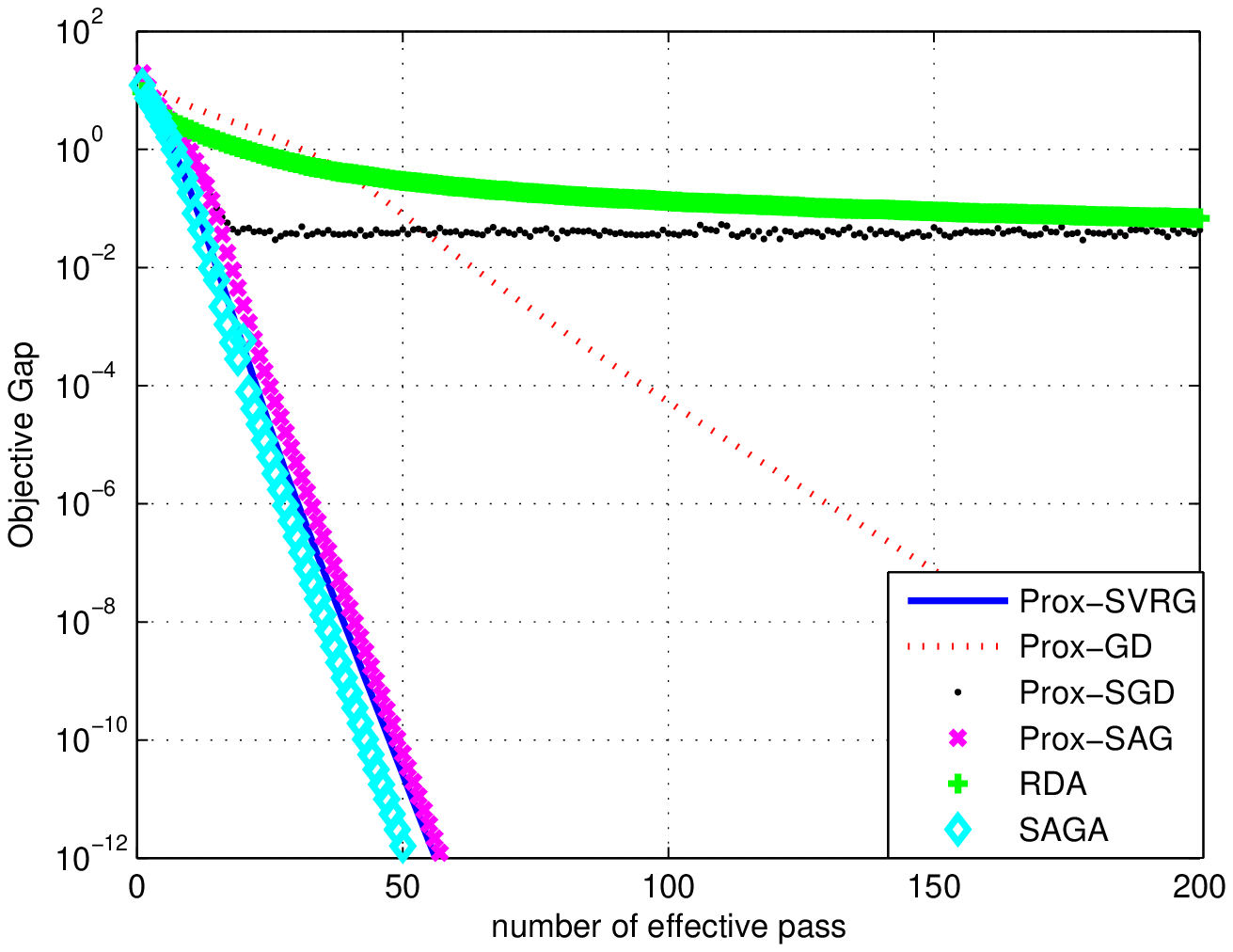}
		\caption{$n=2500, p=3000, r=50, \gamma_w=0.05$}
	\end{subfigure}
	\begin{subfigure}[b]{0.45\textwidth}
		\centering
		\includegraphics[width=\textwidth]{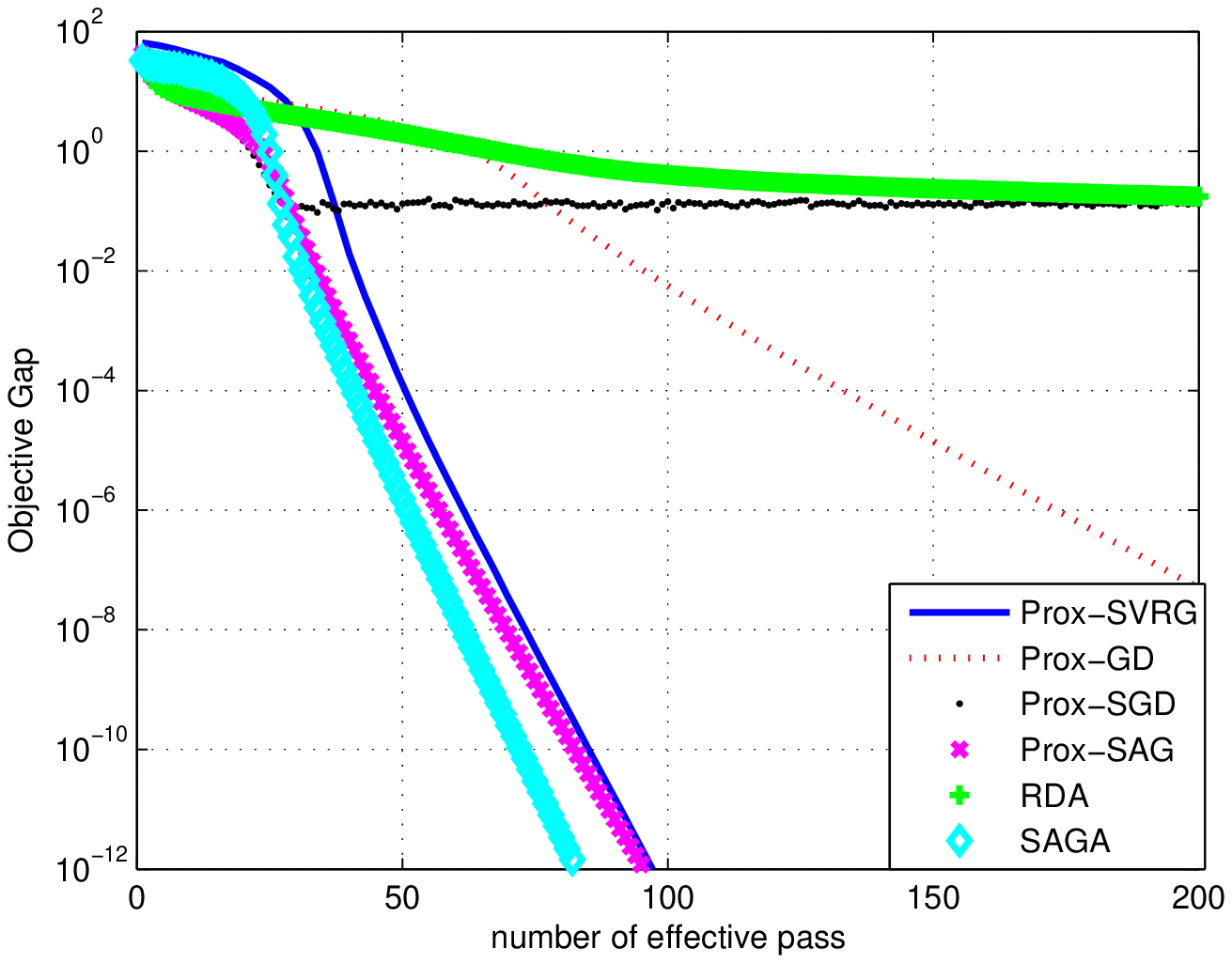}
		\caption{$n=2500,p=5000, r=100,\gamma_w=0.1$}
	\end{subfigure}
	\caption{The x-axis is the number of pass over the dataset. y-axis is the objective gap $G(\theta^k)-G(\hat{\theta})$ with  log scale. We try two different settings. }\label{fig:corrected_lasso}
\end{figure}

Figure \ref{fig:corrected_lasso} reports the result on Corrected Lasso.  In both settings, SAGA, Prox-SVRG and Prox-SAG work well and have similar performance. Prox-GD also enjoys the linear convergence rate but with a slower ratio. SGD and RDA have a large optimality gap even after 200 iterations,

\subsubsection{SCAD}
The way to generate data is same with Lasso. Here  $x_i\in \mathbb{R}^p$ is drawn from normal distribution $N(0,2I)$ (Here We choose $2I$ to satisfy the requirement of $\bar{\sigma}$ and $\mu$ in our Theorem, although if we choose $N(0, I)$, the algorithm still works. ). $\lambda=0.05$ in the formulation.  We present the result in Figure \ref{fig:SCAD}, for two settings on $n$, $p$, $r$, $\zeta$.

\begin{figure}[h]
	\begin{subfigure}[b]{0.45\textwidth}
		\centering
		\includegraphics[width=\textwidth]{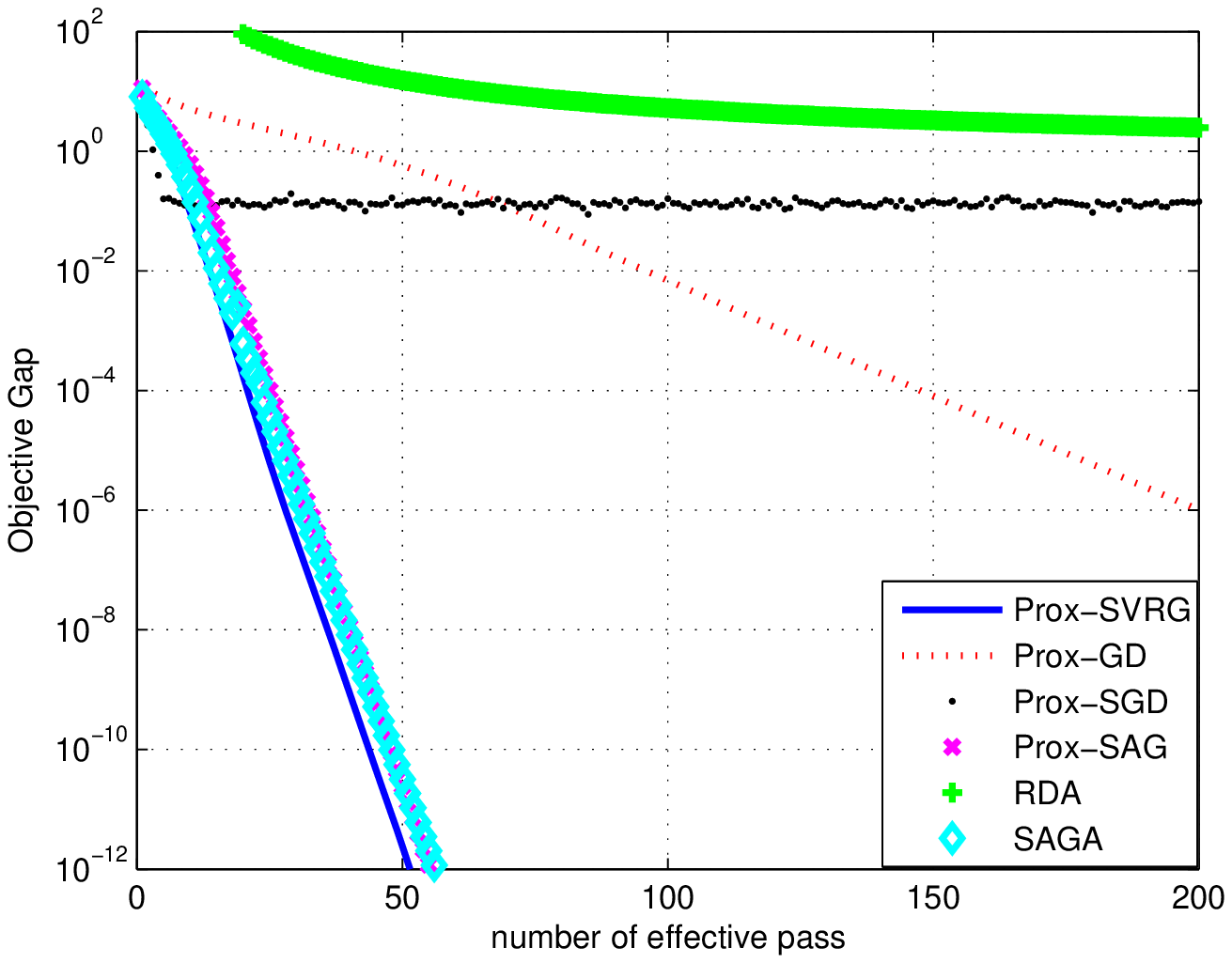}
		\caption{$n=3000, p=2500, r=30, \zeta=4.5 $ }
	\end{subfigure}
	\begin{subfigure}[b]{0.45\textwidth}
		\centering
		\includegraphics[width=\textwidth]{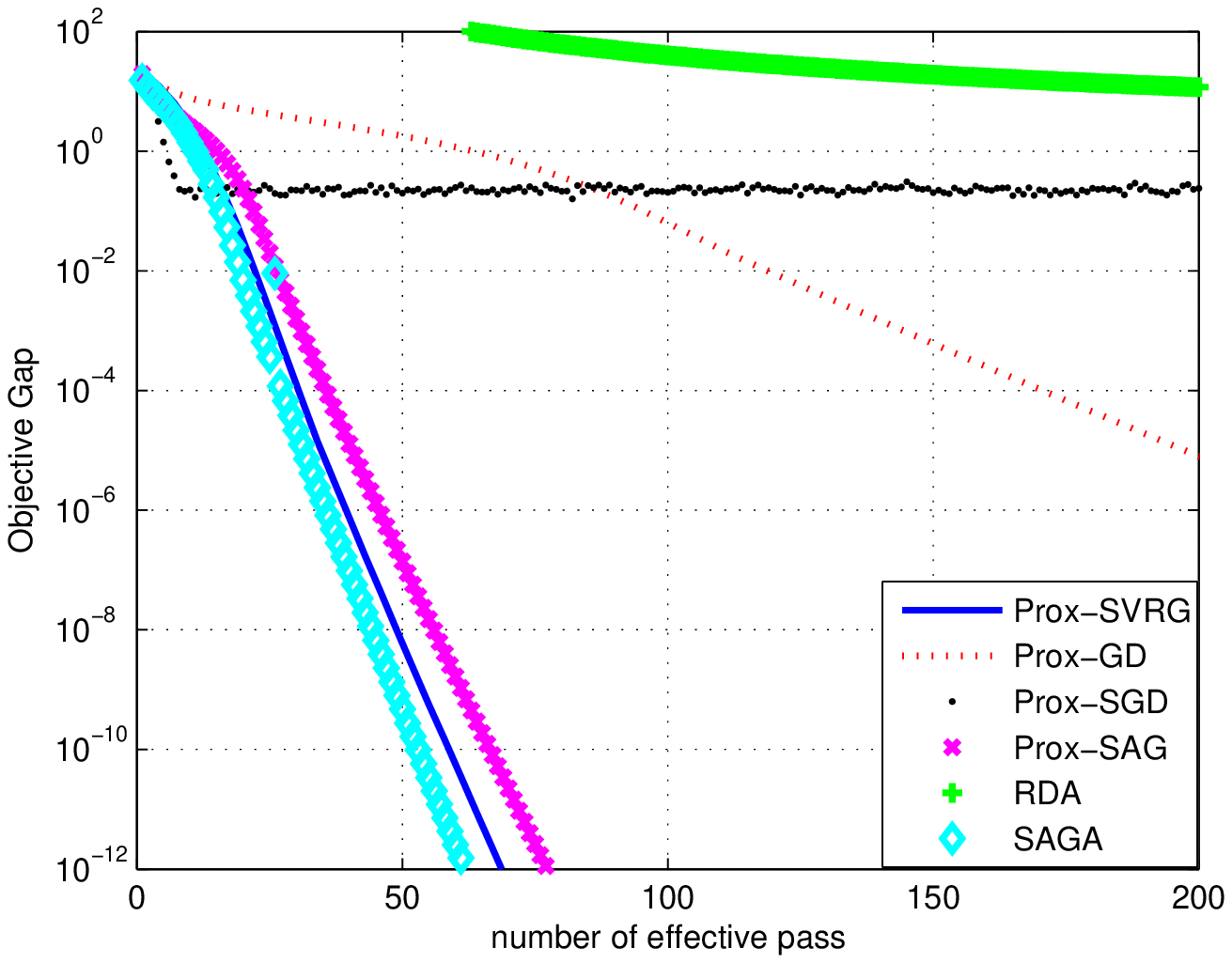}
		\caption{$n=2500, p=5000, r=50, \zeta=3.7 $  }
	\end{subfigure}
	\caption{The x-axis is the number of pass over the dataset. y-axis is the objective gap $G(\theta^k)-G(\hat{\theta})$ with  log scale. } \label{fig:SCAD}
\end{figure} 

Figure \ref{fig:SCAD} reports the simulation result on linear regression with SCAD regularizer. It is easy to see SAGA, Prox-SVRG and Prox-SAG works well, followed by Prox-GD. RDA and Prox-SGD does not converge well.

\subsection{Real datasets}

  \subsubsection{Sparse Classification Problem }
  In this section, we evaluate the performance of the algorithms when solving the logistic regression with $\ell_1$ regularization:
  $$ \min_{\theta} \sum_{i=1}^{n} \log (1+\exp (-y_ix_i^T\theta))+\lambda \|\theta\|_1.$$
  \begin{figure}
  	\begin{subfigure}[b]{0.45\textwidth}
  		\includegraphics[width=\textwidth]{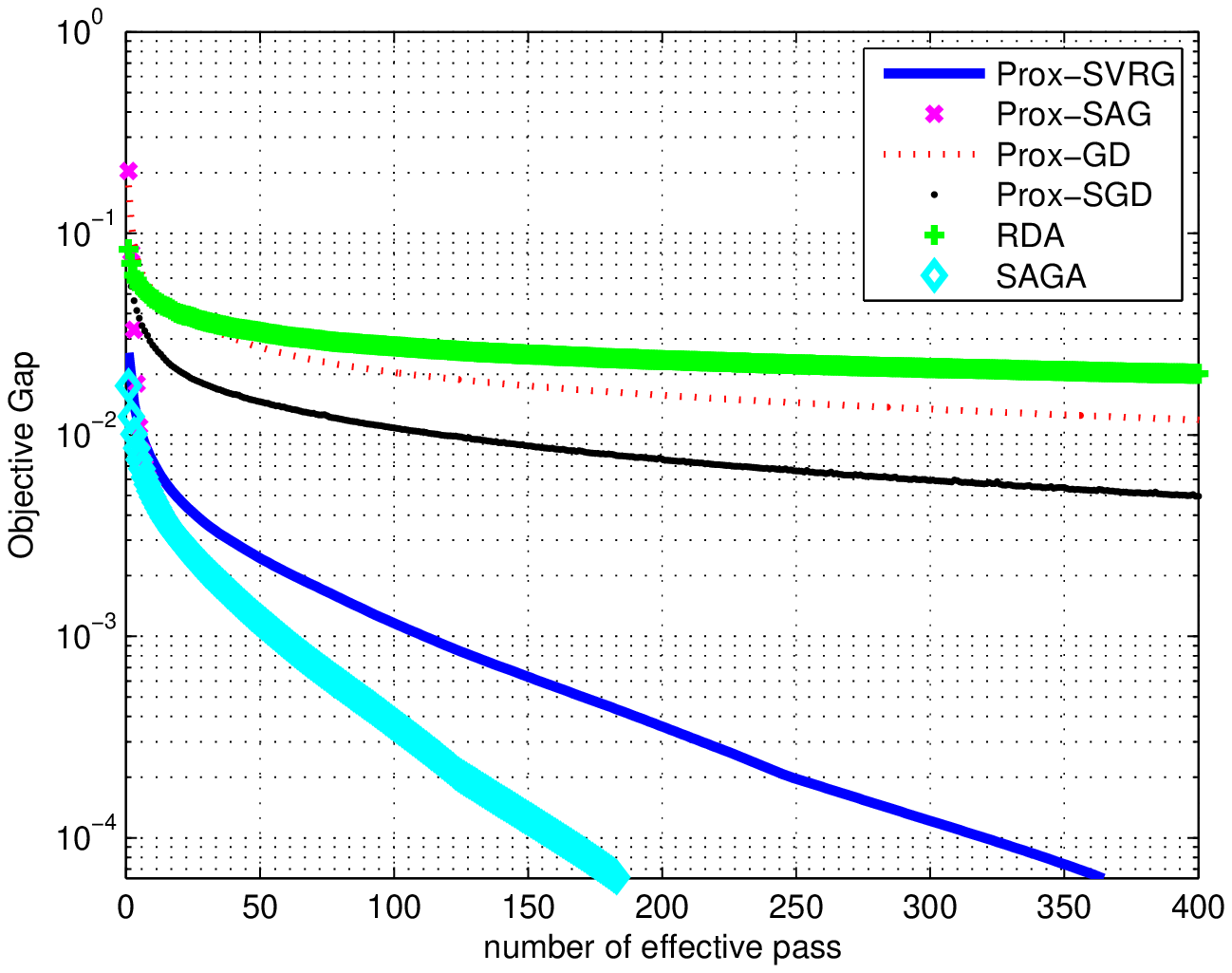}
  		\caption{ Sido0}\label{sido0}	
  	\end{subfigure}	
  	\begin{subfigure}[b]{0.45\textwidth}
  		\includegraphics[width=\textwidth]{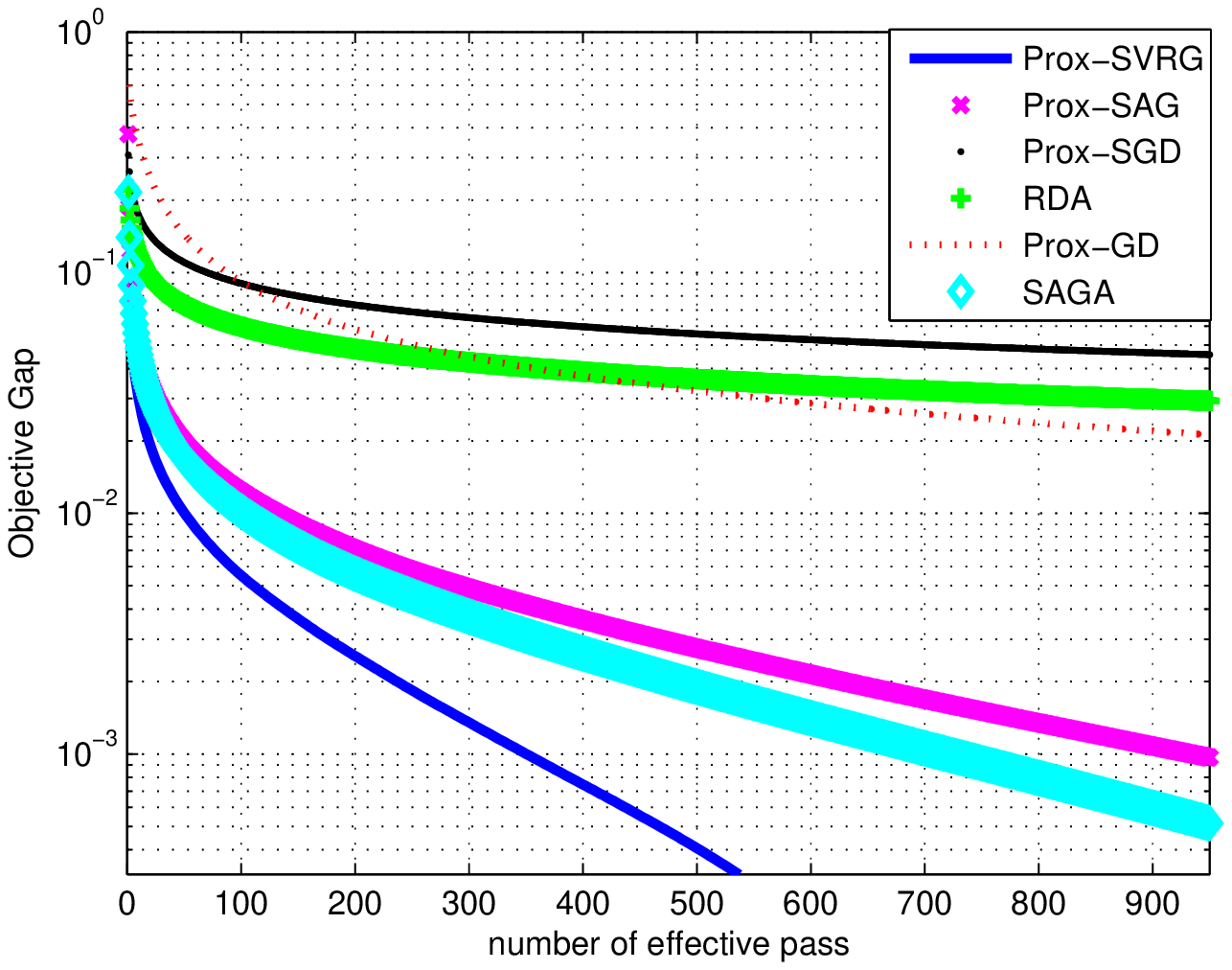}
  		\caption{Rcv1}	\label{rcv1}	
  	\end{subfigure}	
  	\caption{Different methods on sido0 and rcv1 dataset. The x-axis is the number of pass over the dataset, y-axis is the objective gap in the log-scale. }
  \end{figure}
  
  We conduct experiments on two real-world data sets:  sido0 ($n=12678, p=4932$)  \cite{SIDO} and rcv1 ($n=20242,p=47236$) \cite{lewis2004rcv1}. The regularization parameters are set as $\lambda=2\cdot 10^{-5}$ in rcv1 and $\lambda=10^{-4}$ in sido0, as suggested in \citet{xiao2014proximal}.

  
  Figure \ref{sido0} shows the results of the algorithms on the sido0 \cite{SIDO} dataset.  On this dataset, SAGA performs best and then followed by Prox-SAG (some part are overlapped with Prox-SVRG ) and then Prox-SVRG. The performance of  Prox-GD is even worse   than   Prox-SGD.   RDA converges the slowest. In Figure \ref{rcv1}, we report the performance of different algorithms on rcv1 dataset \cite{lewis2004rcv1}.  In this problem, Prox-SVRG performs best, and followed by SAGA, and then Prox-SAG. We observe that Prox-GD converges much slower, albeit in theory it should converges with a linear rate \cite{agarwal2010fast}, possibly because its contraction factor is close to one in this case. Prox-SGD and RDA converge slowly due to the variance in the stochastic gradient. The objective gaps of them remain significant even after 1000 passes of the whole dataset. 
\subsubsection{Sparse Regression Problem}
\begin{figure}[t]
	\includegraphics[width=0.32\textwidth]{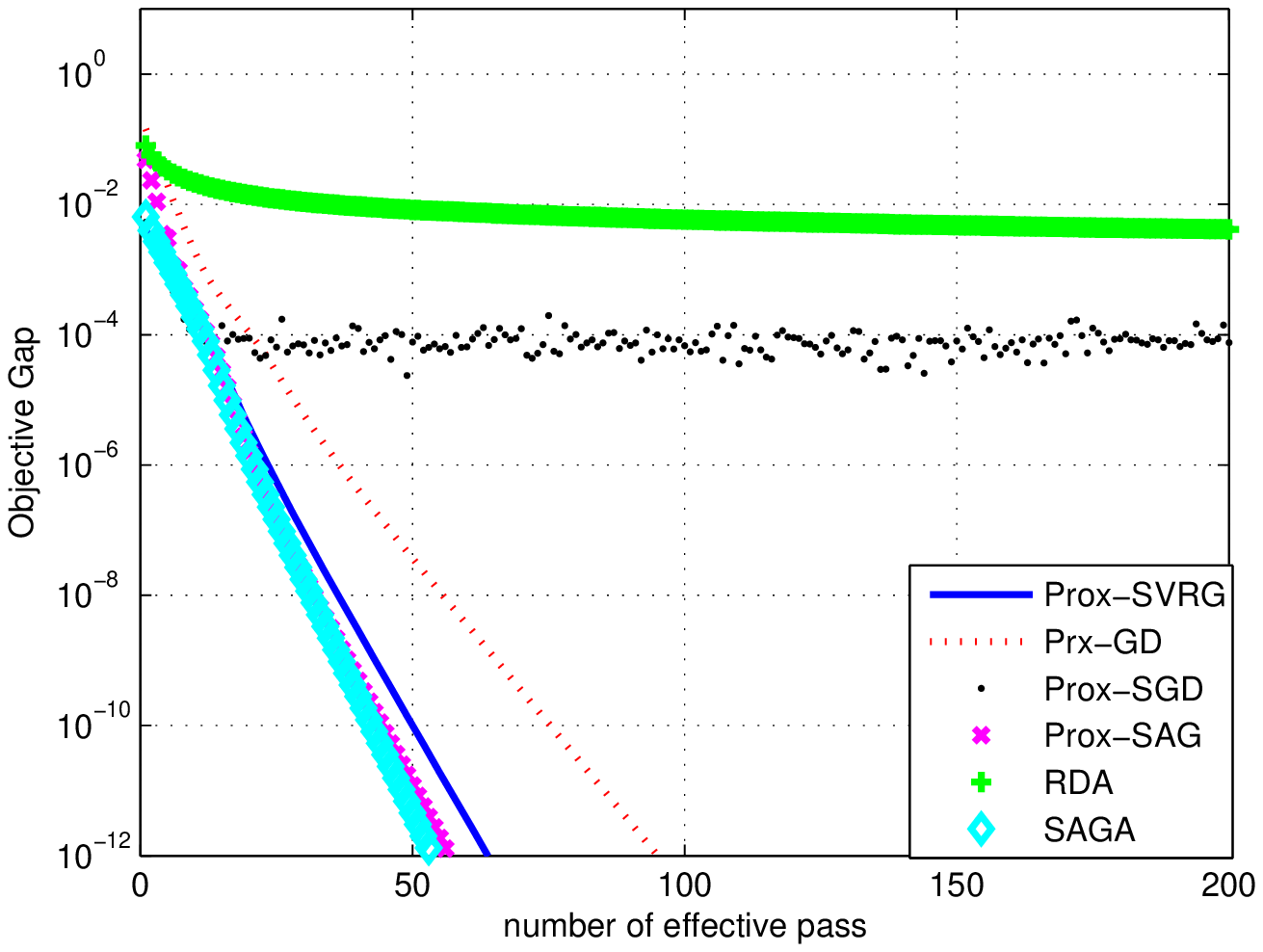}
	\includegraphics[width=0.32\textwidth]{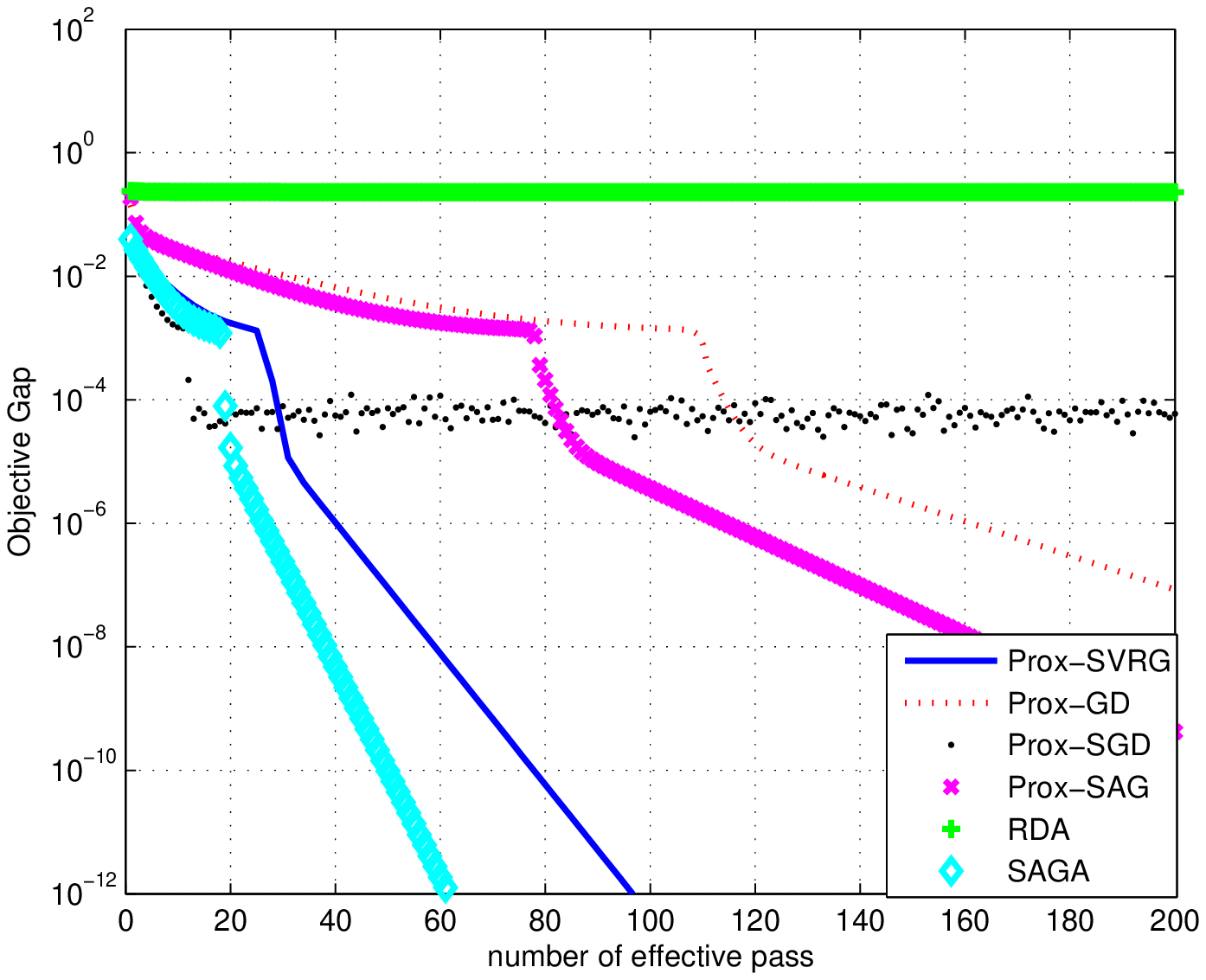}
	\includegraphics[width=0.32\textwidth]{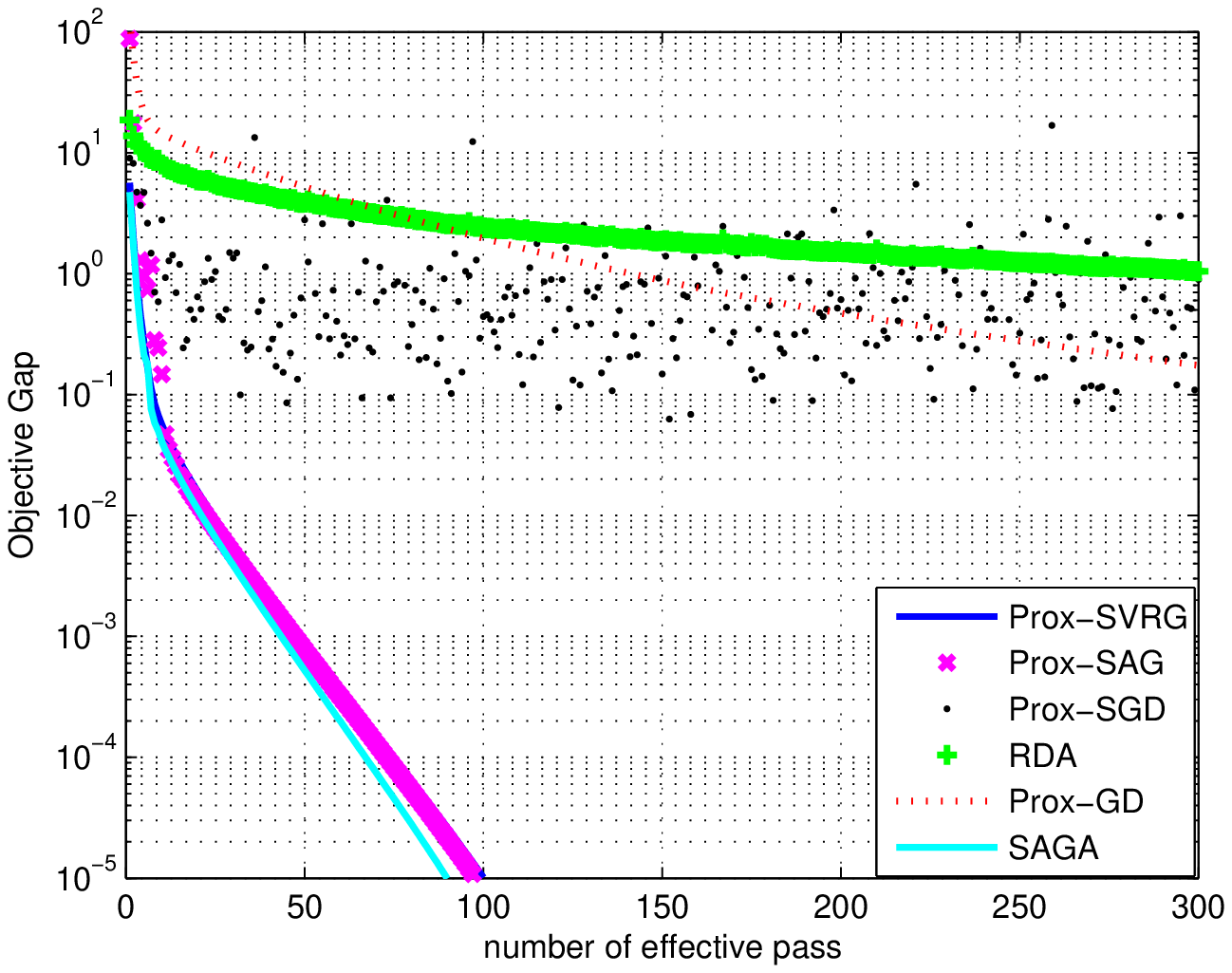}
	\caption{Six different algorithms on Lasso (left), linear regression  with SCAD regularization (middle) and Group Lasso (right).  The x-axis is the number of pass over the dataset, y-axis is the objective gap in the log-scale .}\label{Fig:regression problem}
\end{figure}
In this section, we consider regression problem on three different problems, namely Lasso, linear regression with SCAD regularization  and Group Lasso and report the results in Figure \ref{Fig:regression problem}. For Lasso and linear regression with SCAD regularization, we test all algorithms on IJCNN1 dataset ($n=49990,p=22$)  \cite{IJCNN1}. In particular, we set $\lambda=0.02$ in Lasso formulation and $\lambda=0.02$ and $\zeta=5$ in linear regression with SCAD regularization.
As to the group sparse regression problem, we conduct the experiment the Boston Housing dataset ($n=506,p=13$) \cite{uci:2013}.  As suggested in \citet{swirszcz2009grouped,xiang2014simultaneous}, to take into account the non-linear relationship between variables and response, up to third-degree polynomial expansion is applied on each feature. In particular, terms $x$, $x^2$ and $x^3$ are grouped together. We consider group Lasso model on this problem with $\lambda=0.1$.

It is easy to see that  for the Lasso problem, SAGA, Prox-SAG and Prox-SVRG have almost identical performance, and Prox-GD converges with linear rate but with slower rate. As to linear regression with SCAD regularization, SAGA performs best in this dataset and then followed by Prox-SVRG, Prox-SAG and Prox-GD. For both problems, Prox-SGD converges faster at the beginning but quickly slows down and eventually has a large optimality gap, possibly due to the variance in the gradient estimation.  RDA seems does not work (for both Lasso and  SCAD) in this dataset. In Group Lasso, SAGA, Prox-SVRG and prox-SAG have almost same performance. RDA and Prox-GD converge slowly. Prox-SGD does not converge and its value oscillates between 0.1 and 1. 

\section{Conclusion and future work}

In this paper, we analyze SAGA on a class of  non-strongly convex and non-convex problem and provide linear convergence analysis under the RSC condition. 

 \appendix
\section{Proofs}\label{app.proof}

In this section, we give the proof to all theorems and corollaries

\subsection{SAGA with convex objective function}

We start the proof with some technical Lemmas.

The following lemma is the theorem 2.1.5 in \cite{nesterov1998introductory}. 
\begin{lemma}\label{lemma.smooth}
	if $f(\theta)$ is convex and $L$ smooth, then we have 
	
	$$\|\nabla f(\theta_1)-\nabla f(\theta_2)\|_2^2\leq  2L [f(\theta_1)-f(\theta_2)- \langle \nabla f(\theta_2),\theta_1-\theta_2  \rangle] $$
\end{lemma}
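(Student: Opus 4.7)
The plan is to use the standard Nesterov trick: introduce the auxiliary function $\phi(\theta) := f(\theta) - \langle \nabla f(\theta_2), \theta \rangle$. Since $f$ is convex and we subtract a linear function, $\phi$ remains convex. Moreover, $\nabla \phi(\theta) = \nabla f(\theta) - \nabla f(\theta_2)$, so $\phi$ inherits $L$-smoothness from $f$, and crucially $\nabla \phi(\theta_2) = 0$. Hence $\theta_2$ is a global minimizer of $\phi$.

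Next I would upper-bound $\phi(\theta_2)$ by evaluating $\phi$ at the gradient step $\theta_1 - \tfrac{1}{L} \nabla \phi(\theta_1)$, using the standard descent lemma that follows from $L$-smoothness:
$$\phi\!\left(\theta_1 - \tfrac{1}{L}\nabla\phi(\theta_1)\right) \leq \phi(\theta_1) - \tfrac{1}{2L}\|\nabla \phi(\theta_1)\|_2^2.$$
Since $\theta_2$ minimizes $\phi$, the left side is at least $\phi(\theta_2)$, yielding
$$\phi(\theta_1) - \phi(\theta_2) \geq \tfrac{1}{2L}\|\nabla f(\theta_1) - \nabla f(\theta_2)\|_2^2.$$

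Finally I would unwind the definition of $\phi$: the left side equals $f(\theta_1) - f(\theta_2) - \langle \nabla f(\theta_2), \theta_1 - \theta_2\rangle$, so multiplying through by $2L$ yields the claim. The only subtle point is justifying the descent lemma from $L$-smoothness, which follows from integrating $\nabla \phi$ along the segment from $\theta_1$ to $\theta_1 - \tfrac{1}{L}\nabla\phi(\theta_1)$ and applying Cauchy--Schwarz together with the Lipschitz bound $\|\nabla\phi(u) - \nabla\phi(v)\|_2 \leq L\|u-v\|_2$. Since the result is explicitly quoted from Nesterov's textbook, I expect no substantive obstacle; the main care is simply making sure the auxiliary function argument is stated cleanly so that the roles of $\theta_1$ and $\theta_2$ in the final asymmetric inequality come out correctly.
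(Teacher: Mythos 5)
Your proof is correct: the auxiliary function $\phi(\theta)=f(\theta)-\langle\nabla f(\theta_2),\theta\rangle$ is convex and $L$-smooth with minimizer $\theta_2$, and combining the descent lemma at the point $\theta_1-\tfrac{1}{L}\nabla\phi(\theta_1)$ with $\phi(\theta_2)\leq\phi\bigl(\theta_1-\tfrac{1}{L}\nabla\phi(\theta_1)\bigr)$ gives exactly the claimed inequality. The paper does not prove this lemma at all but simply cites Theorem 2.1.5 of Nesterov's textbook, and your argument is precisely the standard proof of that cited result, so there is nothing to reconcile.
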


The next lemma is a simple extension of a standard property proximal operator with a constraint $\Omega$. It is indeed the Lemma 5 in \cite{qu2016linear}, and we present here for completeness.
\begin{lemma}\label{lemma.prox}
	Define $prox_{h,\Omega}(x)=\arg\min_{z\in \Omega
	} h(z)+\frac{1}{2}\|z-x\|_2^2$, where $\Omega$ is a convex compact set, then $\|prox_{h,\Omega}(x)-prox_{h,\Omega}(y)\|_2\leq \|x-y\|_2 .$
\end{lemma}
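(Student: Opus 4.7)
The plan is to mimic the standard proof that the (unconstrained) proximal operator is non-expansive, adapting it to the case where the minimization is restricted to a convex set $\Omega$. The key observation is that the constrained problem $\min_{z\in\Omega} h(z) + \tfrac{1}{2}\|z-x\|_2^2$ is equivalent to $\min_z (h(z) + \iota_\Omega(z)) + \tfrac{1}{2}\|z-x\|_2^2$, where $\iota_\Omega$ is the convex indicator of $\Omega$; since $h + \iota_\Omega$ is still a (possibly extended-real-valued) convex function, the usual firm non-expansiveness of the proximal operator of a convex function applies directly. That is the quickest route, but I would write it out via first-order optimality conditions so that the lemma is self-contained and does not rely on recognizing the indicator trick.

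Concretely, let $u = \text{prox}_{h,\Omega}(x)$ and $v = \text{prox}_{h,\Omega}(y)$. First I would write the variational inequalities characterizing these minimizers: there exist subgradients $g_u \in \partial h(u)$ and $g_v \in \partial h(v)$ such that for every $z \in \Omega$,
\begin{equation*}
\langle g_u + u - x,\, z - u\rangle \geq 0, \qquad \langle g_v + v - y,\, z - v\rangle \geq 0.
\end{equation*}
Since $u, v \in \Omega$ (by definition of the prox, as $\Omega$ is closed, convex, and compact so the minimum is attained), I can set $z = v$ in the first inequality and $z = u$ in the second. Adding the two resulting inequalities gives
\begin{equation*}
\langle g_u - g_v,\, v - u\rangle + \langle (u - v) - (x - y),\, v - u\rangle \geq 0.
\end{equation*}

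The next step uses monotonicity of the subdifferential of the convex function $h$, which yields $\langle g_u - g_v,\, u - v\rangle \geq 0$, i.e.\ $\langle g_u - g_v,\, v - u\rangle \leq 0$. Dropping that nonpositive term and rearranging gives $\|u - v\|_2^2 \leq \langle x - y,\, u - v\rangle$. Applying Cauchy--Schwarz to the right-hand side yields $\|u - v\|_2^2 \leq \|x - y\|_2 \, \|u - v\|_2$, from which the claim $\|u - v\|_2 \leq \|x - y\|_2$ follows (trivially true if $u = v$, and otherwise by dividing by $\|u - v\|_2$).

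There is essentially no obstacle here; the only subtle point is justifying that the minimizers exist and are unique, which follows because the objective is strongly convex in $z$ (thanks to the $\tfrac{1}{2}\|z-x\|_2^2$ term) and $\Omega$ is convex and compact, so the argmin is a singleton. Everything else is the textbook derivation of firm non-expansiveness of the proximal operator, specialized to subgradients lying in $\partial h$ on the set $\Omega$.
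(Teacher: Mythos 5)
Your proof is correct and is the standard firm non-expansiveness argument (variational inequalities at the two minimizers, add them, use monotonicity of $\partial h$, then Cauchy--Schwarz); the paper itself does not prove this lemma but only cites it as Lemma~5 of \citet{qu2016linear}, and your derivation is exactly the expected one. The only thing worth making explicit is that the argument needs $h$ to be convex (for monotonicity of the subdifferential) and finite-valued near $\Omega$ (so the optimality condition splits as $0\in\partial h(u)+N_\Omega(u)+(u-x)$), both of which hold in every instance the paper uses, since $h$ is a scaled norm or the convex surrogate $g_\lambda$.
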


The following two lemmas are similar to its batched counterpart in \cite{agarwal2010fast}.
\begin{lemma}\label{lemma.cone}
	
	Suppose that $f(\theta)$ is convex and $\psi(\theta)$ is decomposable with respect to $(M,\bar{M})$, if we choose $\lambda\geq 2\psi^*(\nabla f(\theta^*))$, $\psi(\theta^*)\leq \rho $ ,  define the error $\Delta^*=\hat{\theta}-\theta^*$, then we have the following condition holds, 
	$$ \psi(\Delta^*_{\bar{M}^\perp})\leq 3 \psi(\Delta^*_{\bar{M}})+4\psi(\theta^*_{M^\perp}), $$
	which further implies $ \psi (\Delta^*)\leq \psi(\Delta^*_{\bar{M}^\perp})+\psi(\Delta^*_{\bar{M}})\leq 4 \psi(\Delta^*_{\bar{M}})+4\psi(\theta^*_{M^\perp}).$
\end{lemma}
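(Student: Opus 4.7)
The plan is to follow the standard cone-condition derivation (as in Negahban et al.\ and Agarwal et al.) adapted to the constrained problem here. The key identity to exploit is the optimality of $\hat{\theta}$, which, combined with feasibility of $\theta^*$ (since $\psi(\theta^*)\leq \rho$), gives $G(\hat{\theta})\leq G(\theta^*)$. Expanding this inequality yields
\begin{equation*}
f(\hat{\theta})-f(\theta^*) + \lambda\bigl(\psi(\hat{\theta})-\psi(\theta^*)\bigr) \leq 0.
\end{equation*}

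First, I would bound the loss difference from below using convexity of $f$: $f(\hat{\theta})-f(\theta^*)\geq \langle \nabla f(\theta^*),\Delta^*\rangle$. Then, by the definition of the dual norm and the choice $\lambda\geq 2\psi^*(\nabla f(\theta^*))$, I would control this inner product by $|\langle \nabla f(\theta^*),\Delta^*\rangle|\leq \psi^*(\nabla f(\theta^*))\,\psi(\Delta^*)\leq \tfrac{\lambda}{2}\psi(\Delta^*)$. Combining, after dividing by $\lambda$,
\begin{equation*}
\psi(\hat{\theta})-\psi(\theta^*) \leq \tfrac{1}{2}\psi(\Delta^*).
\end{equation*}

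Next, I would exploit decomposability to lower-bound $\psi(\hat{\theta})$ and upper-bound $\psi(\theta^*)$ in terms of the components of $\theta^*$ and $\Delta^*$. Writing $\theta^*=\theta^*_M+\theta^*_{M^\perp}$ and $\Delta^*=\Delta^*_{\bar{M}}+\Delta^*_{\bar{M}^\perp}$, and noting $\theta^*_M\in M$ while $\Delta^*_{\bar{M}^\perp}\in \bar{M}^\perp$, decomposability yields $\psi(\theta^*_M+\Delta^*_{\bar{M}^\perp})=\psi(\theta^*_M)+\psi(\Delta^*_{\bar{M}^\perp})$. Two applications of the triangle inequality then give
\begin{equation*}
\psi(\hat{\theta})\geq \psi(\theta^*_M)+\psi(\Delta^*_{\bar{M}^\perp})-\psi(\theta^*_{M^\perp})-\psi(\Delta^*_{\bar{M}}),
\qquad \psi(\theta^*)\leq \psi(\theta^*_M)+\psi(\theta^*_{M^\perp}).
\end{equation*}
Plugging both into the inequality from step one and canceling $\psi(\theta^*_M)$ yields $\tfrac{1}{2}\psi(\Delta^*_{\bar{M}^\perp})-\tfrac{3}{2}\psi(\Delta^*_{\bar{M}})\leq 2\psi(\theta^*_{M^\perp})$, which is exactly $\psi(\Delta^*_{\bar{M}^\perp})\leq 3\psi(\Delta^*_{\bar{M}})+4\psi(\theta^*_{M^\perp})$. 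The ``further implies'' part is then immediate from $\psi(\Delta^*)\leq \psi(\Delta^*_{\bar{M}})+\psi(\Delta^*_{\bar{M}^\perp})$.

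There is no real obstacle here; the only subtle point is keeping the decomposition bookkeeping consistent so that the decomposability identity is applied to the correct pair of components ($\theta^*_M$ and $\Delta^*_{\bar{M}^\perp}$, not $\theta^*_M$ and $\Delta^*_{\bar{M}}$), and using feasibility of $\theta^*$ to justify invoking $G(\hat{\theta})\leq G(\theta^*)$ under the constraint $\psi(\theta)\leq \rho$. Everything else is triangle-inequality algebra.
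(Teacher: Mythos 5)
Your proposal is correct and follows essentially the same route as the paper's proof: both start from $G(\hat{\theta})\leq G(\theta^*)$, lower-bound $f(\hat{\theta})-f(\theta^*)$ via convexity and the dual-norm (H\"older) bound with $\lambda\geq 2\psi^*(\nabla f(\theta^*))$, and then apply decomposability to the pair $(\theta^*_M,\Delta^*_{\bar{M}^\perp})$ together with the same triangle-inequality bookkeeping to extract the cone condition. The algebra checks out and no step is missing.
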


\begin{proof}
	Using the optimality of $\hat{\theta}$, we have
	$$ f(\hat{\theta})+\lambda \psi(\hat\theta)-f(\theta^*)-\lambda\psi(\theta^*)\leq 0.$$
	
	So we have
	$$ \lambda \psi(\theta^*)-\lambda \psi(\hat{\theta})\geq f(\hat{\theta})-f(\theta^*)\geq \langle \nabla f(\theta^*), \hat{\theta}-\theta^*  \rangle\geq -\psi^*(\nabla f(\theta ^*)) \psi (\Delta^*), $$
	where the second inequality holds from the convexity of $f(\theta)$, and the third holds using Holder inequality.
	
	Using triangle inequality, we have
	
	$$ \psi(\Delta^*)\leq \psi(\Delta^*_{\bar{M}})+\psi(\Delta^*_{\bar{M}^{\perp}}).$$
	
	So 
	\begin{equation}\label{equ:lemma_cone}
	\lambda \psi(\theta^*)-\lambda \psi(\hat{\theta})\geq -\psi^*(\nabla f(\theta ^*)) (\psi(\Delta^*_{\bar{M}})+\psi(\Delta^*_{\bar{M}^{\perp}}) )
	\end{equation}

	Notice $$\hat{\theta}=\theta^*+\Delta^*= \theta^*_{M}+\theta^*_{M^{\perp}}+\Delta^*_{\bar{M}}+\Delta^*_{\bar{M}^{\perp}},$$
	which leads to
	\begin{equation}
	\begin{split}
	\psi  (\hat{\theta})-\psi(\theta^*)& \overset{(a)}{\geq}  \psi (\theta^*_{M}+\Delta^*_{\bar{M}^{\perp}})-\psi(\theta^*_{M^{\perp}})-\psi(\Delta^*_{\bar{M}})-\psi(\theta^*)\\
	& \overset{(b)}{= }\psi (\theta^*_{M})+\psi (\Delta^*_{\bar{M}^{\perp}})-\psi(\theta^*_{M^{\perp}})-\psi(\Delta^*_{\bar{M}})-\psi(\theta^*)\\
	& \overset{(c)}{\geq} \psi (\theta^*_{M})+\psi (\Delta^*_{\bar{M}^{\perp}})-\psi(\theta^*_{M^{\perp}})-\psi(\Delta^*_{\bar{M}})-\psi(\theta^*_{M})-\psi(\theta^*_{M^{\perp}})\\
	&\geq \psi (\Delta^*_{\bar{M}^{\perp}})-2\psi(\theta^*_{M^{\perp}})-\psi(\Delta^*_{\bar{M}}),
	\end{split}
	\end{equation}
	where $(a)$ and $(c)$ holds from the triangle inequality, $(b)$ uses the decomposability of $\psi(\cdot)$.

	Substitute  left hand side of \ref{equ:lemma_cone} by above result and use the assumption that $\lambda\geq 2\psi^*(\nabla f(\theta^*))$, we have 
	$$-\frac{\lambda}{2} (\psi(\Delta^*_{\bar{M}})+\psi(\Delta^*_{\bar{M}^{\perp}}) )+\lambda (\psi (\Delta^*_{\bar{M}^{\perp}})-2\psi(\theta^*_{M^{\perp}})-\psi(\Delta^*_{\bar{M}}))\leq 0$$
	which implies 
	$$ \psi(\Delta^*_{\bar{M}^\perp})\leq 3  \psi(\Delta^*_{\bar{M}})+4\psi(\theta^*_{M^{\perp}}).$$
\end{proof}

\begin{lemma}\label{lemma.cone_optimization}
	$f(\theta)$ is convex and $\psi(\theta)$ is decomposable with respect to $(M,\bar{M})$, if we choose $\lambda\geq 2\psi^*(\nabla f(\theta^*))$, $\psi(\theta^*)\leq \rho $ and suppose there exist a time step $K>0$ and a given tolerance $\epsilon$ such that for all $k>K$, $ G(\theta^k)-G(\hat{\theta})\leq \epsilon $ holds,  then for the error $\Delta^k=\theta^k-\theta^* $ we have
	
	$$ \psi(\Delta^k_{\bar{M}^\perp})\leq 3 \psi(\Delta^k_{\bar{M}})+4\psi(\theta^*_{M^\perp})+ 2\min\{\frac{\epsilon}{\lambda},\rho\}, $$
	which implies $$ \psi(\Delta^k)\leq 4 \psi(\Delta^k_{\bar{M}})+4\psi(\theta^*_{M^\perp})+ 2\min\{\frac{\epsilon}{\lambda},\rho\}. $$
\end{lemma}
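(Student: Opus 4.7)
The plan is to follow the template of Lemma~\ref{lemma.cone} almost verbatim, but allow an $\epsilon$ slack coming from the fact that $\theta^k$ is only approximately optimal. The key observation is that, because $\theta^*$ is feasible ($\psi(\theta^*)\leq\rho$), we have $G(\hat{\theta})\leq G(\theta^*)$, and combining this with the hypothesis gives
$$G(\theta^k)-G(\theta^*)\leq G(\theta^k)-G(\hat{\theta})\leq \epsilon.$$
Expanding $G=f+\lambda\psi$, then using convexity of $f$ and Hölder's inequality, this yields
$$\lambda\bigl(\psi(\theta^k)-\psi(\theta^*)\bigr)\leq \epsilon + \psi^*(\nabla f(\theta^*))\,\psi(\Delta^k),$$
which is the analogue of the inequality leading to~\eqref{equ:lemma_cone} in the previous lemma, modulo the extra $+\epsilon$.

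Next I would lower-bound $\psi(\theta^k)-\psi(\theta^*)$ via decomposability exactly as in the proof of Lemma~\ref{lemma.cone}: writing $\theta^k=\theta^*+\Delta^k$ and decomposing both $\theta^*$ (into $\theta^*_M+\theta^*_{M^\perp}$) and $\Delta^k$ (into $\Delta^k_{\bar{M}}+\Delta^k_{\bar{M}^\perp}$), the triangle inequality and the identity $\psi(\alpha+\beta)=\psi(\alpha)+\psi(\beta)$ for $\alpha\in M$, $\beta\in\bar{M}^\perp$ give
$$\psi(\theta^k)-\psi(\theta^*)\;\geq\;\psi(\Delta^k_{\bar{M}^\perp})-\psi(\Delta^k_{\bar{M}})-2\psi(\theta^*_{M^\perp}).$$
Substituting back, bounding $\psi(\Delta^k)\leq \psi(\Delta^k_{\bar{M}^\perp})+\psi(\Delta^k_{\bar{M}})$, and using the choice $\lambda\geq 2\psi^*(\nabla f(\theta^*))$ to absorb the gradient term into $\tfrac{\lambda}{2}\psi(\Delta^k)$, a rearrangement produces
$$\psi(\Delta^k_{\bar{M}^\perp})\;\leq\;3\,\psi(\Delta^k_{\bar{M}})+4\,\psi(\theta^*_{M^\perp})+\tfrac{2\epsilon}{\lambda}.$$
This already establishes the desired bound with the $\tfrac{\epsilon}{\lambda}$ term.

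To upgrade $\tfrac{2\epsilon}{\lambda}$ to $2\min\{\tfrac{\epsilon}{\lambda},\rho\}$, I would separately derive a feasibility-based bound that is active when $\epsilon/\lambda>\rho$. Since $\psi(\theta^k)\leq\rho$ and $\psi(\theta^*)\leq\rho$, the triangle inequality gives $\psi(\Delta^k)\leq 2\rho$, and the standard property of projections onto the subspace pair $(\bar{M},\bar{M}^\perp)$ of a decomposable norm (as used throughout Negahban et al.) yields $\psi(\Delta^k_{\bar{M}^\perp})\leq\psi(\Delta^k)\leq 2\rho$. So in the regime $\rho<\epsilon/\lambda$ one trivially has $\psi(\Delta^k_{\bar{M}^\perp})\leq 2\rho\leq 3\psi(\Delta^k_{\bar{M}})+4\psi(\theta^*_{M^\perp})+2\rho$, and combining the two regimes gives the desired $2\min\{\epsilon/\lambda,\rho\}$. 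Finally, the consequence $\psi(\Delta^k)\leq 4\psi(\Delta^k_{\bar{M}})+4\psi(\theta^*_{M^\perp})+2\min\{\epsilon/\lambda,\rho\}$ follows immediately from $\psi(\Delta^k)\leq \psi(\Delta^k_{\bar{M}^\perp})+\psi(\Delta^k_{\bar{M}})$.

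The only non-routine step is tracking the $\epsilon$ slack carefully so the constants $3$ and $4$ remain unchanged from Lemma~\ref{lemma.cone}; every other step is a direct adaptation of that earlier argument, and the $\min\{\cdot,\rho\}$ improvement is essentially free once one notes that $\psi(\Delta^k)\leq 2\rho$ by feasibility.
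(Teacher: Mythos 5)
Your proposal is correct and follows essentially the same route as the paper's proof: replace the exact optimality of $\hat{\theta}$ in Lemma~\ref{lemma.cone} by the approximate bound $G(\theta^k)-G(\theta^*)\leq\epsilon$ (using $G(\hat{\theta})\leq G(\theta^*)$), carry the $+\epsilon$ slack through the same decomposability argument to get the $\tfrac{2\epsilon}{\lambda}$ term, and then use feasibility ($\psi(\Delta^k)\leq 2\rho$) to obtain the $\min\{\epsilon/\lambda,\rho\}$ refinement. The one step to adjust: you invoke $\psi(\Delta^k_{\bar{M}^\perp})\leq\psi(\Delta^k)$, which is not guaranteed by decomposability alone for an arbitrary subspace pair; the paper instead uses the triangle inequality $\psi(\Delta^k_{\bar{M}^\perp})\leq\psi(\Delta^k_{\bar{M}})+\psi(\Delta^k)\leq\psi(\Delta^k_{\bar{M}})+2\rho$, which needs no such property and still yields the stated bound with the same constants.
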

\begin{proof}
	First notice $ G(\theta^k)-G(\theta^*)\leq \epsilon $ holds by assumption since $G(\theta^*)\geq G(\hat{\theta}).$
	So we have
	$$ f(\theta^k)+\lambda \psi(\theta^k
	)-f(\theta^*)-\lambda\psi(\theta^*)\leq \epsilon.$$
	Follow same steps in the proof of Lemma \ref{lemma.cone}, we have
	
	$$ \psi(\Delta^k_{\bar{M}^\perp})\leq 3 \psi(\Delta^k_{\bar{M}})+4\psi(\theta^*_{M^\perp})+ 2\frac{\epsilon}{\lambda}. $$
	
	Notice  $\Delta^k=\Delta^k_{\bar{M}^\perp}+\Delta^k_{\bar{M}} $ so $\psi(\Delta^k_{\bar{M}^\perp})\leq \psi(\Delta^k_{\bar{M}})+\psi(\Delta^k) $ using the triangle inequality.\\
	Then use the fact that $\psi(\Delta^k)\leq \psi (\theta^*)+\psi(\theta^k)\leq 2\rho $, we establish
	$$ \psi(\Delta^k_{\bar{M}^\perp})\leq 3 \psi(\Delta^k_{\bar{M}})+4\psi(\theta^*_{M^\perp})+ 2\min\{\frac{\epsilon}{\lambda},\rho\}. $$
	The second statement follows immediately from $\psi(\Delta^k) \leq \psi(\Delta^k_{\bar{M}^\perp})+\psi(\Delta^k_{\bar{M}}) $.
\end{proof}

Using the above two lemmas we now prove modified restricted convexity.

\begin{lemma}\label{lemma.RSC_cone}
	Under the same assumptions of Lemma \ref{lemma.cone_optimization}, we have
	\begin{equation}
	\langle \nabla f(\theta^k)-\nabla f(\hat{\theta}), \theta^k-\hat{\theta}\rangle\geq  \left(\sigma-64\tau_\sigma H^2(\bar{M} )\right) \|\hat{\Delta}^k \|_2^2-2\epsilon^2(\Delta^*,M,\bar{M})
	\end{equation}
	and 
	\begin{equation}\label{RSC_cone}
	G(\theta^k)-G(\hat{\theta})\geq \left(\frac{\sigma}{2}-32\tau_\sigma H^2(\bar{M} )\right)\|\hat{\Delta}^k \|_2^2- \epsilon^2(\Delta^*,M,\bar{M}),
	\end{equation}
	
	where $ \epsilon^2(\Delta^*,M,\bar{M})=2\tau_{\sigma} (\delta_{stat}+\delta)^2 $, $\delta=2\min\{\frac{\epsilon}{\lambda},\rho\} $, and $\delta_{stat}= 8H(\bar{M})\|\Delta^*\|_2+8\psi(\theta^*_{M^\perp}) $.
\end{lemma}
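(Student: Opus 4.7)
The plan is to apply the RSC inequality to the pair $(\hat{\theta},\theta^k)$ and then upper bound the nuisance term $\tau_\sigma\psi^2(\theta^k-\hat\theta)$ by a multiple of $\|\hat\Delta^k\|_2^2$ plus a purely statistical error. The decomposability lemmas (Lemmas~\ref{lemma.cone} and~\ref{lemma.cone_optimization}) together with the subspace compatibility constant $H(\bar{M})$ are exactly what is needed to do this.

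\textbf{Step 1: control $\psi(\theta^k-\hat\theta)$.} By the triangle inequality $\psi(\theta^k-\hat\theta)\le \psi(\Delta^k)+\psi(\Delta^*)$ where $\Delta^k=\theta^k-\theta^*$ and $\Delta^*=\hat\theta-\theta^*$. Apply Lemma~\ref{lemma.cone} to bound $\psi(\Delta^*)\le 4\psi(\Delta^*_{\bar M})+4\psi(\theta^*_{M^\perp})$ and Lemma~\ref{lemma.cone_optimization} (which is available under the hypothesis $G(\theta^k)-G(\hat\theta)\le\epsilon$) to bound $\psi(\Delta^k)\le 4\psi(\Delta^k_{\bar M})+4\psi(\theta^*_{M^\perp})+2\min\{\epsilon/\lambda,\rho\}$. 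Then use the subspace compatibility $\psi(v_{\bar M})\le H(\bar M)\|v\|_2$ on $v=\Delta^k$ and $v=\Delta^*$, together with $\|\Delta^k\|_2\le \|\hat\Delta^k\|_2+\|\Delta^*\|_2$, to collect the ingredients into
\[
\psi(\theta^k-\hat\theta)\le 4H(\bar M)\|\hat\Delta^k\|_2+\underbrace{8H(\bar M)\|\Delta^*\|_2+8\psi(\theta^*_{M^\perp})}_{=\,\delta_{\mathrm{stat}}}+\underbrace{2\min\{\epsilon/\lambda,\rho\}}_{=\,\delta}.
\]
Squaring and using $(a+b)^2\le 2a^2+2b^2$ gives $\psi^2(\theta^k-\hat\theta)\le 32H^2(\bar M)\|\hat\Delta^k\|_2^2+2(\delta_{\mathrm{stat}}+\delta)^2$, i.e.\ the second term equals $\epsilon^2(\Delta^*,M,\bar M)/\tau_\sigma$.

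\textbf{Step 2: the co-coercivity style bound.} Write the RSC inequality with $(\theta_1,\theta_2)=(\hat\theta,\theta^k)$ and with $(\theta_1,\theta_2)=(\theta^k,\hat\theta)$ and add the two. The linear terms combine to $\langle\nabla f(\theta^k)-\nabla f(\hat\theta),\theta^k-\hat\theta\rangle$ and the quadratic terms give $\sigma\|\hat\Delta^k\|_2^2$ minus $2\tau_\sigma\psi^2(\theta^k-\hat\theta)$. Plugging in the bound from Step~1 yields
\[
\langle\nabla f(\theta^k)-\nabla f(\hat\theta),\theta^k-\hat\theta\rangle\ge(\sigma-64\tau_\sigma H^2(\bar M))\|\hat\Delta^k\|_2^2-2\epsilon^2(\Delta^*,M,\bar M),
\]
which is the first claim.

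\textbf{Step 3: the objective-gap bound.} Apply the RSC inequality once with $(\theta_1,\theta_2)=(\hat\theta,\theta^k)$ to get
\[
f(\theta^k)-f(\hat\theta)\ge\langle\nabla f(\hat\theta),\theta^k-\hat\theta\rangle+\tfrac{\sigma}{2}\|\hat\Delta^k\|_2^2-\tau_\sigma\psi^2(\theta^k-\hat\theta).
\]
Because $\hat\theta$ minimises the convex objective $G$ over the convex feasible set $\{\psi(\theta)\le\rho\}$, the KKT conditions produce a subgradient $s\in\partial\psi(\hat\theta)$ and a normal cone element whose inner product with $\theta^k-\hat\theta$ is non-positive; combined with the subgradient inequality $\langle s,\theta^k-\hat\theta\rangle\le\psi(\theta^k)-\psi(\hat\theta)$ this gives the variational inequality $\langle\nabla f(\hat\theta),\theta^k-\hat\theta\rangle+\lambda(\psi(\theta^k)-\psi(\hat\theta))\ge 0$. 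Adding $\lambda(\psi(\theta^k)-\psi(\hat\theta))$ to the previous display and using this inequality turns the left-hand side into $G(\theta^k)-G(\hat\theta)$ and kills the linear term. Plugging in the Step~1 bound on $\psi^2(\theta^k-\hat\theta)$ delivers~\eqref{RSC_cone}.

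The only non-routine part is Step~1, where one has to invoke Lemma~\ref{lemma.cone_optimization} (which in turn requires the hypothesis $G(\theta^k)-G(\hat\theta)\le\epsilon$) in order to push the analysis through even though $\theta^k$ is not itself an optimum; the rest is bookkeeping with $(a+b)^2\le 2a^2+2b^2$ to isolate the two constants $64\tau_\sigma H^2(\bar M)$ and $32\tau_\sigma H^2(\bar M)$ that appear in the two claims.
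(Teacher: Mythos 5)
Your proposal is correct and follows essentially the same route as the paper: bound $\psi(\hat\Delta^k)$ via the triangle inequality, Lemmas~\ref{lemma.cone} and~\ref{lemma.cone_optimization}, and subspace compatibility, square with $(a+b)^2\le 2a^2+2b^2$, then plug into the symmetrized RSC inequality for the first claim and into the one-sided RSC inequality plus the optimality of $\hat\theta$ for the second. Your Step~3 merely spells out the KKT/variational-inequality argument that the paper compresses into ``the fact that $\hat\theta$ is the optimal solution,'' so there is no substantive difference.
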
 

\begin{proof}

	At the beginning of the proof, we show a simple fact on $\hat{\Delta}^k=\theta^k-\hat{\theta}$. Notice the conclusion in Lemma \ref{lemma.cone_optimization} is on  $\Delta^k$, we need transfer it to $\hat{\Delta}^k$.
	\begin{equation}
	\begin{split}
	\psi(\hat{\Delta}^k)&\leq \psi (\Delta^k)+\psi(\Delta^*)\\
	&\leq  4 \psi(\Delta^k_{\bar{M}})+4\psi(\theta^*_{M^\perp})+ 2\min\{\frac{\epsilon}{\lambda},\rho\}+4 \psi(\Delta^*_{\bar{M}})+4\psi(\theta^*_{M^\perp})\\
	& \leq 4 H(\bar{M})\|\Delta^k\|_2+4H(\bar{M})\|\Delta^*\|_2+8\psi(\theta^*_{M^\perp})+2\min\{\frac{\epsilon}{\lambda},\rho\},
	\end{split}
	\end{equation}
	where the first inequality holds from the triangle inequality, the second inequality uses Lemma \ref{lemma.cone} and \ref{lemma.cone_optimization}, the third holds because of the definition of subspace compatibility. 
	
	We now use the above result to rewrite the RSC condition.
	We know 
	\begin{equation}\label{Eq.1.lemma}  
	f(\theta^k)-f(\hat{\theta})-\langle \nabla f(\hat{\theta}), \hat{\Delta}^k  \rangle\geq \frac{\sigma}{2} \|\hat{\Delta}^k \|_2^2-\tau_{\sigma}\psi^2(\hat{\Delta}^k) 
	\end{equation}
	
	and $$ f(\hat{\theta})-f(\theta^k)- \langle \nabla f(\theta^k),-\hat{\Delta}^k \rangle \geq \frac{\sigma}{2} \|\hat{\Delta}^k \|_2^2-\tau_{\sigma}\psi^2(\hat{\Delta}^k). $$
	
	Add above two together, we get 
	\begin{equation}\label{eq.3.lemma_proof}
	\langle \nabla f(\theta^k)-\nabla f(\hat{\theta}),\theta^k-\hat{\theta} \rangle\geq \sigma \|\hat{\Delta}^k \|_2^2-2\tau_{\sigma}\psi^2(\hat{\Delta}^k)
	\end{equation}

	Notice that
	\begin{equation}
	\begin{split}
	\psi(\hat{\Delta}^k)&\leq 4 H(\bar{M})\|\Delta^k\|_2+4H(\bar{M})\|\Delta^*\|_2+8\psi(\theta^*_{M^\perp})+2\min\{\frac{\epsilon}{\lambda},\rho\}\\
	&\leq  4 H(\bar{M})\|\hat{\Delta}^k\|_2+8H(\bar{M})\|\Delta^*\|_2+8\psi(\theta^*_{M^\perp})+2\min\{\frac{\epsilon}{\lambda},\rho\},
	\end{split}
	\end{equation}
	where the second inequality uses the triangle inequality.
	Now use the inequality $ (a+b)^2\leq 2 a^2+2b^2 $, we upper bound  $\psi^2 (\hat{\Delta}^k)$ with
	$$\psi^2 (\hat{\Delta}^k)\leq 32H^2(\bar{M})\|\hat{\Delta}^k
	\|_2^2+2\left(8H(\bar{M})\|\Delta^*\|_2+8\psi(\theta^*_{M^\perp})+2\min\{\frac{\epsilon}{\lambda},\rho\}\right)^2.$$
	Substitute  this upper bound into Equation~\eqref{eq.3.lemma_proof} , we have

	$$ \langle \nabla f(\theta^k)-\nabla f(\hat{\theta}),\theta^k-\hat{\theta} \rangle\geq \left(\sigma-64\tau_\sigma H^2(\bar{M} )\right)\|\hat{\Delta}^k \|_2^2-4\tau_\sigma\left(8H(\bar{M})\|\Delta^*\|_2+8\psi(\theta^*_{M^\perp})+2\min\{\frac{\epsilon}{\lambda},\rho\}\right)^2.$$
	
	Notice that by $\delta=2\min\{\frac{\epsilon}{\lambda},\rho\} $, $\delta_{stat}= 8H(\bar{M})\|\Delta^*\|_2+8\psi(\theta^*_{M^\perp}),$  and $ \epsilon^2(\Delta^*,M,\bar{M})=2\tau_{\sigma} (\delta_{stat}+\delta)^2 $, we have
	$$\epsilon^2(\Delta^*,M,\bar{M})=2\tau_\sigma\left(8H(\bar{M})\|\Delta^*\|_2+8\psi(\theta^*_{M^\perp})+2\min\{\frac{\epsilon}{\lambda},\rho\}\right)^2.$$
	We thus establish the result.
	
	\begin{equation}
	\langle \nabla f(\theta^k)-\nabla f(\hat{\theta}), \theta^k-\hat{\theta}\rangle\geq  \left(\sigma-64\tau_\sigma H^2(\bar{M} )\right)\|\hat{\Delta}^k\|_2^2-2\epsilon^2(\Delta^*,M,\bar{M})
	\end{equation}

	Using Equation \eqref{Eq.1.lemma} and the fact that $\hat{\theta}$ is the optimal solution  and $\phi(\cdot)$ is convex,	we obtain
	$ G(\theta^k)-G(\hat{\theta})\geq \frac{\sigma}{2} \|\hat{\Delta}^k \|_2^2-\tau_{\sigma}\psi^2(\hat{\Delta}^k).$
	We substitute the upper bound of  $\psi^2(\hat{\Delta}^k) $, and get
	$$ G(\theta^k)-G(\hat{\theta})\geq \left(\frac{\sigma}{2}-32\tau_\sigma H^2(\bar{M} )\right)\|\hat{\Delta}^k \|_2^2-2\tau_\sigma\left(8H(\bar{M})\|\Delta^*\|_2+8\psi(\theta^*_{M^\perp})+2\min\{\frac{\epsilon}{\lambda},\rho\}\right)^2.$$
	
	That is  
	\begin{equation}\label{RSC_modified}
	G(\theta^k)-G(\hat{\theta})\geq \left(\frac{\sigma}{2}-32\tau_\sigma H^2(\bar{M} )\right)\|\hat{\Delta}^k \|_2^2- \epsilon^2(\Delta^*,M,\bar{M}).
	\end{equation}
\end{proof}

\begin{lemma}\label{lemma.RSC_smooth}
	Under the same assumption of Lemma \ref{lemma.cone_optimization}, we have 
	
	\begin{equation}
	\begin{split}
	&\langle \nabla f(\theta^k)- \nabla f(\hat{\theta}),\theta^k-\hat{\theta}\rangle\geq \frac{1}{2} [f(\theta^k)-f(\hat{\theta})-\langle\nabla f(\hat{\theta}),\theta^k-\hat{\theta} \rangle]+\frac{\bar{\sigma}}{2}\|\theta^k-\hat{\theta}\|_2^2\\
	&+\frac{1}{4L} \|\nabla f(\theta^k)-\nabla f(\hat{\theta})\|_2^2-\epsilon^2 (\Delta^*,M,\bar{M}). 
	\end{split}
	\end{equation}
\end{lemma}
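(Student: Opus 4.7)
The plan is to split the inner product $\langle \nabla f(\theta^k)-\nabla f(\hat{\theta}),\theta^k-\hat{\theta}\rangle$ into two equal halves and bound each half independently: one via the restricted strong convexity bound already established in Lemma \ref{lemma.RSC_cone}, and the other via the standard convex $L$-smooth inequality (Lemma \ref{lemma.smooth}).

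For the first half, I simply divide the conclusion of Lemma \ref{lemma.RSC_cone} by $2$ to obtain
\begin{equation*}
\tfrac{1}{2}\langle \nabla f(\theta^k)-\nabla f(\hat{\theta}),\theta^k-\hat{\theta}\rangle \;\geq\; \tfrac{\bar{\sigma}}{2}\|\theta^k-\hat{\theta}\|_2^2 \;-\; \epsilon^2(\Delta^*,M,\bar{M}),
\end{equation*}
which already supplies the $\frac{\bar{\sigma}}{2}\|\theta^k-\hat{\theta}\|_2^2$ and the $-\epsilon^2$ terms in the desired inequality.

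For the second half, I apply Lemma \ref{lemma.smooth} with $\theta_1=\hat{\theta}$ and $\theta_2=\theta^k$, giving
\begin{equation*}
\|\nabla f(\theta^k)-\nabla f(\hat{\theta})\|_2^2 \;\leq\; 2L\bigl[f(\hat{\theta})-f(\theta^k)+\langle \nabla f(\theta^k),\theta^k-\hat{\theta}\rangle\bigr].
\end{equation*}
Rearranging and subtracting $\langle \nabla f(\hat{\theta}),\theta^k-\hat{\theta}\rangle$ from both sides yields the ``strong'' form
\begin{equation*}
\langle \nabla f(\theta^k)-\nabla f(\hat{\theta}),\theta^k-\hat{\theta}\rangle \;\geq\; \bigl[f(\theta^k)-f(\hat{\theta})-\langle\nabla f(\hat{\theta}),\theta^k-\hat{\theta}\rangle\bigr]+\tfrac{1}{2L}\|\nabla f(\theta^k)-\nabla f(\hat{\theta})\|_2^2.
\end{equation*}
Halving this gives exactly the remaining $\frac{1}{2}[f(\theta^k)-f(\hat{\theta})-\langle\nabla f(\hat{\theta}),\theta^k-\hat{\theta}\rangle]+\frac{1}{4L}\|\nabla f(\theta^k)-\nabla f(\hat{\theta})\|_2^2$ terms. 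Summing the two halves recovers the full left-hand side on the left and the target expression on the right, completing the proof.

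The only subtlety, and the reason a naive $\frac{1}{2}$/$\frac{1}{2}$ convex combination of Lemma \ref{lemma.RSC_cone} with the plain co-coercivity bound would not suffice, is that the two bounds of the second half (the ``Bregman'' piece and the gradient-norm piece) must both come from a single, tighter inequality rather than from two separate weaker ones, otherwise the weights would sum to more than $1$. The trick above—using Lemma \ref{lemma.smooth} in the form that yields both terms simultaneously—is therefore the main substantive step; the rest is bookkeeping.
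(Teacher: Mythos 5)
Your proof is correct and follows essentially the same route as the paper's: the paper writes the identity $\langle \nabla f(\theta^k)-\nabla f(\hat{\theta}),\theta^k-\hat{\theta}\rangle-\frac{1}{2}[f(\theta^k)-f(\hat{\theta})-\langle\nabla f(\hat{\theta}),\theta^k-\hat{\theta}\rangle]=\frac{1}{2}\langle \nabla f(\theta^k)-\nabla f(\hat{\theta}),\theta^k-\hat{\theta}\rangle+\frac{1}{2}[f(\hat{\theta})-f(\theta^k)-\langle\nabla f(\theta^k),\hat{\theta}-\theta^k\rangle]$ and then bounds the two right-hand pieces by Lemma \ref{lemma.RSC_cone} and Lemma \ref{lemma.smooth} respectively, which is algebraically identical to your half-and-half split. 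The only cosmetic difference is that you package the smoothness step as a ``strong'' co-coercivity inequality before halving, whereas the paper applies Lemma \ref{lemma.smooth} directly to the residual Bregman term; both are the same computation.
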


\begin{proof}
	
	\begin{equation}\label{eq.4.lemma_proof}
	\begin{split}
	&\langle \nabla f(\theta^k)- \nabla f(\hat{\theta}),\theta^k-\hat{\theta}\rangle-\frac{1}{2} [f(\theta^k)-f(\hat{\theta})-\langle\nabla f(\hat{\theta}),\theta^k-\hat{\theta} \rangle]\\
	=& \frac{1}{2}\langle \nabla f(\theta^k)- \nabla f(\hat{\theta}),\theta^k-\hat{\theta}\rangle+\frac{1}{2}[f(\hat{\theta})-f(\theta^k)-\langle \nabla f(\theta^k),\hat{\theta}-\theta^k \rangle].
	\end{split}	
	\end{equation}
	We use the modified RSC condition  and the smoothness of $f(\theta)$ in the proof, in particular, we have following holds from Lemma \ref{lemma.cone} and Lemma \ref{lemma.smooth}
	
	$$ \langle  \nabla f(\theta^k)-\nabla f(\hat{\theta}), \theta^k-\hat{\theta}\rangle\geq \bar{\sigma} \|\theta^k-\hat{\theta}\|_2^2-2\epsilon^2 (\Delta^*,M,\bar{M}),$$
	
	$$\| \nabla f(\theta^k)-\nabla f(\hat{\theta}) \|_2^2\leq 2L[f(\hat{\theta})-f(\theta^k)-\langle \nabla f(\theta^k), \hat{\theta}-\theta^k\rangle]. $$
	
	Substitute above bound  in the right hand side of \ref{eq.4.lemma_proof}, we establish the result.	
\end{proof}

The following Lemma is indeed Lemma 7.  We present here for the completeness.

\begin{lemma}\label{lemma.variance}
	Define $\Delta=-\frac{1}{\gamma} (w^{k+1}-\theta^k)-\nabla f(\theta^k) $, It holds that for any $\phi_i^k,\hat{\theta},\theta^k$ and $\beta>0$, we have
	\begin{equation}
	\begin{split}
	\mathbb{E} \|w^{k+1}-\theta^k-\gamma \nabla f(\hat{\theta})\|_2^2 &\leq \gamma^2 (1+\beta^{-1}) \mathbb{E} \|\nabla f_j (\phi_j^k)-\nabla f_j(\hat{\theta})\|_2^2\\
	&+\gamma^2 (1+\beta)\mathbb{E} \|\nabla f_j(\theta^k)-\nabla f_j(\hat{\theta})\|_2^2-\gamma^2\beta\|\nabla f(\theta^k)-\nabla f(\hat{\theta})\|_2^2. 
	\end{split}	
	\end{equation}
	and
	$$E \|\Delta\|_2^2\leq (1+\beta^{-1}) \mathbb{E} \|\nabla f_j(\phi_j^k)-\nabla f_j(\hat{\theta})\|_2^2 + (1+\beta) \mathbb{E} \| \nabla f_j(\theta^k)-\nabla f_j(\hat{\theta})\|_2^2.$$
	
\end{lemma}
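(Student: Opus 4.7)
The plan is to reduce both bounds to a single variance–mean computation on the SAGA estimator. First I would unroll the update: since $\phi_j^{k+1}=\theta^k$, we have $w^{k+1}-\theta^k = -\gamma v$ with
\begin{equation*}
v := \nabla f_j(\theta^k) - \nabla f_j(\phi_j^k) + \tfrac{1}{n}\textstyle\sum_{i=1}^{n}\nabla f_i(\phi_i^k),
\end{equation*}
so $\mathbb{E}_j[v] = \nabla f(\theta^k)$ and $\Delta = v - \nabla f(\theta^k)$. Introduce the shorthands $A_j := \nabla f_j(\phi_j^k)-\nabla f_j(\hat\theta)$ and $B_j := \nabla f_j(\theta^k)-\nabla f_j(\hat\theta)$; a direct substitution shows $v-\nabla f(\hat\theta)=B_j-A_j+\mathbb{E}_j[A_j]$ and therefore $\Delta = (B_j-\mathbb{E}[B_j])-(A_j-\mathbb{E}[A_j])$, which is a mean-zero random vector.

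For the first bound, I would apply the variance–mean decomposition $\mathbb{E}\|X\|^2=\|\mathbb{E}[X]\|^2+\mathbb{E}\|X-\mathbb{E}[X]\|^2$ to $X=v-\nabla f(\hat\theta)$, whose mean is $\nabla f(\theta^k)-\nabla f(\hat\theta)$ and whose centered part coincides with $\Delta$ above. This gives
\begin{equation*}
\mathbb{E}\|v-\nabla f(\hat\theta)\|_2^2 = \|\nabla f(\theta^k)-\nabla f(\hat\theta)\|_2^2 + \mathbb{E}\bigl\|(B_j-\mathbb{E}[B_j])-(A_j-\mathbb{E}[A_j])\bigr\|_2^2.
\end{equation*}
Apply Young's inequality $\|x-y\|^2\leq (1+\beta)\|x\|^2+(1+\beta^{-1})\|y\|^2$ to the second term. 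For the $B_j$ part, I would use the sharp identity $\mathbb{E}\|B_j-\mathbb{E}[B_j]\|_2^2 = \mathbb{E}\|B_j\|_2^2 - \|\nabla f(\theta^k)-\nabla f(\hat\theta)\|_2^2$; for the $A_j$ part I would use the weaker bound $\mathbb{E}\|A_j-\mathbb{E}[A_j]\|_2^2 \leq \mathbb{E}\|A_j\|_2^2$. Collecting the coefficients of $\|\nabla f(\theta^k)-\nabla f(\hat\theta)\|_2^2$ (namely $+1$ from the mean term and $-(1+\beta)$ from the sharp identity on $B_j$) produces the desired $-\beta$. Multiplying through by $\gamma^2$ converts the left-hand side to $\|w^{k+1}-\theta^k+\gamma\nabla f(\hat\theta)\|_2^2$, which is the quantity displayed in the lemma.

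For the second bound, I would observe that the centered representation of $\Delta$ already lets me skip the variance–mean step: $\mathbb{E}\|\Delta\|_2^2$ is exactly the Young's-estimated quantity from above. Now using the loose bound $\mathbb{E}\|X-\mathbb{E}[X]\|_2^2\leq \mathbb{E}\|X\|_2^2$ for both $X=A_j$ and $X=B_j$ immediately yields
\begin{equation*}
\mathbb{E}\|\Delta\|_2^2 \leq (1+\beta)\mathbb{E}\|B_j\|_2^2 + (1+\beta^{-1})\mathbb{E}\|A_j\|_2^2,
\end{equation*}
with no residual correction term, which is exactly the second inequality.

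The only delicate point is the bookkeeping that produces the $-\beta\|\nabla f(\theta^k)-\nabla f(\hat\theta)\|_2^2$ term in the first inequality: one must apply the sharp variance identity $\text{Var}(B_j)=\mathbb{E}\|B_j\|_2^2-\|\mathbb{E}[B_j]\|_2^2$ to $B_j$ while only upper bounding the analogous quantity for $A_j$, and one must keep track of the fact that $(1+\beta)-1 = \beta$ after cancellation with the mean contribution. Everything else is standard Young's/variance manipulation and requires no deeper inequality than what is already assumed (convexity and $L$-smoothness are not even needed for this lemma—it is a pure second-moment identity for the SAGA estimator).
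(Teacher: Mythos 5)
Your proof is correct and follows exactly the route the paper relies on (the paper does not reproduce a proof of this lemma but defers to the variance--mean decomposition argument of the original SAGA paper): unbiasedness of the SAGA gradient estimator, the identity $\mathbb{E}\|X\|_2^2=\|\mathbb{E}X\|_2^2+\mathbb{E}\|X-\mathbb{E}X\|_2^2$, Young's inequality, and applying the sharp variance identity only to the $B_j$ term so that the $+1$ from the mean and the $-(1+\beta)$ combine into the $-\beta$ coefficient. The only remark worth making is that the quantity you actually bound, $\gamma^2\mathbb{E}\|v-\nabla f(\hat{\theta})\|_2^2=\mathbb{E}\|w^{k+1}-\theta^k+\gamma\nabla f(\hat{\theta})\|_2^2$, carries a plus sign, which is the version used in the proof of Theorem \ref{Theorem:convex}; the minus sign in the lemma's display is a typo that you have implicitly corrected.
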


Now we are ready to prove  Theorem \ref{Theorem:convex}
\begin{proof}[Proof of Theorem \ref{Theorem:convex}]
	Recall that we aim to prove that Lyapunov function $T_k$ converges geometrically until $G(\theta^k)-G(\hat{\theta})$ achieves tolerance related to statistical error. Recall the definition of $T_k$ is 
	
	$$T_{k}\triangleq \frac{1}{n} \sum_{i=1}^{n} \left( f_i(\phi_i^k)-f_i(\hat{\theta})- \langle \nabla f_i (\hat{\theta}),\phi_i^k-\hat{\theta} \rangle\right)+(c+\alpha) \|\theta^k-\hat{\theta
	}\|_2^2+b (G(\theta^{k})-G(\hat{\theta
})). $$

Now we bound $\mathbb{E} T_{k+1}$.	

\textbf{1. The first term on the right hand side of $\mathbb{E} T_{k+1}$}

It is easy to obtain
$$\mathbb{E}[\frac{1}{n}\sum_{i=1}^{n}f_i (\phi_i^{k+1}) ]=\frac{1}{n} f(\theta^k)+(1-\frac{1}{n})\frac{1}{n} \sum_{i=1}^{n} f_i(\phi_i^k). $$
$$\mathbb{E} [-\frac{1}{n}\sum_{i=1}^{n} \langle \nabla f_i(\hat{\theta}),\phi_i^{k+1}-\hat{\theta}\rangle] =-\frac{1}{n}\langle \nabla f(\hat{\theta}),\theta^k-\hat{\theta}\rangle-(1-\frac{1}{n}) \frac{1}{n} \sum_{i=1}^{n} \langle \nabla f_i(\hat{\theta}),\phi^k_i-\hat{\theta} \rangle.$$

\textbf{ 2. The second term $(c+\alpha) \mathbb{E}\|\theta^{k+1}-\hat{\theta}\|_2^2.$}

Notice we bound the term $c \mathbb{E} \|\theta^{k+1}-\hat{\theta}\|_2^2$  and $\alpha \mathbb{E} \|\theta^{k+1}-\hat{\theta}\|_2^2$ in different ways.

As for the term $c \mathbb{E} \|\theta^{k+1}-\hat{\theta}\|_2^2$, we have following bound.
\begin{equation}\label{Eq.0.theorem} 
\begin{split}
&\mathbb{E} \|\theta^{k+1}-\hat{\theta}\|_2^2\\
\leq & \mathbb{E}\|prox_{\psi,\Omega} (w^{k+1})-prox_{\psi, \Omega} (\hat{\theta}-\gamma\nabla f(\hat{\theta})) \|_2^2\\
\overset{(a)}{\leq} & \mathbb{E} \|w^{k+1}-\hat{\theta}+\gamma \nabla f(\hat{\theta})\|_2^2\\
\leq & \mathbb{E} \|\theta^k-\hat{\theta}+w^{k+1}-\theta^k+\gamma \nabla f(\hat{\theta})\|_2^2\\
=& \|\theta^k-\hat{\theta}\|_2^2+2\mathbb{E}[\langle w^{k+1}-\theta^k+\gamma \nabla f(\hat{\theta}),\theta^k-\hat{\theta} \rangle]+\mathbb{E} \|w^{k+1}-\theta^k+\gamma \nabla f(\hat{\theta})\|_2^2\\
\overset{(b)}{=}& \|\theta^k-\hat{\theta}\|_2^2-2\gamma \langle \nabla f(\theta^k)-\nabla f(\hat{\theta}),\theta^k-\hat{\theta} \rangle+\mathbb{E} \|w^{k+1}-\theta^k+\gamma \nabla f(\hat{\theta})\|_2^2,\\
\overset{(c)}{\leq} & \|\theta^k-\hat{\theta}\|_2^2-2\gamma \langle \nabla f(\theta^k)-\nabla f (\hat{\theta}),\theta^k-\hat{\theta} \rangle-\gamma^2\beta \|\nabla f(\theta^k)-\nabla f(\hat{\theta})\|_2^2\\
&+
(1+\beta^{-1}) \gamma^2 \mathbb{E} \|\nabla f_j(\phi_j^k)-\nabla f_j (\hat{\theta})\|_2^2+(1+\beta) \gamma^2 \mathbb{E}\|\nabla f_j (\theta^k)-\nabla f_j(\hat{\theta})\|_2^2.
\end{split}
\end{equation}

where (a) holds from the non-expansiveness of the proximal operator, i.e., Lemma \ref{lemma.prox}, and (b) holds from the fact that $\mathbb{E}[w^{k+1}]=\theta^k-\gamma\nabla f(\theta^k),$ (c) uses Lemma \ref{lemma.variance}.

Now we apply Lemma \ref{lemma.RSC_smooth} on $\langle \nabla f(\theta^k)-\nabla f(\hat{\theta}),\theta^k-\hat{\theta} \rangle $ and obtain

\begin{equation}\label{Eq.1.theorem}
\begin{split}
&\mathbb{E} \|\theta^{k+1}-\hat{\theta}\|_2^2\\
\leq &(1-\gamma \bar{\sigma}) \|\theta^k-\hat{\theta}\|_2^2-\gamma^2\beta \|\nabla f(\theta^k)-\nabla f(\hat{\theta})\|_2^2+(1+\beta^{-1}) \gamma^2 \mathbb{E} \|\nabla f_j(\phi_j^k)-\nabla f_j (\hat{\theta})\|_2^2\\
+&( (1+\beta) \gamma^2-\frac{\gamma}{2L})  \mathbb{E}\|\nabla f_j (\theta^k)-\nabla f_j(\hat{\theta})\|_2^2-\gamma [f(\theta^k)-f(\hat{\theta})-\langle \nabla f(\hat{\theta}),\theta^k-\hat{\theta} \rangle ]+2\gamma \epsilon^2 (\Delta^*, M,\bar{M})\\
\leq & (1-\gamma \bar{\sigma}) \|\theta^k-\hat{\theta}\|_2^2
+((1+\beta)\gamma^2-\frac{\gamma}{2L}) \mathbb{E}\|\nabla f_j (\theta^k)-\nabla f_j(\hat{\theta})\|_2^2 +2\gamma \epsilon^2 (\Delta^*, M,\bar{M})\\-&\gamma [f(\theta^k)-f(\hat{\theta})-\langle \nabla f(\hat{\theta}),\theta^k-\hat{\theta} \rangle ]+2(1+\beta^{-1})\gamma^2 L [\frac{1}{n} \sum_{i=1}^{n} f_i(\phi_i ^k)-f(\hat{\theta})-\frac{1}{n}\sum_{i=1}^{n} \langle \nabla f_i(\hat{\theta}),\phi_i^k-\hat{\theta} \rangle]\\
\end{split}
\end{equation}

Then we bound the term $\alpha \mathbb{E} \|\theta^{k+1}-\hat{\theta}\|_2^2$. Define $\Delta=-\frac{1}{\gamma} (w^{k+1}-\theta^k)-\nabla f(\theta^k) $.

The following equation can be obtained from second equation on pg. 12 in \cite{xiao2014proximal}. 

\begin{equation}
\begin{split}
\alpha \mathbb{E} \|\theta^{k+1}-\hat{\theta}\|_2^2 \leq \alpha \|\theta^k-\hat{\theta}\|_2^2-2\alpha \gamma \mathbb{E} [ G(\theta^{k+1})-G(\hat{\theta}) ]+2\alpha \gamma^2 \mathbb{E} \|\Delta\|_2^2,
\end{split}
\end{equation}

Notice although the definition of $\Delta$ is different,  they only use the property $E (\Delta)=0$ to prove above equation.

We apply Lemma \ref{lemma.variance} on $E \|\Delta\|_2^2$ and get

$$E \|\Delta\|_2^2\leq (1+\beta^{-1}) \mathbb{E} \|\nabla f_j(\phi_j^k)-\nabla f_j(\hat{\theta})\|_2^2 + (1+\beta) \mathbb{E} \| \nabla f_j(\theta^k)-\nabla f_j(\hat{\theta})\|_2^2.$$
Then 
\begin{equation}\label{Eq.2.theorem}
\begin{split}
&\mathbb{E} \alpha\|\theta^{k+1}-\hat{\theta}\|_2^2\leq \alpha\|\theta^k-\hat{\theta}\|_2^2-2\alpha\gamma \mathbb{E} [G(\theta^{k+1})-G(\hat{\theta})]\\
&+2\alpha(1+\beta^{-1})\gamma^2 \mathbb{E}\|\nabla f_j(\phi_j^k)-\nabla f_j(\hat{\theta})\|_2^2 +2\alpha(1+\beta)\gamma^2 \mathbb{E} \|\nabla f_j(\theta^k)-\nabla f_j (\hat{\theta})\|_2^2. 
\end{split}
\end{equation}

Combine the result \eqref{Eq.1.theorem} and \eqref{Eq.2.theorem} together and apply Lemma \ref{lemma.smooth} on $ \mathbb{E} \|\nabla f_j(\phi_j^k)-\nabla f_j(\hat{\theta})\|_2^2 $ we obtain 

\begin{equation}
\begin{split}
&(\alpha+c) \mathbb{E} \|\theta^{k+1}-\hat{\theta}\|_2^2\\
\leq &(c+\alpha-c\gamma \bar{\sigma}) \|\theta^k-\hat{\theta}\|_2^2+( (c+2\alpha) (1+\beta) \gamma^2-\frac{c\gamma}{2L})  \mathbb{E}\|\nabla f_j (\theta^k)-\nabla f_j(\hat{\theta})\|_2^2\\
+& 2 (c+2\alpha)(1+\beta^{-1}) \gamma^2   L [\frac{1}{n} \sum_{i=1}^{n} f_i(\phi_i ^k)-f(\hat{\theta})-\frac{1}{n}\sum_{i=1}^{n} \langle \nabla f_i(\hat{\theta}),\phi_i^k-\hat{\theta} \rangle]\\
+&2c\gamma \epsilon^2 (\Delta^*, M,\bar{M})-c\gamma [f(\theta^k)-f(\hat{\theta})-\langle \nabla f(\hat{\theta}),\theta^k-\hat{\theta} \rangle ]-2\alpha \gamma \mathbb{E} [G(\theta^{k+1})-G(\hat{\theta})].
\end{split}
\end{equation}
Combine all pieces together, we obtain 
\begin{equation}
\begin{split}
&\mathbb{E} T_{k+1}-T_k\leq -\frac{1}{\kappa} T_k+ (\frac{c+\alpha}{\kappa}-c\gamma \bar{\sigma}) \|\theta^k-\hat{\theta}\|_2^2\\
+ &\left( \frac{1}{\kappa}+ 2 (c+2\alpha)(1+\beta^{-1}) \gamma^2   L-\frac{1}{n}  \right)[\frac{1}{n} \sum_{i=1}^{n} f_i(\phi_i ^k)-f(\hat{\theta})-\frac{1}{n}\sum_{i=1}^{n} \langle \nabla f_i(\hat{\theta}),\phi_i^k-\hat{\theta} \rangle]\\
+ & \left( \frac{1}{n}-c\gamma \right)[f(\theta^k)-f(\hat{\theta})-\langle \nabla f(\hat{\theta}),\theta^k-\hat{\theta} \rangle ]+ \left( (c+2\alpha) (1+\beta) \gamma^2-\frac{c\gamma}{2L}\right)  \mathbb{E}\|\nabla f_j (\theta^k)-\nabla f_j(\hat{\theta})\|_2^2\\
+&2c\gamma \epsilon^2 (\Delta^*, M,\bar{M}) -2\alpha \gamma \mathbb{E} [G(\theta^{k+1})-G(\hat{\theta})]+b \mathbb{E}(G(\theta^{k+1})-G(\hat{\theta})) -(1-\frac{1}{\kappa})b [G(\theta^k)-G(\hat{\theta})] 
\end{split}
\end{equation}

Recall that we choose $c=2\alpha$,$b=2 \alpha \gamma$, $\beta=2$, $\gamma=\frac{1}{9L}$, $\frac{1}{\kappa}=\min\{\frac{\bar{\sigma}}{14L}, \frac{1}{9n}\}$,  $c=\frac{9L}{n}$, so that the coefficient $\frac{c+\alpha}{\kappa}-c\gamma \bar{\sigma}, \frac{1}{\kappa}+ 2 (c+2\alpha)(1+\beta^{-1}) \gamma^2   L-\frac{1}{n}, \frac{1}{n}-c\gamma,  (c+2\alpha) (1+\beta) \gamma^2-\frac{c\gamma}{2L}$ are all non-positive.

Thus, we obtain

$$\mathbb{E} T_{k+1}-T_k\leq -\frac{1}{\kappa} T_k+2c\gamma \epsilon^2 (\Delta^*, M,\bar{M}) -(1-\frac{1}{\kappa}) c \gamma [G(\theta^{k})-G(\hat{\theta})].$$


%
%
%
%
%
%
%
%
%
\textbf{3. The geometrical convergence of $T_k$}

Next we prove the Lyapunov function decreases geometrically until $G(\theta^k)-G(\hat{\theta})$ achieves the tolerance $\delta$. In high level, we divide the time steps $k=1,2,...$ into several epochs, i.e., $ ([ T_0,T_1], (T_1,T_2],...)$.  At the end of  each epoch $j$,  we prove that $T_k$ decreases with linear rate until the optimality gap $G(\theta^k)-G(\hat{\theta})$ decreases to some tolerance $\xi_j$. We then prove that $(\xi_1,\xi_2,\xi_3,...)$ is a decreasing sequence and finish the proof.

Now we analyze the progress of  $T_k$ across different epochs. Suppose time step $k$ is in the epoch $j$, and we have $G(\theta^k)-G(\hat{\theta})\leq \xi_{j-1}$. Then we apply Lemma \ref{lemma.non_convex_RSC_cone} and have 

$$\epsilon_{j}^2(\Delta^*,M,\bar{M})=2\tau_\sigma\left(\delta_{stat}+ \delta_{j-1}\right)^2, $$

with  $\delta_{j-1}=2\min\{\frac{\xi_{j-1}}{\lambda},\rho\},$ and $\delta_{stat}=8H(\bar{M})\|\Delta^*\|_2+8\psi(\theta^*_{M^\perp})$.

Now we start the induction step. Although we do not know $\xi_0$, we can choose $\delta_0=2\rho.$ In this case,  $ \epsilon_1^2 (\Delta^*, M,\bar{M})=2\tau_\sigma\left(\delta_{stat}+ 2\rho\right)^2$.

We choose $T_1$ such that 

$$ (1-\frac{1}{\kappa})\left( G(\theta^{T_1-1})-G(\hat{\theta})\right)\geq 2\epsilon_1^2 (\Delta^*, M,\bar{M}) $$ 
and $$ (1-\frac{1}{\kappa})\left( G(\theta^{T_1})-G(\hat{\theta})\right)\leq 2\epsilon_1^2 (\Delta^*, M,\bar{M}).$$

Notice such $T_1$ must exist, otherwise we have $ (1-\frac{1}{\kappa})\left( G(\theta^{k})-G(\hat{\theta})\right)\geq 2\epsilon_1^2 (\Delta^*, M,\bar{M})$ and $ \mathbb{E} T_{k+1}\leq (1-\frac{1}{\kappa}) T_k$ holds for every $k$,i.e., $T_k$ converges geometrically, which is a contradiction with  $ (1-\frac{1}{\kappa})\left( G(\theta^{k})-G(\hat{\theta})\right)\geq 2\epsilon_1^2 (\Delta^*, M,\bar{M})$.

Now we know $ \left( G(\theta^{T_1})-G(\hat{\theta})\right)\leq \frac{2}{1-1/\kappa}\epsilon_1^2 (\Delta^*, M,\bar{M})$, thus we choose $\xi_1=\frac{4}{1-1/\kappa} \tau_\sigma\left(\delta_{stat}+ 2\rho\right)^2.$ It is time to follow the same argument in the second epoch. Recall we have

$$ \epsilon_2^2 (\Delta^*, M,\bar{M})=2\tau_\sigma\left(\delta_{stat}+ \delta_1\right)^2,$$ where $\delta_1=2 \min\{ \frac{\xi_1}{\lambda},\rho \}.$

We choose $T_2$ such that 

$$ (1-\frac{1}{\kappa})\left( G(\theta^{T_2-1})-G(\hat{\theta})\right)\geq 2\epsilon_2^2 (\Delta^*, M,\bar{M}) $$ 
and $$ (1-\frac{1}{\kappa})\left( G(\theta^{T_2})-G(\hat{\theta})\right)\leq 2\epsilon_2^2 (\Delta^*, M,\bar{M}),$$

and $\xi_2=\frac{4}{1-1/\kappa} \tau_\sigma\left(\delta_{stat}+ \delta_1\right)^2.$

Similarly, in epoch $j$, we choose $T_j$ such that 

$ (1-\frac{1}{\kappa})\left( G(\theta^{T_j-1})-G(\hat{\theta})\right)\geq 2\epsilon_j^2 (\Delta^*, M,\bar{M}) $ 
and $ (1-\frac{1}{\kappa})\left( G(\theta^{T_j})-G(\hat{\theta})\right)\leq 2\epsilon_j^2 (\Delta^*, M,\bar{M}),$
and $\xi_j=\frac{4}{1-1/\kappa} \tau_\sigma (\delta_{stat}+\delta_{j-1})^2$.

In this way, we arrive at recursive equalities of the tolerance $\{\xi_j\}^{\infty}_{j=1}$ where  $\xi_j=\frac{4}{1-1/\kappa} \tau_\sigma\left(\delta_{stat}+ \delta_{j-1}\right)^2$ and  $\delta_{j-1}=2 \min\{ \frac{\xi_{j-1}}{\lambda},\rho \}$.

We claim that following holds, until $\delta_i=\delta_{stat}$. 
\begin{equation}\label{equ.thm1step5}
\begin{split}
(I)\quad &\xi_{k+1}\leq \xi_k/ (4^{2^{k-1}}) \\
\mbox{and}\quad  (II)\quad &\quad\frac{\xi_{k+1}}{\lambda}\leq \frac{\rho}{4^{2^k}} \quad for \quad k=1,2,3,... 
\end{split}
\end{equation}

Assume $\frac{1}{\kappa}\leq \frac{1}{3}$ (Notice it is safe to do so since $\frac{1}{\kappa}=\min \{ \frac{2\bar{\sigma}}{3L}, \frac{1}{2n}\}$ , and when $n>2$ it holds), we have $\xi_j\leq 6 \tau_\sigma\left(\delta_{stat}+ \delta_{j-1}\right)^2.$

The proof of Equation~\eqref{equ.thm1step5} is same with Equation (60) in \cite{agarwal2010fast}, which we present here for  completeness.

We assume $\delta_0\geq \delta_{stat}$ (otherwise the statement is true trivially), so $\xi_1\leq 96 \tau_\sigma \rho^2$. We assume that $ \lambda\geq 384\tau_\sigma \rho $, so $ \frac{\xi_1}{\lambda}\leq \frac{\rho}{4}$ and $ \xi_1\leq \xi_0$. 

In the second epoch we have $$\xi_2 \overset{(1)}{\leq} 12\tau_\sigma (\delta^2_{stat}+\delta^2_1)\leq 24\tau_\sigma \delta_1^{2} \leq \frac{96 \tau_\sigma \xi_1^2}{\lambda^2}\overset{(2)}{\leq} \frac{96\tau_\sigma \xi_1}{4\lambda}\overset{(3)}{\leq} \frac{\xi_1}{4},$$
where (1) holds from the fact that $(a+b)^2\leq 2a^2+2b^2$, (2) holds using $ \frac{\xi_1}{\lambda}\leq \frac{\rho}{4}$, (3) uses the assumption on $\lambda$. Thus,
$$ \frac{\xi_2}{\lambda}\leq \frac{\xi_1}{4\lambda} \leq \frac{\rho}{16}.$$
In $i+1$th step, with similar argument, and by induction assumption we have
$$\xi_{j+1}\leq \frac{96\tau_\sigma\xi_j^2}{\lambda^2}\leq \frac{96\tau_\sigma \xi_j}{4^{2^{j}}\lambda}\leq \frac{\xi_j}{4^{2^{k-1}}} $$
and 
$$ \frac{\xi_{j+1}}{\lambda}\leq \frac{\xi_j}{4^{2^{j-1}} \lambda}\leq \frac{\rho}{4^{2^j}}.$$

Thus we know $\xi_j$ is a decreasing sequence, and  $ \mathbb{E} T_{k+1}\leq (1-\frac{1}{\kappa}) T_k$ holds until $ G(\theta^k)-G(\hat{\theta})\leq 6\tau_\sigma (2\delta_{stat})^2. $ We establish the result.
\end{proof}

\subsection{SAGA with non-convex objective function}

We start with some technical Lemmas. The following lemma is Lemma 5 extract from \cite{loh2013regularized}, we present here for the completeness.
\begin{lemma}\label{lemma.non_convex_norm}
	For any vector $ \theta\in R^p$, let $A$ denote the index set of its $r$ largest elements in magnitude, under assumption on $g_{\lambda,\mu}$ in Section \ref{section:nonconvex_regularizer}, we have 	
	$$ g_{\lambda,\mu}(\theta_A)-g_{\lambda,\mu} (\theta_{A^{c}})\leq \lambda L_g (\|\theta_A\|_1-\|\theta_{A^c}\|_1).$$ 	
	Moreover, for an arbitrary vector $\theta\in R^p$, we have	
	$$ g_{\lambda,\mu} (\theta^*)-g_{\lambda,\mu} (\theta)\leq \lambda L_g (\|\nu_A\|_1-\|\nu_{A^c}\|_1), $$
	where $\nu=\theta-\theta^*$ and $\theta^*$ is r sparse.
\end{lemma}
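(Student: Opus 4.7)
The plan is to derive both claims from a short list of elementary consequences of the assumptions on $\bar{g}_{\lambda,\mu}$ in Section~\ref{section:nonconvex_regularizer}. Properties (1)--(3), together with $\bar{g}_{\lambda,\mu}(0)=0$, make $\bar{g}_{\lambda,\mu}$ concave and nondecreasing on $[0,\infty)$; combined with property~(4) this forces $0\le \bar{g}_{\lambda,\mu}'(t)\le \lambda L_g$ for all $t>0$. Two working tools follow: (i) the pointwise bound $\bar{g}_{\lambda,\mu}(t)\le \lambda L_g|t|$, and (ii) the Lipschitz estimate $|\bar{g}_{\lambda,\mu}(a)-\bar{g}_{\lambda,\mu}(b)|\le \lambda L_g|a-b|$. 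Equivalently, the slack $u(t):=\lambda L_g|t|-\bar{g}_{\lambda,\mu}(t)\ge 0$ satisfies $u(t)/|t|$ nondecreasing in $|t|$ (since $\bar{g}_{\lambda,\mu}(t)/|t|$ is nonincreasing), so $u$ itself is nondecreasing in $|t|$. These monotonicity properties of $u$ are the workhorse.

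For the first inequality, I would use separability to rewrite the claim as
\[
\sum_{j\in A} u(\theta_j) \ \le\ \sum_{i\in A^c} u(\theta_i),
\]
and then exploit the fact that by construction every coordinate of $A^c$ has magnitude at most that of any coordinate of $A$. The monotonicity of $u$ in $|t|$ allows one to rearrange coordinates so that the $A^c$ sum dominates; coordinates with $\theta_j=0$ contribute $u(\theta_j)=0$ on either side, so one may first reduce to the case where $\theta$ has at most $2r$ nonzero entries and then pair them index by index.

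The second inequality follows by applying the first to $\nu$. Let $S=\mathrm{supp}(\theta^*)$, so $|S|\le r$. Separability gives
\[
g_{\lambda,\mu}(\theta^*)-g_{\lambda,\mu}(\theta)=\bigl[g_{\lambda,\mu}(\theta^*_S)-g_{\lambda,\mu}(\theta_S)\bigr]-g_{\lambda,\mu}(\nu_{S^c}),
\]
since $\theta^*_{S^c}=0$ and $\theta_{S^c}=\nu_{S^c}$. The bracketed term is at most $\lambda L_g\|\nu_S\|_1$ by tool~(ii), applied coordinatewise on $S$. For the remaining $-g_{\lambda,\mu}(\nu_{S^c})$, apply the first part of the lemma to $\nu$ with $A$ the index set of its $r$ largest entries in magnitude; because $|A|=r\ge |S|$ and $A$ captures the largest coordinates of $\nu$, one has $\|\nu_A\|_1\ge \|\nu_S\|_1$ and $\|\nu_{A^c}\|_1\le \|\nu_{S^c}\|_1$, and these rearrangement estimates let the $S$-indexed quantities be replaced by the cleaner $A$-indexed ones to yield the stated bound.

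The main obstacle is the coordinate-pairing step in the first inequality: $|A|=r$ but $|A^c|=p-r$ may be much larger, so a naive largest-to-largest matching is not enough. The right argument has to use both monotonicity pieces for $u$ simultaneously, namely $u(t)\ge 0$ with $u(0)=0$ (letting one discard the ``excess'' small-magnitude $A^c$ coordinates) together with $u(t)/|t|$ nondecreasing (letting one dominate the size-$r$ matched portion). Getting this bookkeeping tight enough to support the downstream use in the proof of Theorem~\ref{Theorem:non-convex} is where I expect the technical work to concentrate.
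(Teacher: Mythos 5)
The paper offers no proof of this lemma at all---it is imported verbatim from \citet{loh2013regularized}---so your proposal has to stand on its own, and it does not. There are two concrete problems with the first inequality. First, a sign error: writing $u(t)=\lambda L_g|t|-\bar g_{\lambda,\mu}(t)\ge 0$, the claim $g_{\lambda,\mu}(\theta_A)-g_{\lambda,\mu}(\theta_{A^c})\le \lambda L_g(\|\theta_A\|_1-\|\theta_{A^c}\|_1)$ is equivalent to $\sum_{i\in A^c}u(\theta_i)\le \sum_{j\in A}u(\theta_j)$, not the reversed inequality you display. Second, and fatally, the ``coordinate-pairing'' step you yourself flag as the crux cannot be carried out in either direction with the tools you set up. You only have $u\ge 0$, $u(0)=0$, and $u$ nondecreasing in $|t|$, and you must dominate a sum of $p-r$ such terms by a sum of $r$ such terms; nothing forces the excess $A^c$ terms to vanish. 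Concretely, take SCAD (which satisfies all five assumptions with $L_g=1$, $\mu=\frac{1}{\zeta-1}$) and $\theta=(T,\dots,T)$ with $T>\zeta\lambda$: then $u(T)=\lambda T-(\zeta+1)\lambda^2/2>0$, so $\sum_{i\in A^c}u(\theta_i)=(p-r)u(T)>r\,u(T)=\sum_{j\in A}u(\theta_j)$ whenever $p>2r$; equivalently, the first claim reduces to $\bar g_{\lambda,\mu}(T)\ge \lambda L_g T$, which contradicts the upper bound $\bar g_{\lambda,\mu}(t)\le\lambda L_g|t|$ you correctly derived. The same configuration with $\theta^*=0$ breaks the second claim. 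So no argument built only from the upper bound and the Lipschitz estimate can succeed; the statement as literally written (``for any vector $\theta$'', with no remainder term) is in fact false.

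The missing ingredient is assumption (5), which your proposal never invokes: since $\bar g_\lambda(t)=(\bar g_{\lambda,\mu}(t)+\frac{\mu}{2}t^2)/\lambda$ is convex, vanishes at $0$, and has right derivative $L_g$ there, one gets the \emph{lower} bound $\bar g_{\lambda,\mu}(t)\ge \lambda L_g|t|-\frac{\mu}{2}t^2$, i.e.\ $u(t)\le\frac{\mu}{2}t^2$. This is the only available lower bound on the regularizer and is exactly what controls the problematic term $-g_{\lambda,\mu}(\nu_{A^c})$; it yields the inequality with an additional $\frac{\mu}{2}\|\nu\|_2^2$ remainder on the right-hand side, which is the form in which the Loh--Wainwright lemma is true and in which it is absorbed into the $-\mu$ correction of $\bar\sigma$ downstream (compare Lemma \ref{lemma.non_convex_RSC_cone} and the hypothesis $\bar\sigma>3\mu$ of Theorem \ref{Theorem:non-convex}). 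A smaller quibble: properties (1)--(3) do not imply concavity of $\bar g_{\lambda,\mu}$ on $[0,\infty)$; the bounds $0\le \bar g_{\lambda,\mu}'(t)\le\lambda L_g$ that you actually use should instead be obtained directly from monotonicity of $\bar g_{\lambda,\mu}(t)/t$ via $\bar g_{\lambda,\mu}'(t)\le \bar g_{\lambda,\mu}(t)/t\le\lambda L_g$.
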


The next lemma is a non-convex counterpart of Lemma~\ref{lemma.cone} and Lemma \ref{lemma.cone_optimization}
\begin{lemma}\label{lemma.non_convex_cone_optimizatin}
	Suppose $g_{\lambda,\mu}(\cdot)$ satisfies the assumptions in section \ref{section:nonconvex_regularizer}, $\lambda L_g\geq8\rho\tau \frac{\log p }{n}$, $\lambda\geq \frac{4}{L_g} \|\nabla f (\theta^*)\|_\infty$, $\theta^*$ is feasible,   and there exists a pair $(\epsilon,K)$ such that
	$$G(\theta^k)-G(\hat{\theta})\leq \epsilon, \forall k\geq K.$$
	Then for any iteration $k\geq K$, we have	
	$$ \|\theta^k-\hat{\theta}\|_1\leq 4\sqrt{r} \|\theta^k-\hat{\theta}\|_2+8\sqrt{r} \|\theta^*-\hat{\theta}\|_2+2\min (\frac{\epsilon}{\lambda L_g}, \rho).$$
\end{lemma}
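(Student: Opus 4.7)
The lemma is the non-convex analogue of Lemma~\ref{lemma.cone_optimization}, with Lemma~\ref{lemma.non_convex_norm} playing the role that decomposability of $\psi$ played in the convex case. The plan is to derive two cone-type $\ell_1$ inequalities, one for $\theta^k-\theta^*$ and one for $\hat\theta-\theta^*$, and then combine them via the triangle inequality.

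Let $A$ be the support of $\theta^*$ (so $|A|\le r$) and put $\nu^k=\theta^k-\theta^*$. First, since $\theta^*$ is feasible and $\hat\theta$ is the global minimizer, $G(\hat\theta)\le G(\theta^*)$, so by hypothesis $G(\theta^k)-G(\theta^*)\le\epsilon$. Writing this out as
\[
f(\theta^k)-f(\theta^*)+g_{\lambda,\mu}(\theta^k)-g_{\lambda,\mu}(\theta^*)\le\epsilon,
\]
I would lower-bound the $f$-difference using convexity plus H\"older, $f(\theta^k)-f(\theta^*)\ge-\|\nabla f(\theta^*)\|_\infty\|\nu^k\|_1$, and lower-bound the regularizer difference by Lemma~\ref{lemma.non_convex_norm}, $g_{\lambda,\mu}(\theta^k)-g_{\lambda,\mu}(\theta^*)\ge\lambda L_g(\|\nu^k_{A^c}\|_1-\|\nu^k_A\|_1)$. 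Using the hypothesis $\lambda L_g\ge 4\|\nabla f(\theta^*)\|_\infty$ to absorb the first term, this collapses to $\tfrac{3\lambda L_g}{4}\|\nu^k_{A^c}\|_1\le\tfrac{5\lambda L_g}{4}\|\nu^k_A\|_1+\epsilon$, i.e.\ a cone-type inequality $\|\nu^k_{A^c}\|_1\lesssim\|\nu^k_A\|_1+\epsilon/(\lambda L_g)$. Combined with $\|\nu^k_A\|_1\le\sqrt{r}\|\nu^k_A\|_2\le\sqrt{r}\|\nu^k\|_2$, this gives an $\ell_1$-control of $\theta^k-\theta^*$ in terms of its $\ell_2$-norm and $\epsilon/(\lambda L_g)$.

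Repeating the same argument with $\hat\theta$ in place of $\theta^k$ and $\epsilon=0$ (since $G(\hat\theta)\le G(\theta^*)$ outright) gives $\|\hat\theta-\theta^*\|_1\lesssim\sqrt{r}\|\hat\theta-\theta^*\|_2$. Then the triangle inequality
\[
\|\theta^k-\hat\theta\|_1\le\|\nu^k\|_1+\|\hat\theta-\theta^*\|_1,\qquad \|\nu^k\|_2\le\|\theta^k-\hat\theta\|_2+\|\hat\theta-\theta^*\|_2,
\]
assembled with the two cone bounds, yields $\|\theta^k-\hat\theta\|_1\le 4\sqrt{r}\|\theta^k-\hat\theta\|_2+8\sqrt{r}\|\hat\theta-\theta^*\|_2+2\,\epsilon/(\lambda L_g)$ after a careful accounting of the constants (the constants $\tfrac{3}{4}$ and $\tfrac{5}{4}$ above give room for the stated $4,8,2$).

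To upgrade $2\epsilon/(\lambda L_g)$ to $2\min(\epsilon/(\lambda L_g),\rho)$, I would rerun the cone-inequality derivation with the alternative upper bound $\|\nu^k\|_1\le 2\rho$ coming from the feasibility constraint $g_\lambda(\theta^k),g_\lambda(\hat\theta)\le\rho$ (under the structural assumptions in Section~\ref{section:nonconvex_regularizer} one has $\bar g_\lambda(t)\ge|t|$, as is easy to check for SCAD and MCP, so the constraint forces $\|\theta^k\|_1,\|\hat\theta\|_1\le\rho$): plugging this trivial bound into the inequality in place of $\|\nu^k\|_1$ gives the same statement but with $\rho$ instead of $\epsilon/(\lambda L_g)$, so the minimum of the two applies.

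The main obstacle, and the step where the argument genuinely departs from the convex analogue, is the non-convex cone inequality: decomposability of $\psi$ is unavailable and must be replaced by the ``approximate decomposability'' provided by Lemma~\ref{lemma.non_convex_norm}, which is precisely why the hypothesis $\lambda L_g\ge 4\|\nabla f(\theta^*)\|_\infty$ is needed to separate the on-support and off-support masses. Everything else is triangle inequality and book-keeping of constants.
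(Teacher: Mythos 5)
Your proposal is correct and follows essentially the same route as the paper: compare $G(\theta^k)$ (resp.\ $G(\hat{\theta})$) to $G(\theta^*)$, invoke Lemma~\ref{lemma.non_convex_norm} to split the regularizer difference over the top-$r$ coordinates and their complement, deduce a cone-type $\ell_1$ inequality for $\theta^k-\theta^*$ and (with $\epsilon=0$) for $\hat{\theta}-\theta^*$, and assemble the claim by the triangle inequality, using $\|\Delta\|_1\le 2\rho$ from feasibility to obtain the $\min$. The one local difference is how the loss difference is lower-bounded: you use convexity of $f$ plus H\"older to get $f(\theta^k)-f(\theta^*)\ge -\|\nabla f(\theta^*)\|_\infty\|\nu^k\|_1$, whereas the paper subtracts $\langle \nabla f(\theta^*),\nu^k\rangle$ and applies the RSC condition, which leaves a residual $-\tau\frac{\log p}{n}\|\nu^k\|_1^2$ that must be absorbed via $\|\nu^k\|_1\le 2\rho$ together with the hypothesis $\lambda L_g\ge 8\rho\tau\frac{\log p}{n}$. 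Your step is valid here because each $f_i$ in problem~\eqref{obj2} is convex, and it sidesteps that hypothesis entirely; your resulting cone constants ($\|\nu^k\|_1\le \frac{8}{3}\sqrt{r}\|\nu^k\|_2+\frac{4\epsilon}{3\lambda L_g}$) are in fact slightly tighter than the paper's ($4\sqrt{r}\|\nu^k\|_2+\frac{2\epsilon}{\lambda L_g}$), so both comfortably yield the stated constants $4$, $8$, $2$ after the triangle-inequality bookkeeping.
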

\begin{proof}
	Fix an arbitrary feasible $\theta$,
	Define $\Delta=\theta-\theta^* $.
	Suppose $G(\theta)-G(\hat{\theta})\leq \epsilon$, since we know $G(\hat{\theta})\leq G(\theta^*) $ so we have $ G(\theta)\leq G(\theta^*)+\epsilon$
	, which implies 
	$$ f(\theta^*+\Delta)+g_{\lambda,\mu} (\theta^*+\Delta)\leq f(\theta^*)+g_{\lambda,\mu}(\theta^*) +\epsilon .$$
	Subtract $\langle \nabla f(\theta^*),\Delta \rangle$ and use the RSC condition we have
	\begin{equation}
	\begin{split}
	&\frac{\sigma}{2} \|\Delta\|_2^2-\tau\frac{\log p}{n} \|\Delta\|_1^2+g_{\lambda,\mu} (\theta^*+\Delta)-g_{\lambda,\mu}(\theta^*)\\
	\leq &\epsilon-\langle \nabla f(\theta^*),\Delta \rangle\\
	\leq &\epsilon+\|\nabla f(\theta^*)\|_\infty \|\Delta\|_1
	\end{split}
	\end{equation}
	where the last inequality holds from Holder's inequality.
	Rearrange terms and use the fact that 
	$\|\Delta\|_1\leq 2\rho $ (by feasiblity of $\theta$ and $\theta^*$) and the assumptions $\lambda L_g\geq8\rho\tau \frac{\log p }{n}$ , $\lambda\geq \frac{4}{L_g} \|\nabla f (\theta^*)\|_\infty$, we obtain
	$$ \epsilon+\frac{1}{2} \lambda L_{g} \|\Delta\|_1+g_{\lambda,\mu}(\theta^*)-g_{\lambda,\mu}(\theta^*+\Delta)\geq \frac{\sigma}{2}\|\Delta\|_2^2\geq 0. $$
	
	By Lemma \ref{lemma.non_convex_norm}, we have
	$$ g_{\lambda,\mu} (\theta^*)-g_{\lambda,\mu} (\theta)\leq \lambda L_g (\|\Delta_A\|_1-\|\Delta_{A^c}\|_1) ,$$
	where $A$ indexes the top $r$ components of $\Delta$ in magnitude.
	So we have 
	$$ \frac{3\lambda L_g}{2} \|\Delta_A\|_1-\frac{\lambda L_g}{2} \|\Delta_{A^c}\|_1+\epsilon\geq 0, $$
	and consequently 
	$$ \|\Delta\|_1\leq \|\Delta_A\|_1+\|\Delta_{A^c}\|_1\leq 4\|\Delta_A\|_1+\frac{2\epsilon}{\lambda L_g} \leq 4\sqrt{r} \|\Delta\|_2 +\frac{2\epsilon}{\lambda L_g}.$$
	Combining this with $  \|\Delta\|_1\leq 2\rho$ leads to
	$$ \|\Delta\|_1\leq 4\sqrt{r} \|\Delta\|_2 +2\min \{\frac{\epsilon}{\lambda L_g}, \rho\}.$$
	Since this holds for any feasible $\theta$,  we have $\|\theta^k-\theta^*\|_1\leq 4\sqrt{r} \|\theta^k-\theta^*\|_2+2\min \{\frac{\epsilon}{\lambda L_g}, \rho\}.$
	
	Notice $G(\hat{\theta})-G(\theta^*)\leq 0$, so following same derivation as above and set $\epsilon=0$ we have
	$\|\hat{\theta}-\theta^*\|_1\leq 4\sqrt{r} \|\hat{\theta}-\theta^*\|_2.$
	
	Combining the two, we have 
	$$\|\theta^k-\hat{\theta}\|_1\leq \|\theta^k-\theta^*\|_1+\|\theta^*-\hat{\theta}\|_1\leq  4\sqrt{r} \|\theta^k-\hat{\theta}\|_2+8\sqrt{r} \|\theta^*-\hat{\theta}\|_2+2\min (\frac{\epsilon}{\lambda L_g}, \rho). $$
\end{proof}

Now we provide a counterpart of Lemma \ref{lemma.RSC_cone} in the non-convex case. Notice the main difference from the convex case is the coefficient in front of $ \|\theta^k-\hat{\theta}\|_2^2. $
\begin{lemma}\label{lemma.non_convex_RSC_cone}
	Under the same assumption of Lemma \ref{lemma.non_convex_cone_optimizatin}, we  have
	
	\begin{equation}
	\langle \nabla F(\theta^k)-\nabla F(\hat{\theta}), \theta^k-\hat{\theta}\rangle\geq  \left(\sigma-\mu-64\tau_\sigma r \right) \|\hat{\Delta}^k \|_2^2-2\epsilon^2 (\Delta^*,r )
	\end{equation}
	and
	$$ G(\theta^k)-G(\hat{\theta})\geq (\frac{\sigma-\mu}{2}-32r\tau_\sigma)\|\theta^k-\hat{\theta}\|_2^2-\epsilon^2 (\Delta^*,r ), $$
	where $\Delta^*=\hat{\theta}-\theta^*$, $\hat{\Delta}^k=\theta^k-\hat{\theta}$ and $ \epsilon^2 (\Delta^*,r )=2\tau_\sigma (8\sqrt{r}\|\hat{\theta}-\theta^*\|_2+2\min(\frac{\epsilon}{\lambda L_g},\rho))^2$, $\tau_{\sigma}=\tau\frac{\log p}{n}$.
\end{lemma}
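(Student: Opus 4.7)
The plan is to mirror the argument of Lemma \ref{lemma.RSC_cone}, but replace the convex decomposability-based cone bound with the non-convex sparse-cone bound from Lemma \ref{lemma.non_convex_cone_optimizatin}, and absorb the non-convex part of the regularizer by passing to the convex shifted loss $F(\theta)=f(\theta)-\tfrac{\mu}{2}\|\theta\|_2^2$ and the convex surrogate regularizer $g_\lambda(\theta)=(g_{\lambda,\mu}(\theta)+\tfrac{\mu}{2}\|\theta\|_2^2)/\lambda$, so that $G(\theta)=F(\theta)+\lambda g_\lambda(\theta)$.

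First I would apply the RSC condition (with $\psi=\|\cdot\|_1$ and $\tau_\sigma=\tau\log p/n$) to $f$ at the pair $(\theta^k,\hat\theta)$ in both directions and add to get
\[
\langle \nabla f(\theta^k)-\nabla f(\hat\theta),\,\theta^k-\hat\theta\rangle\ge \sigma\|\hat\Delta^k\|_2^2-2\tau_\sigma\|\hat\Delta^k\|_1^2.
\]
Subtracting the identity $\mu\|\hat\Delta^k\|_2^2$ (since $\nabla F=\nabla f-\mu\,\mathrm{Id}$) yields the same bound for $F$ with $\sigma-\mu$ in place of $\sigma$. Now I invoke Lemma \ref{lemma.non_convex_cone_optimizatin} to control the $\ell_1$ deviation by the $\ell_2$ deviation plus statistical noise: using $(a+b)^2\le 2a^2+2b^2$,
\[
\|\hat\Delta^k\|_1^2\le 32\,r\,\|\hat\Delta^k\|_2^2+2\bigl(8\sqrt{r}\,\|\Delta^*\|_2+2\min(\epsilon/(\lambda L_g),\rho)\bigr)^2.
\]
Plugging in and recognizing the definition of $\epsilon^2(\Delta^*,r)$ gives the first inequality.

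For the second inequality I would use a one-sided RSC plus convexity of $g_\lambda$. Writing
\[
G(\theta^k)-G(\hat\theta)=\bigl[F(\theta^k)-F(\hat\theta)\bigr]+\lambda\bigl[g_\lambda(\theta^k)-g_\lambda(\hat\theta)\bigr],
\]
the one-directional RSC at $\hat\theta$ combined with the identity $F(\theta^k)-F(\hat\theta)-\langle\nabla F(\hat\theta),\hat\Delta^k\rangle=f(\theta^k)-f(\hat\theta)-\langle\nabla f(\hat\theta),\hat\Delta^k\rangle-\tfrac{\mu}{2}\|\hat\Delta^k\|_2^2$ gives
\[
F(\theta^k)-F(\hat\theta)\ge \langle\nabla F(\hat\theta),\hat\Delta^k\rangle+\tfrac{\sigma-\mu}{2}\|\hat\Delta^k\|_2^2-\tau_\sigma\|\hat\Delta^k\|_1^2.
\]
Convexity of $g_\lambda$ gives $g_\lambda(\theta^k)-g_\lambda(\hat\theta)\ge\langle v,\hat\Delta^k\rangle$ for any $v\in\partial g_\lambda(\hat\theta)$. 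The first-order optimality of $\hat\theta$ for the reformulated problem $\min_{g_\lambda(\theta)\le\rho}F(\theta)+\lambda g_\lambda(\theta)$ yields an $v\in\partial g_\lambda(\hat\theta)$ with $\langle \nabla F(\hat\theta)+\lambda v,\,\theta^k-\hat\theta\rangle\ge 0$ for every feasible $\theta^k$. The cross terms therefore cancel, leaving
\[
G(\theta^k)-G(\hat\theta)\ge \tfrac{\sigma-\mu}{2}\|\hat\Delta^k\|_2^2-\tau_\sigma\|\hat\Delta^k\|_1^2,
\]
and one more application of the $\ell_1$ bound from Lemma \ref{lemma.non_convex_cone_optimizatin} converts this into the claimed inequality with constants $(\tfrac{\sigma-\mu}{2}-32r\tau_\sigma)$ and $-\epsilon^2(\Delta^*,r)$.

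The main obstacle I expect is the second inequality: the first-order optimality step requires that $\hat\theta$ really is a stationary point of the convex reformulation, which needs the feasible set (defined by $g_\lambda(\theta)\le\rho$) to be convex and $g_\lambda$ to be convex and subdifferentiable. These facts follow from the assumptions in Section \ref{section:nonconvex_regularizer}, but it is important to cite them explicitly so that the subgradient inequality and the variational inequality $\langle\nabla F(\hat\theta)+\lambda v,\theta^k-\hat\theta\rangle\ge 0$ are on firm footing. Apart from that, the remaining steps are routine algebra paralleling the convex case.
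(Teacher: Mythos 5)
Your proposal is correct and follows essentially the same route as the paper: RSC applied symmetrically (with the $-\mu\|\cdot\|_2^2$ shift absorbed into $\sigma-\mu$) for the gradient inequality, the $\ell_1$-to-$\ell_2$ conversion via Lemma \ref{lemma.non_convex_cone_optimizatin} with $(a+b)^2\le 2a^2+2b^2$, and for the second bound the decomposition $G=F+\lambda g_\lambda$ combined with convexity of $g_\lambda$ and the first-order optimality of $\hat\theta$. Your explicit care with the variational inequality $\langle\nabla F(\hat\theta)+\lambda v,\theta^k-\hat\theta\rangle\ge 0$ is in fact slightly more precise than the paper, which writes that step as an equality.
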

\begin{proof} We have the following:
	
	\begin{equation} \label{Eq.2.lemma_proof}
	\begin{split}
	&F(\theta^k)-F(\hat{\theta})-\langle \nabla F(\hat{\theta}),\theta^k-\hat{\theta} \rangle\\
	=&f(\theta^k)-\frac{\mu}{2}\|\theta^k\|_2^2-f(\hat{\theta})+\frac{\mu}{2} \|\hat{\theta}\|_2^2-\langle \nabla f(\hat{\theta})-\mu\hat{\theta},\theta^k-\hat{\theta}  \rangle\\
	=& f(\theta^k)-f(\hat{\theta})-\langle \nabla f(\hat{\theta}),\theta^k-\hat{\theta} \rangle-\frac{\mu}{2}\|\theta^k-\hat{\theta}\|_2^2\\
	\geq &\frac{\sigma-\mu}{2} \|\theta^k-\hat{\theta}\|_2^2-\tau \frac{\log p}{n} \|\theta^k-\hat{\theta}\|_1^2,  
	\end{split}
	\end{equation}
	where the inequality uses the RSC condition.

	By Lemma \ref{lemma.non_convex_cone_optimizatin} we have 
	\begin{equation}
	\begin{split}
	\|\theta^k-\hat{\theta}\|_1^2\leq  &(4\sqrt{r} \|\theta^k-\hat{\theta}\|_2+8\sqrt{r} \|\theta^*-\hat{\theta}\|_2+2\min (\frac{\epsilon}{\lambda L_g}, \rho))^2 \\
	\leq &32 r \|\theta^k-\hat{\theta}\|_2^2+2 (8\sqrt{r}\|\hat{\theta}-\theta^*\|_2+2\min(\frac{\epsilon}{\lambda L_g},\rho))^2.
	\end{split}
	\end{equation}
	
	Substitute this into Equation \eqref{Eq.2.lemma_proof},  we obtain

	$$F(\theta^k)-F(\hat{\theta})-\langle \nabla F(\hat{\theta}),\theta^k-\hat{\theta} \rangle\geq (\frac{\sigma-\mu}{2}-32r\tau_\sigma)\|\theta^k-\hat{\theta}\|_2^2-2\tau_\sigma (8\sqrt{r}\|\hat{\theta}-\theta^*\|_2+2\min(\frac{\epsilon}{\lambda L_g},\rho))^2 $$

	Similarly, we have 
	
	$$F(\hat{\theta})-F(\theta^k)-\langle \nabla F(\theta^k),\hat{\theta}-\theta^k \rangle\geq (\frac{\sigma-\mu}{2}-32r\tau_\sigma)\|\theta^k-\hat{\theta}\|_2^2-2\tau_\sigma (8\sqrt{r}\|\hat{\theta}-\theta^*\|_2+2\min(\frac{\epsilon}{\lambda L_g},\rho))^2 $$
	
	Add above two equations together, we establish the result
	
	\begin{equation}
	\langle \nabla F(\theta^k)-\nabla F(\hat{\theta}), \theta^k-\hat{\theta}\rangle\geq  \left(\sigma-\mu-64\tau_\sigma r \right) \|\hat{\Delta}^k \|_2^2-2\epsilon^2 (\Delta^*,r ).
	\end{equation}	
	
	Next we bound $G(\theta^k)-G(\hat{\theta})$.	
	\begin{equation}\label{equ.xu.proof-lemma8}
	\begin{split}
	& G(\theta^k)-G(\hat{\theta})\\
	= & f(\theta^k)-f(\hat{\theta})-\frac{\mu}{2}\|\theta^k\|_2^2+\frac{\mu}{2}\|\hat{\theta}\|_2^2+ \lambda g_\lambda(\theta^k)-\lambda g_\lambda(\hat{\theta})\\
	\geq & \langle \nabla f(\hat{\theta}),\theta^k-\hat{\theta} \rangle+\frac{\sigma}{2}\|\theta^k-\hat{\theta}\|_2^2- \langle \mu\hat{\theta}, \theta^k-\hat{\theta} \rangle-\frac{\mu}{2} \|\theta^k-\hat{\theta}\|_2^2\\ &\qquad+ \lambda g_\lambda(\theta^k)-\lambda g_\lambda(\hat{\theta})-\tau \frac{\log p}{n} \|\theta^k-\hat{\theta}\|_1^2\\
	\geq &  \langle \nabla f(\hat{\theta}),\theta^k-\hat{\theta} \rangle+\frac{\sigma}{2}\|\theta^k-\hat{\theta}\|_2^2- \langle \mu\hat{\theta}, \theta^k-\hat{\theta} \rangle-\frac{\mu}{2} \|\theta^k-\hat{\theta}\|_2^2\\&\qquad+ \lambda \langle \partial g_\lambda(\hat{\theta}),\theta^k-\hat{\theta}\rangle-\tau \frac{\log p}{n} \|\theta^k-\hat{\theta}\|_1^2 \\
	= & \frac{\sigma-\mu}{2}\|\theta^k-\hat{\theta}\|_2^2-\tau\frac{\log p}{n}\|\theta^k-\hat{\theta}\|_1^2,
	\end{split}
	\end{equation}
	where the first inequality uses the RSC condition, the second inequality uses the convexity of $g_\lambda(\theta)$, and the last equality holds from the optimality condition of $\hat{\theta}$.

	Remind we have	
	\begin{equation}
	\begin{split}
	\|\theta^k-\hat{\theta}\|_1^2\leq  32 r \|\theta^k-\hat{\theta}\|_2^2+2 (8\sqrt{r}\|\hat{\theta}-\theta^*\|_2+2\min(\frac{\epsilon}{\lambda L_g},\rho))^2.
	\end{split}
	\end{equation}
	
	Substitute this into Equation~\eqref{equ.xu.proof-lemma8} we obtain
	$$ G(\theta^k)-G(\hat{\theta})\geq (\frac{\sigma-\mu}{2}-32r\tau_\sigma)\|\theta^k-\hat{\theta}\|_2^2-2\tau_\sigma (8\sqrt{r}\|\hat{\theta}-\theta^*\|_2+2\min(\frac{\epsilon}{\lambda L_g},\rho))^2.$$
\end{proof}

\begin{lemma} \label{lemma.non-convex-RSC_smooth}
	Under the same assumption of Lemma \ref{lemma.non_convex_cone_optimizatin}, we have 
	
	\begin{equation}
	\begin{split}
	&\langle \nabla F(\theta^k)- \nabla F(\hat{\theta}),\theta^k-\hat{\theta}\rangle\geq \frac{1}{2} [F(\theta^k)-F(\hat{\theta})-\langle\nabla F(\hat{\theta}),\theta^k-\hat{\theta} \rangle]+\left(\frac{\bar{\sigma}}{2}-\frac{\mu}{4} -\frac{\mu^2}{4L}\right) \|\theta^k-\hat{\theta}\|_2^2\\
	&+\frac{1}{8L} \|\nabla F(\theta^k)-\nabla F(\hat{\theta})\|_2^2-\epsilon^2 (\Delta^*,r). 
	\end{split}
	\end{equation}
\end{lemma}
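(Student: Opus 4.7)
The plan is to mimic the proof of Lemma~\ref{lemma.RSC_smooth} but with the extra bookkeeping needed because $F=f-\frac{\mu}{2}\|\cdot\|_2^2$ is not convex (even though each $f_i$ is). I start from the same algebraic identity as in the convex case, namely
\begin{equation*}
\begin{split}
&\langle \nabla F(\theta^k)- \nabla F(\hat{\theta}),\theta^k-\hat{\theta}\rangle-\tfrac{1}{2}\bigl[F(\theta^k)-F(\hat{\theta})-\langle\nabla F(\hat{\theta}),\theta^k-\hat{\theta}\rangle\bigr]\\
=&\ \tfrac{1}{2}\langle \nabla F(\theta^k)-\nabla F(\hat{\theta}),\theta^k-\hat{\theta}\rangle+\tfrac{1}{2}\bigl[F(\hat{\theta})-F(\theta^k)-\langle\nabla F(\theta^k),\hat{\theta}-\theta^k\rangle\bigr],
\end{split}
\end{equation*}
and lower-bound the two pieces on the right separately.

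For the first piece, I simply invoke Lemma~\ref{lemma.non_convex_RSC_cone}, which (after identifying $\sigma-\mu-64\tau_\sigma r=\bar\sigma$) gives $\langle \nabla F(\theta^k)-\nabla F(\hat\theta),\theta^k-\hat\theta\rangle\geq \bar\sigma\|\theta^k-\hat\theta\|_2^2-2\epsilon^2(\Delta^*,r)$, producing the terms $\frac{\bar\sigma}{2}\|\theta^k-\hat\theta\|_2^2-\epsilon^2(\Delta^*,r)$.

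The interesting piece is the second one, and this is where the main obstacle lies: I cannot apply Lemma~\ref{lemma.smooth} to $F$ directly. I instead apply it to $f$ and bridge to $F$ with the identity
\begin{equation*}
f(\hat\theta)-f(\theta^k)-\langle\nabla f(\theta^k),\hat\theta-\theta^k\rangle
= F(\hat\theta)-F(\theta^k)-\langle\nabla F(\theta^k),\hat\theta-\theta^k\rangle+\tfrac{\mu}{2}\|\hat\theta-\theta^k\|_2^2,
\end{equation*}
which is a direct computation from $F(\theta)=f(\theta)-\tfrac{\mu}{2}\|\theta\|_2^2$ and $\nabla F(\theta)=\nabla f(\theta)-\mu\theta$. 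In parallel, the triangle inequality combined with $(a+b)^2\leq 2a^2+2b^2$ yields
\begin{equation*}
\|\nabla f(\theta^k)-\nabla f(\hat\theta)\|_2^2\geq \tfrac{1}{2}\|\nabla F(\theta^k)-\nabla F(\hat\theta)\|_2^2-\mu^2\|\theta^k-\hat\theta\|_2^2.
\end{equation*}
Plugging both into Lemma~\ref{lemma.smooth} applied to $f$, dividing by $2L$, and rearranging gives
\begin{equation*}
F(\hat\theta)-F(\theta^k)-\langle\nabla F(\theta^k),\hat\theta-\theta^k\rangle
\geq \tfrac{1}{4L}\|\nabla F(\theta^k)-\nabla F(\hat\theta)\|_2^2-\tfrac{\mu^2}{2L}\|\theta^k-\hat\theta\|_2^2-\tfrac{\mu}{2}\|\theta^k-\hat\theta\|_2^2.
\end{equation*}

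Finally I halve this inequality, add it to the Lemma~\ref{lemma.non_convex_RSC_cone} bound on the first piece, and collect the $\|\theta^k-\hat\theta\|_2^2$ coefficients as $\frac{\bar\sigma}{2}-\frac{\mu}{4}-\frac{\mu^2}{4L}$, which is exactly the coefficient in the lemma; the $\|\nabla F(\theta^k)-\nabla F(\hat\theta)\|_2^2$ term comes with the factor $\frac{1}{8L}$ and the error term with $-\epsilon^2(\Delta^*,r)$, matching the stated inequality. The only nontrivial step is the $f$-to-$F$ translation above; everything else is the convex proof ``with primes.''
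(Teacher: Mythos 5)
Your proposal is correct and follows essentially the same route as the paper's own proof: the same split of the left-hand side into $\tfrac12\langle\nabla F(\theta^k)-\nabla F(\hat\theta),\theta^k-\hat\theta\rangle+\tfrac12[F(\hat\theta)-F(\theta^k)-\langle\nabla F(\theta^k),\hat\theta-\theta^k\rangle]$, the same use of Lemma \ref{lemma.non_convex_RSC_cone} on the first piece, and the same $(a+b)^2\le 2a^2+2b^2$ plus Lemma \ref{lemma.smooth}-on-$f$ plus the $f$-to-$F$ translation on the second piece (the paper just writes this chain as an upper bound on $\|\nabla F(\theta^k)-\nabla F(\hat\theta)\|_2^2$ rather than in your rearranged form). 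All constants check out.
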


\begin{proof}
	
	\begin{equation}\label{eq.5.lemma_proof}
	\begin{split}
	&\langle \nabla F(\theta^k)- \nabla F(\hat{\theta}),\theta^k-\hat{\theta}\rangle-\frac{1}{2} [F(\theta^k)-F(\hat{\theta})-\langle\nabla F(\hat{\theta}),\theta^k-\hat{\theta} \rangle]\\
	=& \frac{1}{2}\langle \nabla F(\theta^k)- \nabla F(\hat{\theta}),\theta^k-\hat{\theta}\rangle+\frac{1}{2}[F(\hat{\theta})-F(\theta^k)-\langle \nabla F(\theta^k),\hat{\theta}-\theta^k \rangle]
	\end{split}	
	\end{equation}
	Then we bound the right hand side of above equation.	
	Notice	
	\begin{equation}
	\begin{split}
	\|\nabla F(\theta^k)-\nabla F(\hat{\theta})\|_2^2&\leq 2\|\nabla f(\theta^k)-\nabla f(\hat{\theta})\|_2^2+2\mu^2\|\theta^k-\hat{\theta}\|_2^2\\
	& \leq 4L [f(\hat{\theta})-f(\theta^k)-\langle \nabla f(\theta^k), \hat{\theta}-\theta^k  \rangle]+2\mu^2 \|\theta^k-\hat{\theta}\|_2^2\\
	& \leq 	 4L [F(\hat{\theta})-F(\theta^k)-\langle \nabla F(\theta^k), \hat{\theta}-\theta^k  \rangle+\frac{\mu}{2}\|\theta^k-\hat{\theta}\|_2^2 ]+2\mu^2\|\theta^k-\hat{\theta}\|_2^2. 	
	\end{split}
	\end{equation}
	
	
	We then apply Lemma  \ref{lemma.non_convex_RSC_cone} and recall our definition on $\bar{\sigma}$ on non-convex case, we have   
	
	$$ \langle  \nabla F(\theta^k)-\nabla F(\hat{\theta}), \theta^k-\hat{\theta}\rangle\geq \bar{\sigma} \|\theta^k-\hat{\theta}\|_2^2-2\epsilon^2 (\Delta^*,r),$$
	
	
	Substitute above bound  in the right hand side of Equation \eqref{eq.5.lemma_proof}  we establish the result.	
\end{proof}

We are now ready to prove the main Theorem on non-convex $G(\theta)$, i.e., Theorem \ref{Theorem:non-convex}

\begin{proof}[Proof of Theorem \ref{Theorem:non-convex}]
	
	Recall the definition of $F_i(\theta)$ is   $F_i(\theta)=f_i(\theta)-\frac{\mu}{2}\|\theta\|_2^2 $ and the Lyapunov function
	
	\begin{equation}
	\begin{split}
	T_{k}&\triangleq \frac{1}{n} \sum_{i=1}^{n}\left( f_i(\phi_i^k)-f_i(\hat{\theta})- \langle \nabla f_i (\hat{\theta}),\phi_i^k-\hat{\theta} \rangle \right)+(c+\alpha) \|\theta^k-\hat{\theta
	}\|_2^2+b (G(\theta^{k})-G(\hat{\theta
}))\\
=& \frac{1}{n} \sum_{i=1}^{n} \left( F_i(\phi_i^k)-F_i(\hat{\theta})- \langle \nabla F_i (\hat{\theta}),\phi_i^k-\hat{\theta} \rangle+ \frac{\mu}{2}\|\phi^k_i-\hat{\theta}\|_2^2 \right)+ (c+\alpha) \|\theta^k-\hat{\theta
}\|_2^2+b (G(\theta^{k})-G(\hat{\theta
})).
\end{split}
\end{equation}

\textbf{1. Bound the first term on the right hand side fo $\mathbb{E} T_{k+1}$.}

Following similar steps in the convex case, we obtain

$$\mathbb{E}[\frac{1}{n}\sum_{i=1}^{n}F_i (\phi_i^{k+1}) ]=\frac{1}{n} F(\theta^k)+(1-\frac{1}{n})\frac{1}{n} \sum_{i=1}^{n} F_i(\phi_i^k) $$

$$\mathbb{E} [-\frac{1}{n}\sum_{i=1}^{n} \langle \nabla F_i(\hat{\theta}),\phi_i^{k+1}-\hat{\theta}\rangle] =-\frac{1}{n}\langle \nabla F(\hat{\theta}),\theta^k-\theta^*\rangle-(1-\frac{1}{n}) \frac{1}{n} \sum_{i=1}^{n} \langle \nabla F_i(\hat{\theta}),\phi^k_i-\hat{\theta} \rangle.$$

$$  \frac{\mu}{2n} \mathbb{E}\sum_{i=1}^{n} \|\phi_i^{k+1}-\hat{\theta}\|_2^2=\frac{\mu}{2} [ \frac{1}{n} \|\theta^k-\hat{\theta}\|_2^2+ (1-\frac{1}{n})\sum_{i=1}^{n} \frac{1}{n} \|\phi_i^k-\hat{\theta}\|_2^2   ].     $$

\textbf{ 2. Bound $(c+\alpha) \mathbb{E} \|\theta^{k+1}-\hat{\theta}\|_2^2 $}

In the following, we provide a upper bound on $c\mathbb{E} \|\theta^{k+1}-\hat{\theta}\|_2^2 $.  Notice  Equation \eqref{Eq.0.theorem} and the proof of Lemma \ref{lemma.variance} does not use convexity (The proof of Lemma 7 just use the fact $\mathbb{E} \|X-\mathbb{E}X\|_2^2= \mathbb{E} \|X\|_2^2-\|\mathbb{E}X\|_2^2 $, see \cite{defazio2014saga} for detail),  thus replace $f_i(\theta)$ by $F_i(\theta)$ we obtain the bound

\begin{equation}\label{eq.1.non_convex_proof}
\begin{split}
&\mathbb{E} \|\theta^{k+1}-\hat{\theta}\|_2^2 \\
\leq & \|\theta^k-\hat{\theta}\|_2^2-2\gamma \langle \nabla F(\theta^k)-\nabla F(\hat{\theta}),\theta^k-\hat{\theta} \rangle-\gamma^2\beta \|\nabla F(\theta^k)-\nabla F(\hat{\theta})\|_2^2\\
&+(1+\beta^{-1}) \gamma^2 \mathbb{E} \|\nabla F_j(\phi_j^k)-\nabla F_j (\hat{\theta})\|_2^2+(1+\beta) \gamma^2 \mathbb{E}\|\nabla F_j (\theta^k)-\nabla F_j(\hat{\theta})\|_2^2\\
\leq & (1-\gamma (\bar{\sigma}-\frac{\mu}{2}-\frac{\mu^2}{2L}))\|\theta^k-\hat{\theta}\|_2^2+2\gamma \epsilon^2(\Delta^*,r)-\gamma [F(\theta^k)-F(\hat{\theta})-\langle \nabla F(\hat{\theta}),\theta^k-\hat{\theta} \rangle ]\\
&+(1+\beta^{-1}) \gamma^2 \mathbb{E} \|\nabla F_j(\phi_j^k)-\nabla F_j (\hat{\theta})\|_2^2+\left((1+\beta) \gamma^2-\frac{\gamma}{4L}\right)  \mathbb{E}\|\nabla F_j (\theta^k)-\nabla F_j(\hat{\theta})\|_2^2,\\
\end{split}
\end{equation}

where the second inequality uses Lemma \ref{lemma.non-convex-RSC_smooth}.

%
%

Then we bound the term $\alpha \mathbb{E}\|\theta^{k+1}-\hat{\theta}\|_2^2 $. Using the Equation (30)  in \cite{qu2016linear}, we obtain

\begin{equation}
\begin{split}
\alpha \mathbb{E} \|\theta^{k+1}-\hat{\theta}\|_2^2 \leq \alpha  (1+\gamma \mu)\|\theta^k-\hat{\theta}\|_2^2-2\alpha \gamma \mathbb{E} [ G(\theta^{k+1})-G(\hat{\theta}) ]+2\alpha \gamma^2 \mathbb{E} \|\Delta\|_2^2.
\end{split}
\end{equation}

Same with convex case, we apply Lemma \ref{lemma.variance} on $E \|\Delta\|_2^2$ and obtain
\begin{equation}\label{eq.2.non_convex_proof}
\begin{split}
&\mathbb{E} \alpha\|\theta^{k+1}-\hat{\theta}\|_2^2\leq \alpha (1+\gamma\mu)\|\theta^k-\hat{\theta}\|_2^2-2\alpha\gamma \mathbb{E} [G(\theta^{k+1})-G(\hat{\theta})]\\
&+2\alpha(1+\beta^{-1})\gamma^2 \mathbb{E}\|\nabla F_j(\phi_j^k)-\nabla F_j(\hat{\theta})\|_2^2 +2\alpha(1+\beta)\gamma^2 \mathbb{E} \|F_j(\theta^k)-F_j (\hat{\theta})\|_2^2.
\end{split}
\end{equation}

Combine \eqref{eq.1.non_convex_proof} and \eqref{eq.2.non_convex_proof} together, we obtain 

\begin{equation}\label{eq.3.non_convex_proof}
\begin{split}
(\alpha+c) E\|\theta^{k+1}-\hat{\theta}\|_2^2\leq & \left(c-c \gamma(\bar{\sigma}-\mu/2-\frac{\mu^2}{2L})+\alpha(1+\gamma\mu)\right)\|\theta^k-\hat{\theta}\|_2^2+2c\gamma \epsilon^2(\Delta^*,r)\\
+&( (c+2\alpha) (1+\beta) \gamma^2-\frac{c\gamma}{4L})  \mathbb{E}\|\nabla F_j (\theta^k)-\nabla F_j(\hat{\theta})\|_2^2\\ + & (c+2\alpha)(1+\beta^{-1}) \gamma^2  \mathbb{E}\|\nabla F_j(\phi_j^k)-\nabla F_j(\hat{\theta})\|_2^2 -2\alpha \gamma \mathbb{E} [ G(\theta^{k+1})-G(\hat{\theta}) ]\\
-&c\gamma [F(\theta^k)-F(\hat{\theta})-\langle \nabla F(\hat{\theta}),\theta^k-\hat{\theta} \rangle ]
\end{split}
\end{equation}

We then bound $\mathbb{E} \|\nabla F_j(\phi_j^k)-\nabla F_j (\hat{\theta})\|_2^2 .$

\begin{equation}
\begin{split}
& \mathbb{E} \|\nabla F_j(\phi_j^k)-\nabla F_j (\hat{\theta})\|_2^2\\
=& \mathbb{E} \|\nabla f_j(\phi_j^k)-\nabla f_j (\hat{\theta})-u (\phi_j^k-\hat{\theta})\|_2^2\\
\leq & 2\mathbb{E} \|\nabla f_j(\phi_j^k)-\nabla f_j (\hat{\theta})\|_2^2+ 2\mu^2 \mathbb{E} \|\phi_j^k-\hat{\theta}\|_2^2\\
\leq & \frac{4L}{n} \sum_{i=1}^{n} [ f_i (\phi_i^k)-f_i(\hat{\theta})-\langle \nabla f_i(\phi_i^k) ,\phi_i^k-\hat{\theta} \rangle ]+ \frac{2\mu^2}{n }\sum_{i=1}^{n}\|\phi_i^k-\hat{\theta}\|_2^2\\
= &       \frac{4L}{n} \sum_{i=1}^{n} [ F_i (\phi_i^k)-F_i(\hat{\theta})-\langle\nabla F_i(\phi_i^k) ,\phi_i^k-\hat{\theta} \rangle +\frac{\mu}{2} \|\phi_i^k-\hat{\theta}\|_2^2 ]+ \frac{2\mu^2}{n }\sum_{i=1}^{n}\|\phi_i^k-\hat{\theta}\|_2^2\\
\leq & \frac{4L}{n} \sum_{i=1}^{n} [ F_i (\phi_i^k)-F_i(\hat{\theta})-\langle \nabla F_i(\phi_i^k) ,\phi_i^k-\hat{\theta} \rangle]+\frac{2\mu (L+\mu)}{n} \sum_{i=1}^{n}\|\phi_i^k-\hat{\theta}\|_2^2.
\end{split}
\end{equation}
where the first inequality holds from the fact $(a+b)^2\leq 2a^2+2b^2$, the second one uses the convexity and smoothness of $f_i(\theta).$

Substitute above bound in Equation \eqref{eq.3.non_convex_proof}, we obtain
\begin{equation}
\begin{split}
&(\alpha+c) E\|\theta^{k+1}-\hat{\theta}\|_2^2\leq \left(c-c \gamma(\bar{\sigma}-\mu/2-\frac{\mu^2}{2L})+\alpha(1+\gamma\mu)\right)\|\theta^k-\hat{\theta}\|_2^2+2c\gamma \epsilon^2(\Delta^*,r)\\
+&( (c+2\alpha) (1+\beta) \gamma^2-\frac{c\gamma}{4L})  \mathbb{E}\|\nabla F_j (\theta^k)-\nabla F_j(\hat{\theta})\|_2^2\\ + & (c+2\alpha)(1+\beta^{-1}) \gamma^2  
\left(\frac{4L}{n} \sum_{i=1}^{n} [ F_i (\phi_i^k)-F_i(\hat{\theta})-\langle \nabla F_i(\phi_i^k) ,\phi_i^k-\hat{\theta} \rangle]+\frac{2\mu(L+\mu)}{n} \sum_{i=1}^{n}\|\phi_i^k-\hat{\theta}\|_2^2 \right)\\
&-2\alpha \gamma \mathbb{E} [ G(\theta^{k+1})-G(\hat{\theta}) ]-c\gamma [F(\theta^k)-F(\hat{\theta})-\langle \nabla F(\hat{\theta}),\theta^k-\hat{\theta} \rangle ].
\end{split}
\end{equation}
\textbf{3 . Relate $\mathbb{E} T_{k+1}$ to $T_{k}$}

Combine all above together, we obtain 

\begin{equation}
\begin{split}
&\mathbb{E} T_{k+1}-T_k\leq -\frac{1}{\kappa} T_k+ (\frac{c+\alpha}{\kappa}-c\gamma (\bar{\sigma} -\frac{\mu}{2}-\frac{\mu^2}{2L})+\alpha \gamma\mu  ) \|\theta^k-\hat{\theta}\|_2^2\\
+ &\left( \frac{1}{\kappa}+ 4 (c+2\alpha)(1+\beta^{-1}) \gamma^2   L-\frac{1}{n}  \right)[\frac{1}{n} \sum_{i=1}^{n} F_i(\phi_i ^k)-F(\hat{\theta})-\frac{1}{n}\sum_{i=1}^{n} \langle \nabla F_i(\hat{\theta}),\phi_i^k-\hat{\theta} \rangle]\\
+ & \left( \frac{1}{n}-c\gamma \right)[F(\theta^k)-F(\hat{\theta})-\langle \nabla F(\hat{\theta}),\theta^k-\hat{\theta} \rangle ]+ \left( (c+2\alpha) (1+\beta) \gamma^2-\frac{c\gamma}{4L}\right)  \mathbb{E}\|\nabla F_j (\theta^k)-\nabla F_j(\hat{\theta})\|_2^2\\
+&2c\gamma \epsilon^2 (\Delta^*,r) -2\alpha \gamma \mathbb{E} [G(\theta^{k+1})-G(\hat{\theta})]+b \mathbb{E}(G(\theta^{k+1})-G(\hat{\theta})) -(1-\frac{1}{\kappa})b [G(\theta^k)-G(\hat{\theta})]\\
+&( \frac{\mu}{2\kappa}- \frac{\mu}{2n}+2(c+2\alpha) (1+\beta^{-1})\gamma^2 \mu(\mu+L) ) \frac{1}{n}\sum_{i=1}^{n} \|\phi_i^k-\hat{\theta}\|_2^2. 
\end{split}
\end{equation}

We choose $\beta=2$, $\gamma=\frac{1}{24L}$, $c=2\alpha$, $c=\frac{24L}{n}$, $b=2\alpha \gamma$, $\frac{1}{\kappa}=\frac{1}{24}\min \{ \frac
{2\bar{\sigma}}{5L}, \frac{1}{n} \}$ and recall our assumption that $\mu\leq \frac{\bar{\sigma}}{3}  $ and $\mu\leq \frac{L}{3}$, we obtain

$$ \mathbb{E} T_{k+1}-T_k\leq -\frac{1}{\kappa} T_k+2c\gamma\epsilon^2 (\Delta^*,r)-(1-\frac{1}{\kappa}) c \gamma [G(\theta^{k})-G(\hat{\theta})].$$

The following argument is identical to the convex $G(\theta)$, and the only difference is to replace $\lambda$ by $\lambda L_g $. Thus we have 
$ \mathbb{E} T_{k+1}\leq (1-\frac{1}{\kappa}) T_k$ holds until $ G(\theta^k)-G(\hat{\theta})\leq 6\tau_\sigma (2\delta_{stat})^2. $ Thus we establish the result.

\end{proof}

\subsection{Proof of corollaries}
We now prove the corollaries instantiating our main theorems to different statistical estimators. 
\begin{proof}[Proof of Corollary on Lasso, i.e., corollary \ref{cor.lasso}  ]
	We begin the proof, by presenting the below lemma of the RSC,  proved in \cite{raskutti2010restricted}, and we then use it in the case of Lasso.
	\begin{lemma}
		if each data point $x_i$ is i.i.d.\ random sampled from the distribution $N(0,\Sigma) $, then there are some universal constants $c_0$ and $c_1$ such that
		$$ \frac{\|X\Delta\|_2^2}{n}\geq \frac{1}{2}\|\Sigma^{1/2}\Delta\|_2^2-c_1\nu(\Sigma)\frac{\log p}{n} \|\Delta\|_1^2, \quad \mbox{for all } \Delta \in \mathbb{R}^p ,$$
		with probability at least $1-\exp(-c_0n)$. Here, $X$ is the data matrix where each row is data point $x_i $.
	\end{lemma}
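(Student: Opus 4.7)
The statement to prove is the one-sided restricted eigenvalue bound for Gaussian designs,
$$ \tfrac{1}{n}\|X\Delta\|_2^2 \;\geq\; \tfrac{1}{2}\|\Sigma^{1/2}\Delta\|_2^2 - c_1 \nu(\Sigma)\tfrac{\log p}{n}\|\Delta\|_1^2 \qquad \forall \Delta\in\mathbb{R}^p, $$
with exponentially high probability. The plan is to combine a Gaussian comparison inequality with a peeling argument over $\ell_1/\ell_2$ ratios. By homogeneity, I would first reduce to the sphere $\{\Delta:\|\Sigma^{1/2}\Delta\|_2=1\}$, and introduce the rescaled variable $u=\Sigma^{1/2}\Delta$ so that $X\Delta = \widetilde X u$ where $\widetilde X$ has i.i.d.\ standard Gaussian rows. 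The target then becomes a uniform lower bound on $\|\widetilde X u\|_2/\sqrt n$ over $\|u\|_2=1$ with a slack proportional to $\|\Sigma^{-1/2}u\|_1$ (which is bounded by $\sqrt{\nu(\Sigma^{-1})}\|u\|_1$-type terms after rewriting through $\|\Delta\|_1$).

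Next, for each fixed radius $r>0$, I would control the random quantity
$$ M(r) \;:=\; \inf_{\|\Sigma^{1/2}\Delta\|_2=1,\ \|\Delta\|_1\leq r}\tfrac{\|X\Delta\|_2}{\sqrt n}$$
via Gordon's Gaussian minimax (or Sudakov–Fernique) inequality. The key estimate is that
$$ \mathbb{E}\,M(r) \;\geq\; 1 - c\,r\sqrt{\tfrac{\nu(\Sigma)\log p}{n}},$$
which follows from bounding the expected supremum of a Gaussian process on the $\ell_1$-ball by a standard dual-norm argument ($\mathbb{E}\max_j |g_j|\lesssim \sqrt{\log p}$ after rescaling the columns, using $\|X_j\|_2/\sqrt n\lesssim \sqrt{\nu(\Sigma)}$). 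A Borell–Sudakov–Tsirelson concentration inequality for Lipschitz functions of Gaussians then turns the expectation bound into a high-probability bound with deviation $\exp(-c_0 n)$, uniformly valid because $M(r)$ is a $1/\sqrt n$-Lipschitz function of the Gaussian matrix.

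The remaining obstacle, and the main technical step, is to promote this fixed-radius bound to one that holds uniformly in $\Delta$ without pre-specifying $r$. For this I would use a peeling (slicing) argument: partition the range of $\|\Delta\|_1$ into geometric shells $r\in[2^{\ell-1},2^{\ell}]$, apply the Gordon/concentration bound on each shell, and take a union bound over $\ell=1,\dots,O(\log n)$ shells. The shell-wise failure probabilities are $\exp(-c_0 n)$ and absorb the logarithmic factor, yielding the desired high-probability guarantee. Rearranging and squaring the resulting inequality $\|X\Delta\|_2/\sqrt n\geq \|\Sigma^{1/2}\Delta\|_2 - c\sqrt{\nu(\Sigma)\log p/n}\,\|\Delta\|_1$ (using $(a-b)^2\geq a^2/2 - b^2$ after an AM-GM step) produces exactly the form in the statement, with $c_1$ absorbing the universal constants from Gordon's inequality and the peeling union bound.
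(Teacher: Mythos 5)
You should first note that the paper does not actually prove this lemma: it is imported verbatim from Raskutti, Wainwright and Yu (2010), cited as \texttt{raskutti2010restricted}, and the paper's ``proof'' of Corollary~\ref{cor.lasso} simply invokes it. Your sketch is, in essence, a reconstruction of the proof in that reference, and it is the right one: reduce to the sphere by homogeneity, control $\inf\{\|X\Delta\|_2/\sqrt{n}:\|\Sigma^{1/2}\Delta\|_2=1,\ \|\Delta\|_1\le r\}$ in expectation via Gordon's Gaussian minimax inequality, upgrade to high probability via Borell--TIS concentration for the $1/\sqrt{n}$-Lipschitz infimum, make the bound uniform in $r$ by peeling over geometric shells, and finally square using $(a-b)^2\ge a^2/2-b^2$ (which also covers the case where $a-b<0$, since then the right-hand side is negative and the claim is trivial). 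Two small points of care: (i) your intermediate change of variables $u=\Sigma^{1/2}\Delta$ followed by the mention of $\|\Sigma^{-1/2}u\|_1$ and a $\sqrt{\nu(\Sigma^{-1})}$-type factor is a misstep --- the slack must be kept in terms of $\|\Delta\|_1$ itself, and the Gordon step bounds $\mathbb{E}\max_j |(\Sigma^{1/2}g)_j|\lesssim \sqrt{\nu(\Sigma)\log p}$ because $(\Sigma^{1/2}g)_j\sim N(0,\Sigma_{jj})$; you effectively self-correct this in the following sentence, so it does not break the argument, but the final constant must come out as $\nu(\Sigma)$, not anything involving $\Sigma^{-1}$; (ii) the peeling is over a countable (not $O(\log n)$) family of shells in general, with the exponential shell-wise tails summing geometrically --- this is the content of the peeling lemma in the cited reference and changes nothing substantive. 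Net assessment: your proposal supplies a correct and standard proof of a statement the paper only cites, so there is no conflict with the paper's (nonexistent) internal argument.
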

	
	Since $\theta^*$ is support on a subset $S$ with cardinality r,  we choose $$ \bar{M}(S):=\{ \theta\in \mathbb{R}^p | \theta_j=0 \text{~for all ~} j\notin S   \}. $$ It is straightforward to choose $M(S)=\bar{M}(S)$ and notice that $\theta^*\in M(S)$.
	In Lasso formulation, $f(\theta)=\frac{1}{2n}\|y-X\theta\|_2^2$, and hence it is easy to verify that
	$$ f(\theta+\Delta)-f(\theta)-\langle \nabla f(\theta),\Delta \rangle\geq \frac{1}{2n} \|X\Delta\|_2^2\geq \frac{1}{4}\|\Sigma^{1/2}\Delta\|_2^2-\frac{c_1}{2}\nu(\Sigma)\frac{\log p}{n} \|\Delta\|_1^2  .$$
	Also, $\psi(\cdot)$ is $\|\cdot\|_1$ in Lasso, and hence $H(\bar{M})=\sup_{\theta\in \bar{M}\backslash \{0\}} \frac{\|\theta\|_1}{\|\theta\|_2}=\sqrt{r}$. Thus we have 
	$$\bar{\sigma}=\frac{1}{2}\sigma_{\min} (\Sigma)-64c_1 \nu(\Sigma)\frac{ r\log p}{n} .$$

	On the other hand,	the tolerance is 	
	\begin{equation}
	\begin{split}
	\delta &=24\tau_\sigma(8H(\bar{M})\|\Delta^*\|_2+8\psi(\theta^*_{M^\perp}))^2\\
	&=c_2\nu(\Sigma)\frac{r\log p}{n}\|\Delta^*\|_2^2,
	\end{split}
	\end{equation}
	where we use the fact that $\theta^*\in M(S)$, which implies  $\psi(\theta^*_{M^{\perp}})=0$.\\
	
	Recall we require $\lambda$ to satisfy $\lambda\geq 2\psi^*(\nabla f(\theta^*))$. In Lasso we have $\psi^*(\cdot)=\|\cdot\|_\infty$. Using the fact that $y_i=x_i^T\theta^*+\xi_i$, we have $ \lambda\geq \frac{2}{n}\|X^T\xi\|_\infty $. Then we apply the tail bound on the Gaussian variable and use union bound to obtain that
	$$\frac{2}{n}\|X^T\xi\|_\infty\leq 6\varsigma\sqrt{\frac{\log p}{n}} $$ holds
	with probability at least $1-\exp(-3\log p)$.
\end{proof}

\begin{proof}[Proof of Corollary on Group Lasso, i.e., corollary \ref{cor.group_lasso}] 
	We use the following fact on the RSC condition of Group Lasso \cite{negahban2009unified}\cite{negahban2012supplement}: if each data point $x_i$ is i.i.d.\ randomly sampled from the distribution $N(0,\Sigma) $, then  there exists strictly positive constant  $(\sigma_1(\Sigma),\sigma_2(\Sigma))$ which only depends on $\Sigma$ such that , 
	$$ \frac{ \|X\Delta\|_2^2}{2n} \geq \sigma_1(\Sigma) \|\Delta\|^2_2-\sigma_2(\Sigma) (\sqrt{\frac{m}{n}}+\sqrt{\frac{3 \log N_{\mathcal{G}}}{n}} )^2 \|\Delta\|^2_{\mathcal{G},2}, \quad \mbox{for all } \Delta \in \mathbb{R}^p ,$$
	with probability at least $1-c_3\exp (-c_4n).$ 
	
	Remind we define the subspace
	$$  \bar{M}(S_{\mathcal{G}})= M(S_{\mathcal{G}})=\{\theta|\theta_{G_i}=0 \text{~for all~} i\notin S_{\mathcal{G}}\} $$
	where $S_{\mathcal{G}}$ corresponds to non-zero group of $\theta^*$.
	
	The subspace compatibility can be computed by $$ H(\bar{M})=\sup_{\theta\in \bar{M}\backslash \{0\}} \frac{\|\theta\|_{\mathcal{G},2}}{\|\theta\|_2}=\sqrt{s_{\mathcal{G}}}.$$
	Thus, the modified RSC parameter 
	$$\bar{\sigma}=\sigma_1(\Sigma)-c\sigma_2(\Sigma)s_{\mathcal{G}} (\sqrt{\frac{m}{n}}+\sqrt{\frac{3 \log N_{\mathcal{G}}}{n}} )^2  .$$
	
	We then bound the value of $\lambda$. 
	As the regularizer in Group Lasso is $\ell_{1,2}$ grouped norm, its dual norm  is $(\infty,2)$ grouped norm.
	So it suffices to have any $\lambda$ such that $$\lambda\geq 2 \max_{i=1,...,N_{\mathcal{G}}} \|\frac{1}{n}(X^T\xi)_{G_i}\|_2.$$
	
	Using Lemma 5 in \cite{negahban2009unified}, we know 
	$$\max_{i=1,...,N_{\mathcal{G}}} \|\frac{1}{n}(X^T\xi)_{G_i}\|_2\leq 2\varsigma (\sqrt{\frac{m}{n}}+\sqrt{\frac{\log N_{\mathcal{G}}}{n}}) $$
	with probability at least $1-2\exp (-2\log N_{\mathcal{G}})$.
	Thus it suffices to choose $\lambda= 4\varsigma (\sqrt{\frac{m}{n}}+\sqrt{\frac{\log N_{\mathcal{G}}}{n}})$.
	
	%

	The tolerance is given by,
	\begin{equation}
	\begin{split}
	\delta&=24\tau_\sigma(8H(\bar{M})\|\Delta^*\|_2+8\psi(\theta^*_{M^\perp}))^2\\
	&=c_2\sigma_2(\Sigma)s_{\mathcal{G}} (\sqrt{\frac{m}{n}}+\sqrt{\frac{3 \log N_{\mathcal{G}}}{n}} )^2   \|\Delta^*\|_2^2,
	\end{split}
	\end{equation}
	where we use the fact $\psi(\theta^*_{M^\perp})=0$.
\end{proof}

\begin{proof}[Proof of Corollary on  SCAD, i.e., corollary \ref{cor.SCAD}]
	
	The proof is very similar to that of Lasso. In the proof of results for Lasso, we established
	
	$$ \|\nabla f(\theta^*)\|_\infty=\frac{1}{n}\|X^T\xi \|_\infty\leq 3 \varsigma \sqrt{\frac{\log p}{n}} $$ 
	and the RSC condition 
	$$ \frac{\|X\Delta\|_2^2}{n}\geq \frac{1}{2}\|\Sigma^{1/2}\Delta\|_2^2-c_1\nu(\Sigma)\frac{\log p}{n} \|\Delta\|_1^2.$$
	Recall that $\mu=\frac{1}{\zeta-1}$ and $L_g=1$, we establish the corollary. 
\end{proof}

\begin{proof}[Proof of corollary on Corrected Lasso, i.e., corollary \ref{cor.corrected_lasso}]
	First notice
	$$ \|\nabla f(\theta^*)\|_\infty=\|\hat{\Gamma}\theta^*-\hat{\gamma}\|_{\infty}=\|\hat{\gamma}-\Sigma\theta^* +(\Sigma-\hat{\Gamma})\theta^*\|_{\infty} \leq \|\hat{\gamma}-\Sigma\theta^*\|_\infty+\|(\Sigma-\hat{\Gamma})\theta^*\|_{\infty}.$$
	As shown in literature (Lemma 2 in \cite{loh2011high}), both terms on the right hand side can be bounded by
	$c_1 \varphi \sqrt{\frac{\log p}{n}}$, where $\varphi\triangleq(\sqrt{\sigma_{\max} (\Sigma)}+\sqrt{\gamma_w}) (\varsigma+\sqrt{\gamma_w} \|\theta^*\|_2)$, with probability at least $1-c_1\exp(-c_2\log p )$. 
	
	To obtain the RSC condition,  we apply Lemma 1 in \cite{loh2011high}, to get
	$$ \frac{1}{n}\Delta^T\hat{\Gamma}\Delta \geq \frac{\sigma_{\min} (\Sigma)}{2} \|\Delta\|_2^2-c_3 \sigma_{\min}(\Sigma)\max \left( (\frac{\sigma_{\max} (\Sigma)+\gamma_w}{\sigma_{\min}(\Sigma)})^2,1 \right) \frac{\log p}{n} \|\Delta\|_1^2,$$
	with probability at least $1-c_4 \exp \left(-c_5 n\min \big( \frac{\sigma^2_{\min} (\Sigma)}{( \sigma_{\max}(\Sigma)+\gamma_w)^2},1 \big)    \right)$.
	
	Combine these together, we establish the corollary.
\end{proof}
\bibliography{SAGA}
\bibliographystyle{plainnat}

\end{document}